\RequirePackage{fix-cm}
\documentclass{article}
\usepackage{paper_style,times}
\usepackage[T1]{fontenc}
\usepackage[utf8]{inputenc}
\usepackage{amsmath,amssymb,amsfonts,amsthm}
\usepackage{graphicx,booktabs,multirow,array}
\usepackage{longtable,tabularx,float,placeins}
\usepackage[table]{xcolor}
\usepackage{wrapfig,capt-of}
\usepackage{hyperref}
\usepackage{url}
\hypersetup{hidelinks,pdftitle={Teaching Neural Decoders with Language},pdfauthor={Xuan Wen, Yonghao Song, Yuhong Zhang, Guoliang Li, Tzyy-Ping Jung, Xiaorong Gao}}
\title{Teaching Neural Decoders with Language}
\iclrfinalcopy
\author{Xuan Wen$^1$\quad Yonghao Song$^{1,*}$\quad Yuhong Zhang$^2$\\
\bfseries Guoliang Li$^1$\quad Tzyy-Ping Jung$^2$\quad Xiaorong Gao$^{1,*}$\\
\normalfont $^1$Tsinghua University\quad $^2$University of California, San Diego\\
\normalfont $^*$Corresponding authors\\
\normalfont \texttt{eeyhsong@gmail.com}; \texttt{gxr-dea@tsinghua.edu.cn}}
\newcolumntype{P}[1]{>{\raggedright\arraybackslash}p{#1}}
\newcommand{\structhead}{\toprule \textbf{Module} & \textbf{Sub-module} & \textbf{Parameter} & \textbf{Value}\\\midrule}
\newcommand{\ptext}[1]{{\ttfamily\footnotesize #1}}
\definecolor{bestcell}{RGB}{225,234,246}
\definecolor{secondcell}{RGB}{241,241,241}
\newcommand{\tablehead}[1]{{\fontsize{7bp}{9bp}\selectfont #1}}

\makeatletter
\newcommand{\tablepanellabel}[2]{\begingroup\protected@edef\@currentlabel{\thetable(#2)}\label{#1}\endgroup}
\makeatother

\begin{document}
\maketitle
\lhead{}
\renewcommand{\headrulewidth}{0pt}
\begin{abstract}
Brain--computer interfaces decode neural activity, yet class-label supervision (CLS) leaves relationships among classes implicit. We propose Language as Supervision (LaS), which aligns neural features with natural-language class descriptions through learned projections and predicts classes using candidate text prototypes. We evaluate seven encoder architectures with independently trained models for visual, motor, and speech decoding from scalp and intracranial recordings. LaS improves speech-decoding accuracy by up to 17.15~pp over matched CLS baselines. Its representations support candidate-set reuse and label-supervised subject and session adaptation, while LaS enables conditional unseen-frequency decoding in steady-state visual evoked potential (SSVEP) responses. Correct class--description correspondence supports this generalization, and dual-sided attribution visualization shows neural and textual contributions to the same class-matching score. These findings suggest that text descriptions can serve as general-purpose references for learning and interpretation in neural decoding.

\end{abstract}
\begin{figure}[!b]
\centering
\includegraphics[width=\linewidth]{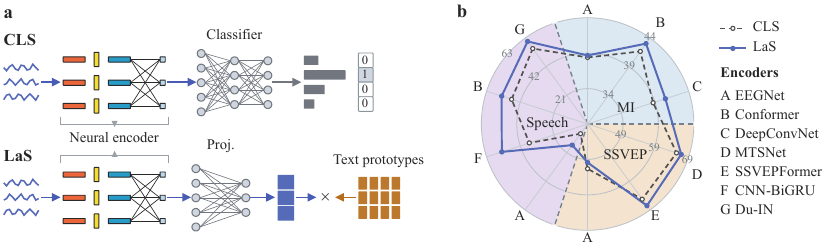}
\caption[Figure 1]{\input{captions/figure1.tex}}
\label{fig:main-1}
\end{figure}
\section{Introduction}
\label{sec:introduction}

Brain--computer interfaces (BCIs) use representations of scalp electroencephalography (EEG) or intracranial recordings for communication and control, such as finger-level robotic hand control \citep{ding2025eeg} and speech neuroprostheses \citep{willett2023high}.

Classical decoding combines engineered features with classifiers or template matching \citep{ramoser2000optimal,tangermann2012review,nakanishi2018enhancing}. Neural decoders learn signal representations jointly with task-specific classifiers \citep{lawhern2018eegnet,schirrmeister2017deep}. Under \emph{class-label supervision} (CLS), one-hot targets specify class identities without explicitly encoding class attributes or semantic relationships.

Natural-language class descriptions provide semantic information that pretrained text models can encode. EmotionCLIP uses templated descriptions for emotion recognition \citep{yan2026cross}, while EEG-PRIME combines class prototypes with task instructions and multilevel conditioning \citep{zhang2026eegprime}. We study class descriptions as a supervision and prediction interface through neural--text alignment, using paired CLS comparisons that retain established encoder architectures. We ask how this alignment affects decoding performance relative to CLS, whether it supports unseen classes under compatible neural--text structure, and which neural components and words support a specified match.

We introduce \emph{Language as Supervision} (LaS; Figure~\ref{fig:main-1}) with three contributions.
(1) LaS uses class descriptions as references for learning and candidate prediction, aligning neural features with text embeddings through learnable projections.
(2) Paired evaluations across seven encoders show decoding gains in motor imagery (MI), steady-state visual evoked potential (SSVEP), and speech decoding from scalp EEG and intracranial stereo-EEG (sEEG). LaS supports candidate configuration through class descriptions, subject/session adaptation, and conditional unseen-frequency prediction.
(3) Theory and controlled language experiments characterize alignment behavior. We propose a dual-sided attribution visualization scheme that shows neural signal components and prompt words contributing to the same class-matching score.

\section{Related Work}
\label{sec:related-work}

\subsection{Semantic Representations for Classification}
\label{sec:semantic-labels}

Semantic class representations link supervision to candidate prediction. Semantic output codes use class attributes as learning targets, including unseen-word fMRI decoding \citep{palatucci2009zero}. DeViSE links visual features to text-derived class-name representations for unseen-class prediction \citep{frome2013devise}; \citet{ba2015predicting} generate classifier weights from descriptions for classes without training images. CLIP combines large-scale contrastive image--text learning with prediction by candidate-description similarity \citep{radford2021learning}, demonstrating competitive zero-shot recognition, frozen-feature transfer through linear probes, and improved robustness over ImageNet supervision on the natural shifts studied. Its dataset transfer and instance-level pairing differ from our class holdouts and repeated descriptions. BLIP-2 connects a frozen image encoder and frozen large language model through a Q-Former \citep{li2023blip2}; CLAP extends contrastive alignment and candidate matching to audio for zero-shot classification \citep{elizalde2023clap}.

\subsection{Language-Aligned Neural Decoding}
\label{sec:language-neural-decoding}

Neural--text alignment methods differ in their sources of textual supervision, the units paired during training, and how text is used at prediction. BrainCLIP maps fMRI representations into a visual--language space using visual and text supervision \citep{ma2025brainclip}. Clinical EEG studies use paired reports: \citet{gijsen2025eeg} combine EEG crops, report segments, and multiple-instance learning for label-efficient phenotyping. Their shuffled-report control changes patient--report pairings, whereas ours reassigns fixed class descriptions. EEG-CLIP learns from EEG--report alignment, uses class text prototypes for zero-shot prediction, and visualizes gradients of EEG--text similarity with respect to frequency-domain inputs \citep{ndir2025eegclip}. Class descriptions also support direct EEG recognition: EmotionCLIP inserts emotion names into 16 templates and classifies by EEG--text matching \citep{yan2026cross}. The EEG-PRIME preprint combines class-name prototypes, task instructions, and multilevel conditioning across EEG tasks, and compares text encoders \citep{zhang2026eegprime}; some downstream fine-tuning settings use task-specific classification heads. LaS builds on prototype classification and neural--text alignment, examining class descriptions as supervision and candidate references across independently trained decoders.

\section{Methods}
\label{sec:methods}

\subsection{Problem Setting and Class-Label Supervision}
\label{sec:problem-setting}

For training classes $\mathcal S=\{1,\ldots,C\}$, $C\ge2$, trial $X_i$ labeled $y_i\in\mathcal S$ yields $N_f(X_i)=\operatorname{neural\_encoder}_\theta(X_i)\in\mathbb R^{1\times d_f}$. Class-label supervision (CLS) jointly trains the encoder and its original, possibly nonlinear head by cross-entropy: $Z_{\mathrm{CLS}}=g_\psi(N_f(X_i))$, $p_i^{\mathrm{CLS}}=\operatorname{softmax}(Z_{\mathrm{CLS}})$, $\ell_{\mathrm{CLS}}=-\log p_{i,y_i}^{\mathrm{CLS}}$. LaS pairs labeled trials with class descriptions.

\subsection{Language as Supervision}
\label{sec:las}
  \begin{figure}[!tb]
    \centering
    \includegraphics[width=\linewidth]{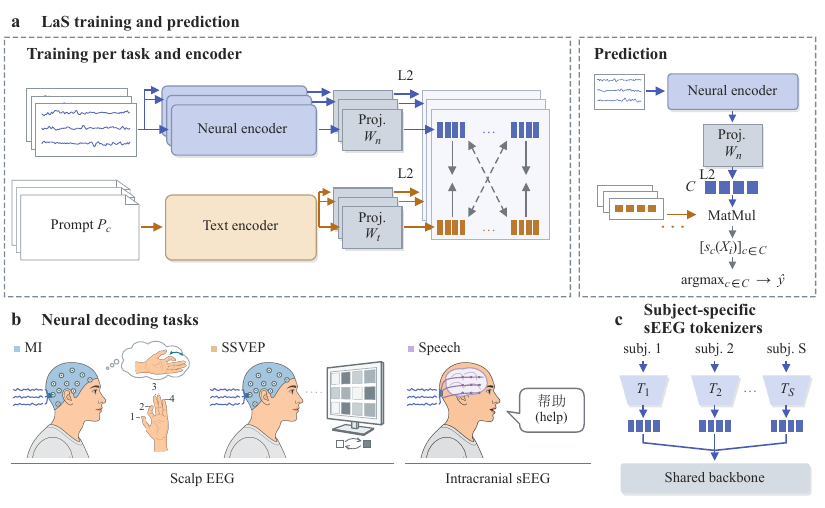}
    \caption[Figure 2]{\input{captions/figure2.tex}}
    \label{fig:main-2}
  \end{figure}

Fixed prompts $P_c$ (e.g., ``A neural SSVEP signal at 10.6 Hz.'') yield embeddings $T_{f,c}=\operatorname{text\_encoder}(P_c)\in\mathbb R^{1\times d_t}$ through a frozen, deterministic text encoder and its prescribed readout. Bias-free projections $W_n\in\mathbb R^{d_f\times d_{\mathrm{proj}}}$ and $W_t\in\mathbb R^{d_t\times d_{\mathrm{proj}}}$ give
\begin{equation}
\begin{aligned}
z_{\mathrm n,\mathrm{proj}}^{(i)}&=\nu(N_f(X_i)W_n),\qquad
z_{\mathrm t,\mathrm{proj}}^{(c)}=\nu(T_{f,c}W_t),\\
\nu(v)&=\frac{v}{\|v\|_2},\qquad
s_c(X_i)=z_{\mathrm n,\mathrm{proj}}^{(i)}\bigl(z_{\mathrm t,\mathrm{proj}}^{(c)}\bigr)^\top.
\end{aligned}
\label{eq:matching-score}
\end{equation}
$z_{\mathrm t,\mathrm{proj}}^{(c)}$ is the text \emph{prototype} and $s_c$ the unscaled cosine score. With the text encoder frozen, training jointly updates $W_t$ and the neural branch, so text prototypes evolve. LaS can thus be viewed as tying classifier-weight geometry to a semantic subspace anchored by text embeddings through the learned projection $W_t$. A learned inverse temperature $\tau^{-1}=e^t>0$ gives $a_{ic}=s_c(X_i)/\tau$. Stack normalized vectors for $B$ pairs $(X_i,P_{y_i})$ into $Z_{\mathrm n,\mathrm{proj}},Z_{\mathrm t,\mathrm{proj}}$, with $L=Z_{\mathrm n,\mathrm{proj}}Z_{\mathrm t,\mathrm{proj}}^\top/\tau\in\mathbb R^{B\times B}$ and $L_{ij}=a_{i,y_j}$. $B$ counts pairs in the loss matrix, irrespective of gradient accumulation. We train with bidirectional InfoNCE~\citep{oord2018representation,radford2021learning}:
\begin{equation}
\begin{aligned}
\mathcal L_{\mathrm{N2T}}&=-\frac1B\sum_{i=1}^B\log\frac{e^{L_{ii}}}{\sum_{j=1}^B e^{L_{ij}}},\qquad
\mathcal L_{\mathrm{T2N}}=-\frac1B\sum_{i=1}^B\log\frac{e^{L_{ii}}}{\sum_{j=1}^B e^{L_{ji}}},\\
\mathcal L_{\mathrm{SA}}&=\tfrac12(\mathcal L_{\mathrm{N2T}}+\mathcal L_{\mathrm{T2N}}).
\end{aligned}
\label{eq:bidirectional-infonce}
\end{equation}
N2T queries texts with neural signals; T2N queries trials with text. Both retain all $B$ entries and diagonal targets, introducing same-class off-diagonal competitors.

Each task--encoder model learns its own $\theta,W_n,W_t,t$ and alignment space, sharing its backbone only within that model (Figure~\ref{fig:main-2}c). Text embeddings can be precomputed; caching final prototypes after training avoids repeated text encoding at inference. For candidate set $\mathcal C$, predict $\widehat y_i=\operatorname*{arg\,max}_{c\in\mathcal C}s_c(X_i)$.

\subsection{Theoretical Properties}
\label{sec:analysis}

\paragraph{Directional asymmetry.}
For fixed prompts and deterministic text forwards, let $\mathcal G_c=\{i:y_i=c\}$, $n_c=|\mathcal G_c|$, $\mathcal S_b=\{c:n_c>0\}$ and $\bar a_c=n_c^{-1}\sum_{i\in\mathcal G_c}a_{ic}$. With $m$ identical texts per class, the diagonal target has probability at most $1/m$, giving N2T a $\log m$ loss floor. The N2T identity assumes all $C$ classes occur $m$ times; T2N allows unequal counts:
\begin{equation}
\begin{aligned}
\ell_i^{\mathrm{N2T}}&=\log m-a_{i,y_i}+\log\sum_{c=1}^C e^{a_{ic}},\qquad B=mC,\\
\bar\ell_c^{\mathrm{T2N}}&=-\bar a_c+\log\sum_{j=1}^B e^{a_{jc}},\qquad
\mathcal L_{\mathrm{T2N}}=\frac1B\sum_{c\in\mathcal S_b}n_c\bar\ell_c^{\mathrm{T2N}}.
\end{aligned}
\label{eq:directional-relations}
\end{equation}
Balanced N2T is class cross-entropy plus a constant; unequal counts reweight present classes. For $n_c>1$, T2N provides implicit regularization toward within-class aggregation of matching responses along the corresponding text-prototype direction. With the mean within-class score and all other-class scores against that prototype fixed, equal within-class matching scores minimize its class loss (Appendix~\ref{app:theory-score}). Other feature directions may still differ.

\paragraph{Repeated-prompt loss floors.}
Diagonal and Class-aware SigLIP ablations~\citep{zhai2023sigmoid} use shared learned bias $b_{\mathrm{sig}}\in\mathbb R$, $\xi_{ic}=a_{ic}+b_{\mathrm{sig}}$, $\ell_\pm(\xi)=\log(1+e^{\mp\xi})$ and binary labels $M\in\{0,1\}^{B\times B}$:
\begin{equation}
\begin{aligned}
\mathcal L_{\mathrm{Sig}}(M)&=\frac1B\sum_{i,j=1}^B\left[M_{ij}\ell_+(\xi_{i,y_j})+(1-M_{ij})\ell_-(\xi_{i,y_j})\right],\\
M_{ij}^{\mathrm{diag}}&=\mathbf1[i=j],\qquad M_{ij}^{\mathrm{class}}=\mathbf1[y_i=y_j].
\end{aligned}
\label{eq:sigmoid-objective}
\end{equation}
Both sum $B^2$ terms and divide by $B$. Deterministic class-text copies imply these bounds; for $B=mC$, the last row lists balanced values for InfoNCE, Diagonal and Class-aware SigLIP, respectively:
\begin{equation}
\begin{aligned}
\mathcal L_{\mathrm{SA}}&\ge\frac1B\sum_{c\in\mathcal S_b}n_c\log n_c,\qquad \mathcal L_{\mathrm{Sig}}^{\mathrm{class}}\ge0,\\
\mathcal L_{\mathrm{Sig}}^{\mathrm{diag}}&\ge\frac1B\sum_{c\in\mathcal S_b}n_c\left[n_c\log n_c-(n_c-1)\log(n_c-1)\right],\\
n_c=m:\quad&\log m,\qquad m\log m-(m-1)\log(m-1),\qquad0.
\end{aligned}
\label{eq:loss-floors}
\end{equation}
Logs are natural, with $0\log0=0$; attainability depends on separation and realizable logits (Appendix~\ref{app:theory-losses}).

\section{Experimental Evaluation}
\label{sec:evaluation}
\begin{table}[!tb]
\centering
\caption[Overall decoding performance and cross-encoder comparisons]{\textbf{Decoding performance across tasks and neural encoders.} CLS--LaS comparisons; FBCSP-LDA is a separate MI reference. Performance: subject-mean accuracy (\%) $\pm$ sample SD; mean MI $\kappa$ in parentheses. $\Delta=\mathrm{LaS}-\mathrm{CLS}$ in percentage points. Blue/gray: highest/second-highest means within each MI or Speech metric across CLS and LaS. ${}^{1}$SSVEP: RW training $\to$ IW evaluation. Stars: Holm-corrected paired $t$-tests within encoder (Section~\ref{sec:evaluation-setup}; Appendix~\ref{app:b-statistics}).
}
\label{tab:overall-cross-encoder}
\begingroup
\newcommand{\tableperf}[3]{#1{\fontsize{6.5bp}{10bp}\selectfont$\pm$#2}\if\relax\detokenize{#3}\relax\else\ (\ensuremath{\kappa}=#3)\fi}
\fontsize{8bp}{10bp}\selectfont
\setlength{\tabcolsep}{2pt}
\renewcommand{\arraystretch}{1.0}
\begin{tabular*}{\linewidth}{@{\extracolsep{\fill}}llccccc r@{}}
\toprule
\multirow{2}{*}{\tablehead{Task}} & \multirow{2}{*}{\tablehead{Method/encoder}} & \multicolumn{2}{c}{\tablehead{Params (M)}} & \multirow{2}{*}{\tablehead{\shortstack{Duration\\(s)}}} & \multicolumn{2}{c}{\tablehead{Performance}} & \multirow{2}{*}{\tablehead{\shortstack{$\Delta$ Acc.\\(pp)}}} \\
\cmidrule(lr){3-4}\cmidrule(lr){6-7}
& & \tablehead{CLS} & \tablehead{LaS} & & \tablehead{CLS} & \tablehead{LaS} & \\
\midrule
\multirow{4}{*}{\textbf{MI}} & FBCSP-LDA & --- & --- & \multirow{4}{*}{5.0} & \multicolumn{2}{c}{\tableperf{32.47}{5.94}{0.100}} & --- \\
& EEGNet & 0.002 & 0.30 & & \tableperf{38.39}{4.76}{0.179} & \tableperf{38.68}{7.08}{0.182} & +0.28 \\
& DeepConvNet & 0.29 & 0.79 & & \tableperf{38.72}{6.05}{0.183} & \tableperf{40.54}{8.65}{0.207} & +1.82\textsuperscript{*} \\
& Conformer & 0.29 & 0.48 & & \cellcolor{secondcell}\tableperf{41.67}{8.77}{0.222} & \cellcolor{bestcell}\tableperf{43.02}{9.29}{0.240} & +1.36\textsuperscript{**} \\
\midrule
\multirow{9}{*}{\shortstack[l]{\textbf{SSVEP}\\RW$\to$IW\textsuperscript{1}}} & \multirow{3}{*}{EEGNet} & 0.04 & 0.32 & 0.6 & \tableperf{26.76}{11.55}{} & \tableperf{26.98}{10.91}{} & +0.21 \\
& & 0.05 & 0.37 & 0.8 & \tableperf{41.07}{16.34}{} & \tableperf{40.42}{15.82}{} & -0.65 \\
& & 0.05 & 0.39 & 1.0 & \tableperf{51.62}{18.32}{} & \tableperf{49.86}{16.43}{} & -1.76 \\
& \multirow{3}{*}{MTSNet} & 9.92 & 7.37 & 0.6 & \tableperf{34.32}{15.23}{} & \tableperf{36.56}{15.27}{} & +2.24\textsuperscript{**} \\
& & 10.87 & 8.12 & 0.8 & \tableperf{53.27}{20.52}{} & \tableperf{55.17}{19.93}{} & +1.89 \\
& & 11.89 & 8.95 & 1.0 & \tableperf{65.39}{19.49}{} & \tableperf{66.79}{19.83}{} & +1.39 \\
& \multirow{3}{*}{SSVEPFormer} & 3.09 & 1.11 & 0.6 & \tableperf{33.46}{15.19}{} & \tableperf{35.94}{15.35}{} & +2.48\textsuperscript{*} \\
& & 3.09 & 1.11 & 0.8 & \tableperf{52.30}{19.89}{} & \tableperf{54.51}{19.17}{} & +2.21\textsuperscript{*} \\
& & 3.09 & 1.11 & 1.0 & \tableperf{65.23}{20.73}{} & \tableperf{67.50}{19.10}{} & +2.27\textsuperscript{*} \\
\midrule
\multirow{4}{*}{\textbf{Speech}} & EEGNet & 0.02 & 0.13 & \multirow{4}{*}{3.0} & \tableperf{7.13}{2.51}{} & \tableperf{15.46}{6.39}{} & +8.33\textsuperscript{***} \\
& CNN-BiGRU & 0.62 & 0.84 & & \tableperf{36.33}{9.26}{} & \tableperf{53.47}{13.43}{} & +17.15\textsuperscript{***} \\
& Conformer & 4.18 & 3.96 & & \tableperf{47.34}{11.73}{} & \tableperf{53.56}{15.04}{} & +6.21\textsuperscript{***} \\
& Du-IN & 5.05 & 4.94 & & \cellcolor{secondcell}\tableperf{55.27}{13.17}{} & \cellcolor{bestcell}\tableperf{60.49}{14.30}{} & +5.22\textsuperscript{***} \\
\bottomrule
\end{tabular*}
\endgroup
\par\medskip
\begingroup
\includegraphics[width=\linewidth]{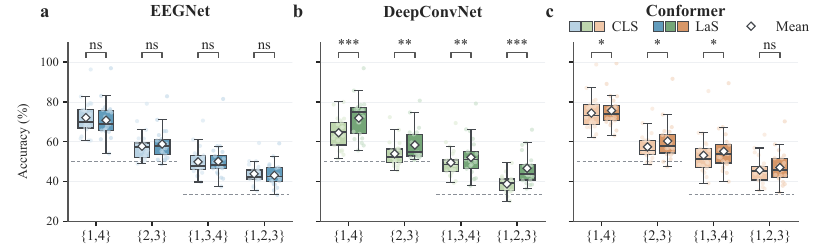}
\captionof{figure}[Candidate-subset reuse in motor imagery]{\textbf{Candidate-subset reuse in motor imagery.} CLS trains per subtask; LaS restricts candidates in each fold's four-class checkpoint without further optimization. Class IDs 1, 2, 3, 4 denote Thumb, Index, Middle, Pinky. Boxes: medians and interquartile ranges (IQRs); whiskers: $1.5\times\mathrm{IQR}$ rule; circles: subject accuracies ($n=21$); diamonds: means; dashed lines: chance (50\%/33.3\%). Stars: Holm-corrected paired CLS--LaS comparisons (Section~\ref{sec:evaluation-setup}).
}
\label{fig:main-3}
\endgroup
\end{table}

\subsection{Evaluation Setup}
\label{sec:evaluation-setup}

We train separate models on each dataset's pooled training subjects. \textbf{MI} contains 21 subjects imagining thumb, index, middle, or pinky movements~\citep{ding2025eeg}. Five within-subject physical-trial folds keep each trial's windows together; decisions aggregate 1-s inputs over 5-s trials. Following the released implementation of \citet{ding2025eeg}, checkpoints maximize validation-window accuracy. In our five-fold evaluation, the same held-out fold serves checkpoint selection and reporting of trial-level accuracy. \textbf{SSVEP} contains 35 subjects and 40 stimulus classes~\citep{wang2017benchmark}, evaluated in seven folds of 25/5/5 training/validation/test subjects; checkpoints are selected on validation subjects. \textbf{Speech} contains 12 subjects overtly reading 61 Chinese words with English prompt translations~\citep{zheng2024duin}, 3-s sEEG inputs, and a shared physical-trial split of approximately 80\%/10\%/10\% for training/validation/testing.

We evaluate EEGNet~\citep{lawhern2018eegnet}, DeepConvNet~\citep{schirrmeister2017deep}, EEG-Conformer~\citep{song2023eeg}, SSVEPFormer~\citep{chen2023transformer}, MTSNet~\citep{lan2025mtsnet}, CNN-BiGRU~\citep{moses2021neuroprosthesis}, and Du-IN~\citep{zheng2024duin} as neural encoders, with FBCSP-LDA~\citep{ang2008filter} as an MI reference.

CLS--LaS pairs share encoder architectures and splits, training to convergence with AdamW and their respective output modules and settings. Training uses PyTorch on a single NVIDIA RTX 4090 GPU. For Table~\ref{tab:overall-cross-encoder}, LaS uses frozen EmbeddingGemma (300M)~\citep{schechtervera2025embeddinggemma}, one fixed correct prompt per class (P0), and bidirectional InfoNCE. Parameter counts include all trainable modules and both projections, excluding the frozen text encoder; Duration includes the decision's full signal span and aggregation (Appendix~\ref{app:experimental-details}).

\noindent\textbf{Windowing and data augmentation.}
For SSVEP, initial-window (IW) sampling fixes the start at latency-corrected onset; random-window (RW) augmentation uniformly samples valid starts, expanding the training set to 300,000 windows. Inter-trial recombination (IR) combines segments from same-class training trials. Neural training zero-masks 30\% of input time (ratio 0.3)~\citep{ding2024maskencoding} for Table~\ref{tab:overall-cross-encoder} (both CLS and LaS), Table~\ref{tab:b-ssvep-itrs}, and the seen classes in Figure~\ref{fig:main-4}c. SSVEP augmentation details appear in Appendix~\ref{app:ssvep-window-augmentation}. All four Speech encoders use Du-IN's downstream augmentation~\citep{zheng2024duin} during CLS/LaS training: random left/right shifts of 0--0.3~s with zero padding preserve the 3-s input span. Validation and testing use unshifted inputs.

Table~\ref{tab:overall-cross-encoder} averages subjects equally after pooling MI's five validation-fold predictions or averaging Speech's five seeds per subject. Table~\ref{tab:overall-cross-encoder} and Figures~\ref{fig:main-3}, \ref{fig:main-5}, and~\ref{fig:main-8} use two-sided paired $t$-tests with Holm correction. Appendix~\ref{app:b-statistics} defines the comparison families and reports paired Cohen's $d_z$ for Table~\ref{tab:overall-cross-encoder}; $*$/$**$/$***$ denote $p<.05$/.01/.001; ns means nonsignificant.

\subsection{Decoding Performance}
\label{sec:decoding-performance}

\noindent\textbf{Motor imagery.}
LaS (EEG-Conformer) leads evaluated four-class accuracy at $43.02\%\pm9.29\%$, with mean $\kappa=0.240$ and a $1.36$~pp CLS-relative gain ($p<.01$; Table~\ref{tab:overall-cross-encoder}). DeepConvNet gains $1.82$~pp ($p<.05$); EEGNet differs nonsignificantly.

By changing only the candidate prompt set, LaS reuses its four-class decoders without further optimization and significantly outperforms subtask-trained CLS on all four seen-class subsets with DeepConvNet and on three with EEG-Conformer (Figure~\ref{fig:main-3}). Differences are nonsignificant for all four EEGNet subsets and EEG-Conformer's $\{\mathrm{Thumb},\mathrm{Index},\mathrm{Middle}\}$ subset.

\noindent\textbf{SSVEP.}
With RW training and IW validation and testing, LaS significantly improves MTSNet at 0.6~s and SSVEPFormer at all three durations after Holm correction; EEGNet differences are nonsignificant and negative at 0.8/1.0~s (Table~\ref{tab:overall-cross-encoder}). Separately, Table~\ref{tab:b-ssvep-itrs} uses IW training for neural baselines and IW+IR for LaS (MTSNet), all evaluated on IW: LaS leads in accuracy at 0.7--1.0~s and estimated ITR at all five durations; SSVEPFormer leads in accuracy at 0.6~s.

\noindent\textbf{Speech.}
All four encoders show significant paired LaS gains ($p<.001$), exceeding the modest MI/SSVEP improvements. With its neural encoder trained from scratch, LaS (Du-IN) leads at $60.49\%\pm14.30\%$ ($+5.22$~pp over CLS). This accuracy exceeds the $59.18\%$ reported for the multi-subject pretrained Du-IN (poms) variant~\citep{zheng2024duin}. CNN-BiGRU achieves the largest gain over CLS ($17.15$~pp). Gains occur with more or fewer trainable parameters (Table~\ref{tab:overall-cross-encoder}).

\subsection{Adaptation and Conditional Unseen-Class Decoding}
\label{sec:evaluation-adaptation}
  \begin{figure}[!tb]
    \centering
    \includegraphics[width=\linewidth]{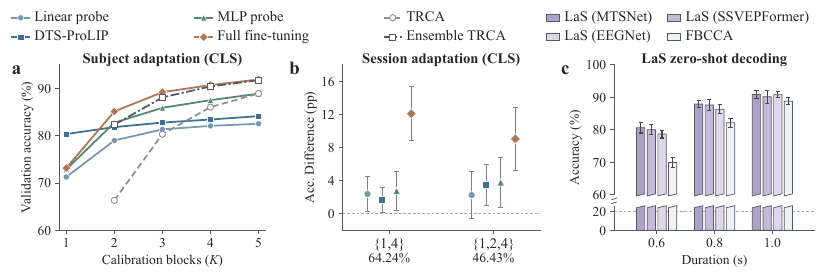}
    \caption[Figure 4]{\input{captions/figure4.tex}}
    \label{fig:main-4}
  \end{figure}

The preceding reuse results concern configuration of seen candidate sets; we next examine adaptation to changes in neural inputs and extension beyond training classes. The neural adaptation methods in Figure~\ref{fig:main-4}a,b start from LaS-trained models and use class-label supervision with cross-entropy during adaptation. Linear and MLP probes fit heads to frozen features; full fine-tuning updates the neural branch. Based on ProLIP~\citep{fahes2026prolip}, DTS-ProLIP restricts neural projection updates to the span of the current candidates' centered text prototypes, keeping the encoder and source text prototypes fixed (Appendix~\ref{app:adaptation}).

\paragraph{MI session adaptation.}
Figure~\ref{fig:main-4}b replays a subsequent session using the pooled four-class source model. Direct inference uses each subject's eight Base runs. Models adapted on these runs (80/20 physical-trial training/validation split) are evaluated on eight later Fine runs. Full fine-tuning gives the largest mean gains on Thumb/Pinky and Thumb/Index/Pinky, from direct-inference accuracies of $64.24\%$/$46.43\%$. The three-class linear probe's 95\% paired-difference $t$ interval crosses zero.

\paragraph{SSVEP subject adaptation.}
Figure~\ref{fig:main-4}a compares LaS (MTSNet) adaptations with TRCA and Ensemble TRCA~\citep{nakanishi2018enhancing} at 1~s and matched calibration budgets. Six leave-one-block-out rounds adapt on $K$ remaining blocks, each containing one trial per class; TRCA variants start at $K=2$. Reported validation accuracy also selects MLP/full-fine-tuning epochs. DTS-ProLIP leads the displayed means at $K=1$, full fine-tuning at $K=2$--$5$; Ensemble TRCA approaches the latter with increasing $K$.

\paragraph{Conditional unseen-class decoding.}
We evaluate LaS conditional zero-shot learning (ZSL) for unseen frequency classes using three independently trained encoder architectures (Figure~\ref{fig:main-4}c). The same 35 subjects are used for training and evaluation across eight class splits, each reserving all EEG from five unseen frequency classes for testing; 35 seen classes supply five training blocks and one validation block per subject, with the latter selecting checkpoints. After RW training, IW evaluation scores all 40 candidates in seen-class validation and only five unseen candidates in testing (20\% chance). All three LaS encoders outperform FBCCA numerically at 0.6/0.8/1.0~s, especially shorter durations. Stimulus frequency supplies structured neural--text correspondence in this setting.

  \begin{figure}[!b]
    \centering
    \includegraphics[width=\linewidth]{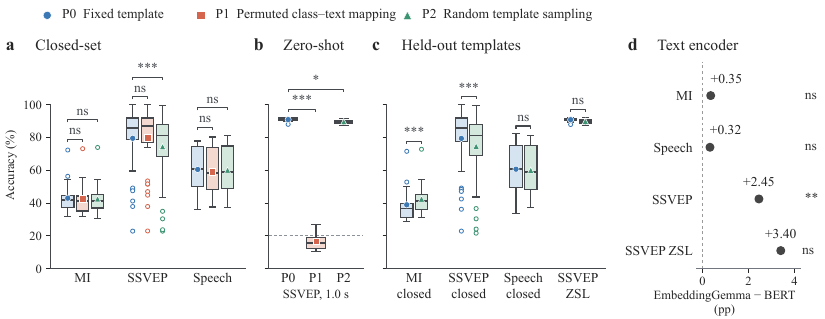}
    \caption[Figure 5]{\input{captions/figure5.tex}}
    \label{fig:main-5}
  \end{figure}
\section{Analysis and Discussion}
\label{sec:discussion}
\subsection{Class Correspondence and Language Configuration}
\label{sec:results-language}
Figure~\ref{fig:main-5}a--c compares training prompt configurations: fixed correct prompts (P0), a fixed derangement of seen-class descriptions (P1; $\pi(c)\ne c$ for every seen class), and random sampling of five correct training templates (P2). Panels (a,b) use one fixed prompt per class at inference; P2 uses P0 prompts. P1 maps closed-set predictions through the inverse permutation and uses true unseen descriptions for unseen-class prediction. P1 averages five derangement environments within subject or class split; Speech P2 averages three training seeds per subject.

Closed-set P0/P1 differences are nonsignificant, whereas P1 reduces unseen-only SSVEP accuracy to near chance ($p<.001$). P1 reassigns an unchanged raw text embedding set, testing class--description correspondence while preserving embedding relationships. With full-row-rank seen text embeddings and defined prototypes, a text projection can restore all seen prototypes while the neural branch remains fixed (Appendix~\ref{app:theory-correspondence}).

With fixed P0 prompts at inference, P2 worsens SSVEP closed-set ($p<.001$) and unseen-only decoding ($p<.05$), with nonsignificant MI/Speech differences. Panel (c) compares P0 and P2 using an ensemble of five held-out templates, none used in either training configuration. P2 improves MI and worsens closed-set SSVEP (both $p<.001$); Speech and unseen-only SSVEP differences are nonsignificant.

Figure~\ref{fig:main-5}d compares independently selected best configurations of BERT~\citep{devlin2019bert} and EmbeddingGemma. EmbeddingGemma improves closed-set SSVEP by $2.45$~pp ($p=.00485$); its $3.40$~pp unseen-only gain is nonsignificant ($p=.0838$), as are the MI/Speech differences. Text-encoder freezing versus LoRA adaptation~\citep{hu2022lora} has task- and encoder-dependent effects (Figure~\ref{fig:b-text-pairs}; Appendix~\ref{app:b-language}).

\subsection{Interpreting Neural--Text Matches}
\label{sec:results-attribution}
  \begin{figure}[!tb]
    \centering
    \includegraphics[width=\linewidth]{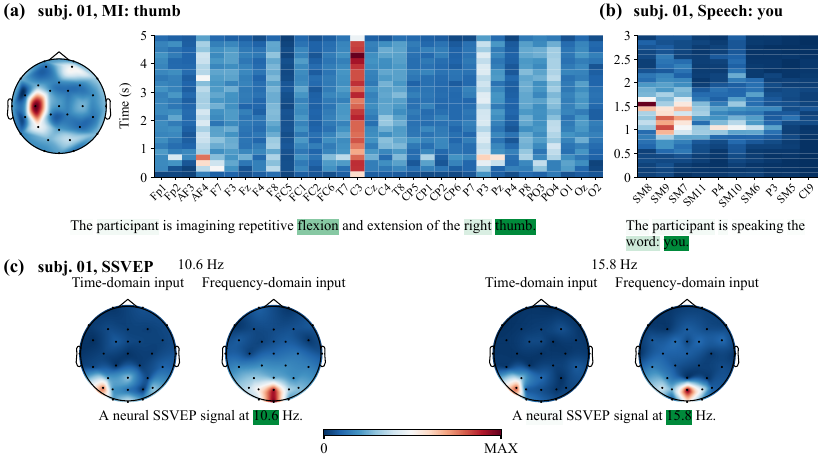}
    \caption[Figure 6]{\input{captions/figure6.tex}}
    \label{fig:main-6}
  \end{figure}
We propose a dual-sided attribution visualization scheme for LaS class matching (Figure~\ref{fig:main-6}). Each example averages attributions over all training trials of one class from the first subject in its dataset: 120 right-thumb trials for MI, 38 ``you'' trials for Speech, and six original recordings for each SSVEP class (10.6 or 15.8~Hz). Grouped Integrated Gradients~\citep{sundararajan2017axiomatic} quantify signed contributions of input groups (sets of input components) to changes in the same unscaled class-specific cosine score along specified baseline-to-input paths. Neural attribution fixes class text; conditional word attribution fixes the neural representation and other words (Appendices~\ref{app:b-attribution} and~\ref{app:attribution-method}). For display, ReLU-rectified neural attributions are divided by each complete input's maximum positive attribution before aggregation. This input is one MI window, one Speech trial, or both SSVEP branches jointly. Word attributions retain their signs during averaging; positive means are then divided by their maximum within each prompt for display.

Right-thumb MI shows a hotspot near C3 on the left central scalp over the sensorimotor region, with sustained C3 contributions and emphasis on ``thumb'' and ``flexion.'' An early AF4 contribution may reflect ocular or muscle artifacts. In Speech, earlier contributions on SM9 and SM7 precede a stronger peak on SM8, followed by weaker contributions toward the trial's end; ``you'' is the strongest word contribution, followed by ``word.'' SSVEP time-domain attributions emphasize the left occipital scalp, and frequency-domain attributions the occipital midline; word attributions emphasize ``10.6'' and ``15.8.''

  \begin{figure}[!t]
    \centering
    \includegraphics[width=\linewidth]{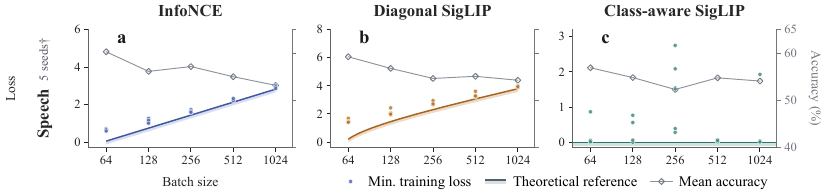}
    \caption[Figure 7]{\input{captions/figure7.tex}}
    \label{fig:main-7}
  \end{figure}
\subsection{Training Objectives and Directional Effects}
\label{sec:results-objectives}
  \begin{wrapfigure}{R}{59mm}
    \centering
    \includegraphics[width=59mm]{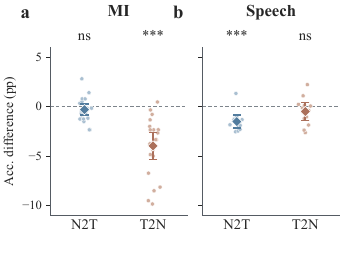}
    \caption[Directional InfoNCE ablation]{\input{captions/figure8.tex}}
    \label{fig:main-8}
  \end{wrapfigure}
For each Speech condition, accuracy is averaged over five seeds per subject and then across the 12 subjects (batch grids: Appendix~\ref{app:b-objectives}). All observed minimum training losses in Figure~\ref{fig:main-7} lie above the corresponding curves derived from Equation~\eqref{eq:loss-floors}. Across the three tasks (Figure~\ref{fig:b-loss-batch}), larger batches generally raise InfoNCE and Diagonal SigLIP minima without corresponding accuracy gains. Identical same-class descriptions create competition in InfoNCE and label conflicts in Diagonal SigLIP. Class-aware labels resolve these conflicts and increase positive-pair weights, but eliminating the positive loss floor does not consistently improve decoding accuracy (Section~\ref{sec:analysis}; Appendix~\ref{app:theory-losses}).

Figure~\ref{fig:main-8} compares MI (EEG-Conformer) and Speech (Du-IN), with five seeds per Speech condition. N2T promotes class discrimination, while T2N encourages within-class aggregation of matching responses along the corresponding text-prototype direction under the conditions in Section~\ref{sec:analysis}. Bidirectional InfoNCE outperforms T2N-only in MI and N2T-only in Speech (both $p<.001$); other contrasts are nonsignificant.

\subsection{Limitations and Future Work}
\label{sec:limitations}
Models are trained independently for each dataset and encoder; transfer between datasets and recording systems remains to be evaluated. We observed unseen-class decoding in SSVEP, but not in Speech. Initial nonstationarity in SSVEP responses limits transfer from RW training to IW evaluation in Table~\ref{tab:overall-cross-encoder} (Appendix~\ref{app:ssvep-further}). Online control and long-term decoding stability also require evaluation.

Future work could extend LaS from class-level prediction to real-time decoding of continuous behavior and closed-loop control. Language could help neural interfaces learn users' goals through interaction and generalize to new tasks. Language-guided feedback may also help users develop transferable control skills.

\par\ifnum\value{WF@wrappedlines}>1\relax
\vspace{\dimexpr\value{WF@wrappedlines}\baselineskip-\baselineskip\relax}
\fi
\WFclear
\section{Conclusion}
\label{sec:conclusion}
LaS uses natural-language class descriptions as references for supervision and prediction in neural decoding. Evaluations of independently trained models cover visual, motor, and speech decoding with scalp and intracranial recordings, showing its applicability to different encoder architectures. The learned representations support candidate-set reuse and label-supervised adaptation, and LaS enables conditional unseen-frequency decoding in SSVEP. Theoretical and language analyses characterize loss bounds and class correspondence; dual-sided attribution visualization shows contributions from neural signal components and prompt words to the same class-specific matching score. These findings suggest that language can provide a reusable interface through which neural decoding systems can be configured and extended as users' needs evolve.

\label{main:text-end}
\paragraph{Reproducibility Statement.}
The main text and appendices describe the data preprocessing, model architectures, training settings, and evaluation protocols. We will release the implementation code and detailed training configurations later at \url{https://github.com/Kagura-Yachiyo/Teaching-Neural-Decoders-with-Language}.

\subsection*{AI Use Statement}
We used DeepSeek and ChatGPT (OpenAI) to identify relevant literature, draft selected parts of the manuscript, and assist with writing and language editing. We reviewed the AI-assisted content and take responsibility for the final manuscript.

\bibliography{references}
\bibliographystyle{paper_references}

\clearpage
\appendix
\renewcommand{\theHtable}{\thesection.\arabic{table}}
\renewcommand{\theHfigure}{\thesection.\arabic{figure}}
\renewcommand{\theHequation}{\thesection.\arabic{equation}}
\setcounter{table}{0}
\setcounter{figure}{0}
\setcounter{equation}{0}
\begingroup
\setlength{\tabcolsep}{3pt}
\renewcommand{\arraystretch}{1.13}
\renewcommand{\thetable}{A.\arabic{table}}
\section{Experimental Details}
\label{app:experimental-details}

This appendix specifies the data processing, encoder interfaces, and evaluation measures used in the main performance comparison. Section~\ref{app:adaptation} describes the adaptation algorithms separately. MI denotes finger motor imagery, SSVEP denotes visual target decoding from steady-state visual evoked potentials, and Speech denotes overt word reading. MI and SSVEP use scalp EEG; Speech uses intracranial stereoelectroencephalography (sEEG). Within each task, CLS and LaS use the same training and evaluation splits. Each neural encoder is trained separately and learns its own alignment space.

\subsection{Shared Conventions}
\label{app:shared}

Table~\ref{tab:terminology} defines the task names, window conditions, and stages of text representation used throughout the paper.

{\small
\begin{longtable}{P{1.65in}P{3.68in}}
\caption{Terminology and naming conventions.}\label{tab:terminology}\\
\toprule Preferred term & Meaning and scope\\\midrule\endfirsthead
\multicolumn{2}{l}{\footnotesize Table \ref{tab:terminology} (continued)}\\
\toprule Preferred term & Meaning and scope\\\midrule\endhead
\bottomrule\endfoot
Language as Supervision (LaS) & Language-supervised decoding by aligning neural representations with natural-language class descriptions.\\
Class-label supervision (CLS) & Neural-classification comparator trained with discrete labels and a task-specific head; LaS also uses class labels to establish pairs.\\
One-hot targets; cross-entropy (CE) & Target encoding and loss function, respectively.\\
Motor imagery (MI) & Finger motor-imagery decoding task.\\
SSVEP & Steady-state visual evoked potential decoding of visual stimulus targets.\\
Speech & Speech-decoding task using overt word reading.\\
Scalp EEG; sEEG & Non-invasive scalp recording and invasive stereotactic intracranial recording, respectively; MI and SSVEP use the former, Speech the latter.\\
Random-window (RW) & Contiguous window of length $L$ with a random valid start within a training trial.\\
Initial-window (IW) & First $L$ seconds from the defined latency-corrected valid onset; used for IW training or IW evaluation.\\
Inter-trial recombination (IR) & Segment recombination among same-class training trials; implementation is defined in Appendix~\ref{app:ssvep-window-augmentation}.\\
IW $\rightarrow$ IW; IW + IR $\rightarrow$ IW; RW $\rightarrow$ IW & Training-window condition is left of the arrow and evaluation condition right of it; IR denotes additional recombination.\\
Prompt template & Sentence template containing a class variable.\\
Prompt, $P_c$ & Complete text passed to the text encoder.\\
Text embedding, $T_{f,c}$ & Text-encoder vector before the task-specific text projection.\\
Text prototype, $z_{\mathrm t,\mathrm{proj}}^{(c)}$ & Class reference after the current model's text projection and $\ell_2$ normalization.\\
Held-out templates; unseen classes & Templates absent from training and classes absent from neural-model training, respectively.\\
Similarity score; logit & $s_c(X_i)$ is cosine similarity; $a_{ic}=s_c(X_i)/\tau$ is the temperature-scaled logit.\\
Batch size, $B$ & Number of neural--text pairs in one pairing matrix.\\
Class count within a batch, $n_c$ & Number of pairs of class $c$ in the current batch; $B=\sum_{c=1}^{C}n_c$.\\
Balanced class multiplicity, $m$ & Number of pairs per class in a balanced batch; $B=mC$.\\
Frozen/LoRA & Frozen or low-rank-adapted text encoder; the neural and projection modules are specified separately.\\
\end{longtable}}

The datasets are the finger motor-imagery recordings of Ding et~al., the SSVEP Benchmark dataset of Wang et~al., and the Chinese word-reading sEEG dataset of Zheng et~al. Du-IN names an encoder. The Speech targets are Chinese words, whereas their prompts use the corresponding English translations. Prompt text shown in the figures follows the actual input wording.

\paragraph{Supervision and text representations.}
CLS trains a task-specific classification head with class labels. One-hot encoding specifies a target representation, and cross-entropy specifies a loss. LaS also uses class labels to pair neural trials with text, then predicts through matching between neural representations and class text prototypes.

Let $P_c$ be the complete prompt supplied to the text encoder, and let $T_{f,c}$ be the resulting text embedding before the model-specific projection. Text projection followed by $\ell_2$ normalization produces a text prototype. A prompt template becomes a prompt after its class variable is filled. The main performance comparison uses frozen EmbeddingGemma (300M), checkpoint \texttt{google/embeddinggemma-300m}~\citep{schechtervera2025embeddinggemma}, with one fixed, semantically matched P0 prompt per class during training and evaluation. The model input has one preceding \ptext{task: sentence similarity | query: } prefix. Attention-mask mean pooling, the model's native \texttt{2\_Dense} and \texttt{3\_Dense} layers, and $\ell_2$ normalization yield a 768-dimensional embedding. Frozen text features can be precomputed; the neural encoder and both projections remain trainable in LaS.

\paragraph{Prompt conditions.}
P0 uses template 0 for each task. P2 samples from templates 0--4 during training while preserving the class meaning; its regular validation and evaluation use template 0. Held-out templates S0--S4 are absent from template training. For their ensemble, the five sets of candidate-class logits are averaged classwise before prediction. The exact training and held-out templates appear in the respective task sections.

\paragraph{Reading the architecture tables.}
The following tables omit the batch dimension and use $C$ for the number of classes in the current task. For each encoder, the CLS and LaS paths begin at a common network boundary and list the operations performed thereafter. Neural and text projections are bias-free, and their outputs are $\ell_2$-normalized. Feature readouts differ between paths for some encoders.

\subsection{Motor Imagery (MI)}
\label{app:mi}

\paragraph{Data and main comparison.}
The finger motor-imagery data of \citet{ding2025eeg} contain 21 participants and four classes: Thumb, Index, Middle, and Pinky. The original 128-channel recordings were sampled at 1024 Hz. Offline trials are located by \texttt{Target} and the corresponding \texttt{TrialEnd} event, retaining at most 5.0 s after \texttt{Target}. We first apply common-average referencing across the 128 channels, select 32 channels, and resample to 250 Hz by the polyphase method. Each segment receives 250 zero samples at both ends before fourth-order Butterworth 4--40 Hz band-pass filtering with \texttt{lfilter}; the padding is then removed. The selected channels, in order, are Fp1, AF3, F7, F3, FC1, FC5, T7, C3, CP1, CP5, P7, P3, Pz, PO3, O1, Oz, O2, PO4, P4, P8, CP6, CP2, C4, T8, FC6, FC2, F4, F8, AF4, Fp2, Fz, and Cz.

The models receive 1.0-s windows at a 0.2-s stride; a complete 5.0-s trial yields 21 windows. Each window is standardized channelwise using its own statistics. Windows must lie entirely within a valid trial, and non-finite inputs are excluded. Physical trials are stratified by class into five folds within each participant. In each fold, the training trials from four folds are pooled across the 21 participants, and the remaining fold supplies held-out predictions. All windows from a trial inherit the same fold. Metrics are computed after combining held-out predictions within each participant, followed by an equal-weight participant average.

\begin{table}[htbp]
\centering\small
\caption{MI protocol for the main performance comparison.}
\label{tab:mi-protocol}
\begin{tabularx}{\linewidth}{P{0.91in} X}
\toprule Item & Setting\\\midrule
Trial split & Class-stratified five-fold split of physical trials within each of 21 participants; four folds train and one fold is held out. All windows of a trial stay in one fold.\\
Model input & 32 channels at 250 Hz; 1.0-s window, 0.2-s stride, and 5.0-s complete trial.\\
Candidate classes & Four finger motor-imagery classes.\\
Aggregation & Combine held-out trial predictions within each participant, then average participant metrics equally.\\\bottomrule
\end{tabularx}
\end{table}

\paragraph{Training and checkpoint selection.}
We train neural models with AdamW and cosine learning-rate decay, using weight decay and dropout for regularization. Optimization settings vary by encoder and supervision scheme. The checkpoint with the highest window-level accuracy on the held-out fold is selected for trial-level evaluation on that same fold.

\paragraph{Encoders and CLS/LaS interfaces.}
The MI encoders are EEGNet~\citep{lawhern2018eegnet}, a DeepConvNet1s variant of DeepConvNet~\citep{schirrmeister2017deep}, and EEG-Conformer~\citep{song2023eeg}. Tables~\ref{tab:mi-cls} and \ref{tab:mi-integration} specify the classification and projection paths used here.

{\footnotesize
\begin{longtable}{P{0.78in}P{1.02in}P{1.17in}P{2.13in}}
\caption{MI encoders: CLS readout and classification paths.}\label{tab:mi-cls}\\
\structhead\endfirsthead
\multicolumn{4}{l}{\footnotesize Table \ref{tab:mi-cls} (continued)}\\
\structhead\endhead
\bottomrule\endfoot
EEGNet & Common boundary & Position; shape & After separable-convolution ELU; $16\times25$\\
 & CLS readout & Average pool & Kernel/stride $(1,8)$/$(1,8)$\\
 & & Dropout & $p=0.5$\\
 & & Flatten & $16\times3\rightarrow48$\\
 & CLS head & Linear & $48\rightarrow C$\\\midrule
DeepConvNet\newline 1s & Common boundary & Position; shape & After fourth convolutional block; $200\times1\times6$\\
 & CLS head & Classification convolution & Channels $200\rightarrow C$; kernel $(1,6)$; stride 1; padding 0\\\midrule
EEG-Conformer & Common boundary & Position; shape & Transformer output; $11\times40$ tokens\\
 & CLS readout & Flatten & 440\\
 & CLS hidden 1 & Linear, ELU, dropout & $440\rightarrow256$; $p=0.5$\\
 & CLS hidden 2 & Linear, ELU, dropout & $256\rightarrow32$; $p=0.3$\\
 & CLS output & Linear & $32\rightarrow C$\\
\end{longtable}}

{\footnotesize
\begin{longtable}{P{0.78in}P{1.02in}P{1.17in}P{2.13in}}
\caption{MI encoders: LaS readout and projection paths.}\label{tab:mi-integration}\\
\structhead\endfirsthead
\multicolumn{4}{l}{\footnotesize Table \ref{tab:mi-integration} (continued)}\\
\structhead\endhead
\bottomrule\endfoot
EEGNet & Common boundary & Position; shape & After separable-convolution ELU; $16\times25$\\
 & LaS readout & Flatten & $16\times25\rightarrow400$\\
 & Neural projection & Linear & $400\rightarrow256$\\
 & Text projection & Linear & $768\rightarrow256$\\\midrule
DeepConvNet\newline 1s & Common boundary & Position; shape & After fourth convolutional block; $200\times1\times6$\\
 & LaS readout & Flatten & $200\times1\times6\rightarrow1200$\\
 & Neural projection & Linear & $1200\rightarrow256$\\
 & Text projection & Linear & $768\rightarrow256$\\\midrule
EEG-Conformer & Common boundary & Position; shape & Transformer output; $11\times40$ tokens\\
 & LaS readout & Flatten tokens & $11\times40\rightarrow440$\\
 & Neural projection & Linear & $440\rightarrow256$\\
 & Text projection & Linear & $768\rightarrow256$\\\midrule
Shared LaS setting & Both projections & Bias; output normalization & None; separate $\ell_2$ normalization\\
 & Numerical path & Precision & FP32 projection and normalization\\
\end{longtable}}

FBCSP-LDA~\citep{ang2008filter} is a separate conventional MI baseline fitted on the corresponding training folds. EEGNet's LaS feature is read before its second pooling layer, giving that pair distinct readout paths.

\paragraph{Class prompts.}
The variable \texttt{\{finger\}} takes \texttt{thumb}, \texttt{index finger}, \texttt{middle finger}, or \texttt{pinky}. Table~\ref{tab:mi-prompts} gives the English templates before the EmbeddingGemma prefix is added.

{\small
\begin{longtable}{P{0.55in}P{4.75in}}
\caption{MI P0/P2 training and held-out templates.}\label{tab:mi-prompts}\\
\toprule ID & Template text\\\midrule\endfirsthead
\multicolumn{2}{l}{\footnotesize Table \ref{tab:mi-prompts} (continued)}\\\toprule ID & Template text\\\midrule\endhead
\bottomrule\endfoot
0 (P0) & \ptext{The participant is imagining repetitive flexion and extension of the right \{finger\}.}\\
1 & \ptext{The person is performing motor imagery of moving the right \{finger\}.}\\
2 & \ptext{This EEG corresponds to imagined flexion and extension of the right \{finger\}.}\\
3 & \ptext{The participant is mentally rehearsing repetitive movement of the right \{finger\}.}\\
4 & \ptext{The intended action is flexion and extension of the right \{finger\}, imagined without physical movement.}\\\midrule
S0 & \ptext{an EEG of imagined right \{finger\} motion.}\\
S1 & \ptext{a motor-imagery signal for the right \{finger\}.}\\
S2 & \ptext{a neural pattern for right \{finger\} imagery.}\\
S3 & \ptext{brain activity from an imagined right \{finger\} action.}\\
S4 & \ptext{an imagined right \{finger\} gesture.}\\
\end{longtable}}

\begin{table}[H]
\centering\small
\caption{MI parameter counts and classification forward FLOPs.}
\label{tab:mi-complexity}
\begin{tabular*}{\linewidth}{@{\extracolsep{\fill}}lccccc@{}}
\toprule
\multirow{2}{*}{Encoder} & \multirow{2}{*}{Duration (s)} & \multicolumn{2}{c}{Params (M)} & \multicolumn{2}{c}{Forward GFLOPs}\\
\cmidrule(lr){3-4}\cmidrule(lr){5-6}
& & CLS & LaS & CLS & LaS\\\midrule
EEGNet & 5.0 & 0.002 & 0.30 & 0.22 & 0.23\\
DeepConvNet1s & 5.0 & 0.29 & 0.79 & 0.55 & 0.56\\
EEG-Conformer & 5.0 & 0.29 & 0.48 & 0.85 & 0.85\\\bottomrule
\end{tabular*}
\par\smallskip\raggedright The complete set of template-0 (T0) candidate prompts is encoded once, with a text-encoding cost of 53.1880 GFLOPs.
\end{table}

\subsection{Steady-State Visual Evoked Potentials (SSVEP)}
\label{app:ssvep}

\paragraph{Data and main comparison.}
We use the SSVEP Benchmark dataset of \citet{wang2017benchmark}. It contains 35 participants and 40 stimulus classes. Each participant completed six blocks, each with one trial per class. Neural models in the main performance comparison use nine channels (Pz, PO5, PO3, POz, PO4, PO6, O1, Oz, O2) at 250 Hz. For each trial and channel, we subtract the mean over the 0.5-s pre-stimulus interval. A sixth-order Butterworth 6--64 Hz band-pass filter is applied with \texttt{filtfilt} to the complete recorded segment before window extraction. This processing path applies no further resampling or referencing. Each extracted window is standardized channelwise using its own statistics.

Stimulus onset is $t=0$. An initial window (IW) of length $L$ occupies $[0.14,0.14+L)$ s. For random-window (RW) training, a valid sample-aligned start $a$ is drawn uniformly with $0.14\leq a\leq5.00-L$; all channels are extracted over the same interval $[a,a+L)$. The matched neural-encoder comparison uses RW training and IW evaluation at 0.6, 0.8, and 1.0 s. Approximately 300,000 RW windows are generated from each fold's training partition. The relevant time-domain inputs receive contiguous temporal masking at ratio 0.3 under the same condition for paired CLS and LaS encoders. Appendix~\ref{app:ssvep-window-augmentation} defines the augmentation operation.

Participants are divided into seven groups of five. In each fold, one group is evaluated, another is used for validation, and the remaining five groups (25 participants) are used for training. Each group serves as the evaluation group in turn. This subject split defines the Table~1 encoder comparison.

\begin{table}[htbp]
\centering\small
\caption{SSVEP protocol for the paired encoder comparison in Table~1.}
\label{tab:ssvep-protocol}
\begin{tabularx}{\linewidth}{P{0.96in} X}
\toprule Item & Setting\\\midrule
Participant split & Seven folds; each has 25 training, five validation, and five evaluation participants.\\
Training input & RW of 0.6, 0.8, or 1.0 s; temporal masking at ratio 0.3 for the applicable time-domain input.\\
Evaluation input & IW of the corresponding length, starting 0.14 s after stimulus onset.\\
Candidate classes & All 40 stimulus classes.\\\bottomrule
\end{tabularx}
\end{table}

\paragraph{Training and checkpoint selection.}
Within each cross-subject fold, we train neural models with AdamW and cosine learning-rate decay, using weight decay and dropout for regularization. Optimization settings and training duration vary by encoder, input-window length, and supervision scheme (CLS or LaS). The checkpoint with the highest classification accuracy on the validation subjects is selected for evaluation on the held-out test subjects.

\paragraph{Encoders and CLS/LaS interfaces.}
The SSVEP encoders are EEGNet~\citep{lawhern2018eegnet}, SSVEPFormer~\citep{chen2023transformer}, and MTSNet~\citep{lan2025mtsnet}. MTSNet's frequency branch zero-pads the observed short window to 1250 samples and concatenates the real and imaginary parts of its complex spectrum for $8\leq f<64$ Hz. In Tables~\ref{tab:ssvep-cls} and \ref{tab:ssvep-integration}, $T$ is the number of input samples. Durations of 0.6, 0.8, and 1.0 s correspond to $T=150,200,250$; slash-separated dimensions follow this order.

{\footnotesize
\begin{longtable}{P{0.78in}P{1.02in}P{1.17in}P{2.13in}}
\caption{SSVEP encoders: CLS readout and classification paths.}\label{tab:ssvep-cls}\\
\structhead\endfirsthead
\multicolumn{4}{l}{\footnotesize Table \ref{tab:ssvep-cls} (continued)}\\
\structhead\endhead
\bottomrule\endfoot
EEGNet & Common boundary & Position & After final dropout\\
 & CLS readout & Flattened width $d$ & 384/576/672\\
 & CLS head & Linear & $d\rightarrow C$\\\midrule
SSVEPFormer & Common boundary & Position & After frequency-domain encoder\\
 & CLS readout & Flatten & 10080\\
 & CLS hidden & Linear & $10080\rightarrow240$\\
 & & LayerNorm, GELU, dropout & Width 240; $p=0$\\
 & CLS output & Linear & $240\rightarrow C$\\\midrule
MTSNet & Common boundary & Position & After time-frequency fusion\\
 & CLS readout & Flatten, dropout & $d=18(T+560)$; $p=0.5$\\
 & CLS hidden & Linear & $d\rightarrow240$\\
 & & LayerNorm, GELU, dropout & Width 240; $p=0.5$\\
 & CLS output & Linear & $240\rightarrow C$\\
\end{longtable}}

{\footnotesize
\begin{longtable}{P{0.78in}P{1.02in}P{1.17in}P{2.13in}}
\caption{SSVEP encoders: LaS readout and projection paths.}\label{tab:ssvep-integration}\\
\structhead\endfirsthead
\multicolumn{4}{l}{\footnotesize Table \ref{tab:ssvep-integration} (continued)}\\
\structhead\endhead
\bottomrule\endfoot
EEGNet & Common boundary & Position & After final dropout\\
 & LaS readout & Flattened width $d$ & 384/576/672\\
 & Neural projection & Linear & $d\rightarrow128$\\
 & Text projection & Linear & $768\rightarrow128$\\\midrule
SSVEPFormer & Common boundary & Position & After frequency-domain encoder\\
 & LaS readout & Avg + max pooling & Elementwise sum; kernel/stride 10/10\\
 & & Flattened width & 1008\\
 & Neural projection & Linear & $1008\rightarrow256$\\
 & Text projection & Linear & $768\rightarrow256$\\\midrule
MTSNet & Common boundary & Position & After time-frequency fusion\\
 & LaS readout & Avg + max pooling & Elementwise sum; kernel/stride 10/10\\
 & & Flattened width $d$ & 1278/1368/1458\\
 & Neural projection & Linear & $d\rightarrow256$\\
 & Text projection & Linear & $768\rightarrow256$\\\midrule
Shared LaS setting & Both projections & Bias; output normalization & None; separate $\ell_2$ normalization\\
\end{longtable}}

EEGNet projects both modalities to 128 dimensions; SSVEPFormer and MTSNet use 256 dimensions. Each encoder learns its own projections.

\paragraph{Class prompts.}
The variable \texttt{\{frequency\}} is filled with the stimulus frequency in hertz. Table~\ref{tab:ssvep-prompts} reproduces the template text before the EmbeddingGemma prefix.

{\small
\begin{longtable}{P{0.55in}P{4.75in}}
\caption{SSVEP P0/P2 training and held-out templates.}\label{tab:ssvep-prompts}\\
\toprule ID & Template text\\\midrule\endfirsthead
\multicolumn{2}{l}{\footnotesize Table \ref{tab:ssvep-prompts} (continued)}\\\toprule ID & Template text\\\midrule\endhead
\bottomrule\endfoot
0 (P0) & \ptext{A neural SSVEP signal at \{frequency\} Hz.}\\
1 & \ptext{A \{frequency\}-Hz SSVEP.}\\
2 & \ptext{SSVEP signal, frequency \{frequency\} Hz.}\\
3 & \ptext{Steady-state visual evoked potential at \{frequency\} Hz.}\\
4 & \ptext{The neural response SSVEP corresponds to a \{frequency\}-Hz rhythm}\\\midrule
S0 & \ptext{An EEG response at \{frequency\} Hz.}\\
S1 & \ptext{An SSVEP at \{frequency\} Hz.}\\
S2 & \ptext{A response to \{frequency\} Hz flicker.}\\
S3 & \ptext{A response to \{frequency\} Hz stimulation.}\\
S4 & \ptext{A steady-state response at \{frequency\} Hz.}\\
\end{longtable}}

\begin{table}[H]
\centering\small
\caption{SSVEP parameter counts and classification forward FLOPs.}
\label{tab:ssvep-complexity}
\begin{tabular*}{\linewidth}{@{\extracolsep{\fill}}lccccc@{}}
\toprule
\multirow{2}{*}{Encoder} & \multirow{2}{*}{Duration (s)} & \multicolumn{2}{c}{Params (M)} & \multicolumn{2}{c}{Forward GFLOPs}\\
\cmidrule(lr){3-4}\cmidrule(lr){5-6}
& & CLS & LaS & CLS & LaS\\\midrule
EEGNet & 0.6 & 0.04 & 0.32 & 0.04 & 0.06\\
 & 0.8 & 0.05 & 0.37 & 0.05 & 0.07\\
 & 1.0 & 0.05 & 0.39 & 0.07 & 0.08\\
MTSNet & 0.6 & 9.92 & 7.37 & 0.33 & 0.34\\
 & 0.8 & 10.87 & 8.12 & 0.36 & 0.37\\
 & 1.0 & 11.89 & 8.95 & 0.40 & 0.41\\
SSVEPFormer & 0.6 & 3.09 & 1.11 & 0.05 & 0.06\\
 & 0.8 & 3.09 & 1.11 & 0.05 & 0.06\\
 & 1.0 & 3.09 & 1.11 & 0.05 & 0.06\\\bottomrule
\end{tabular*}
\par\smallskip\raggedright The complete set of template-0 (T0) candidate prompts is encoded once, with a text-encoding cost of 1087.5419 GFLOPs.
\end{table}
\FloatBarrier

\subsection{Overt Word Reading (Speech)}
\label{app:speech}

\paragraph{Data and main comparison.}
We use the Chinese word-reading sEEG data released by \citet{zheng2024duin}, comprising 12 participants and 61 word classes. The published 3.0-s physical trials are used without additional event realignment, with ten fixed channels per participant. Acquisition, upstream filtering, and bipolar referencing follow the released data. After loading these segments, we select the channels and sampling rate required by each encoder and apply normalization. Du-IN receives 3000 samples at 1000 Hz; EEGNet and EEG-Conformer receive 750 samples at 250 Hz; CNN-BiGRU receives 90 samples at 30 Hz. All inputs span the same 3.0-s trial.

Within each participant's run, physical trials are shuffled and approximately 80\% of each class enters training. Remaining trials are pooled and split approximately equally between validation and evaluation, yielding an overall split near 80\%/10\%/10\%. Training portions from the participants are pooled, and CLS and LaS share the split. At each input sampling rate, training, validation, and evaluation sets are standardized separately using their own unlabeled per-participant, per-channel statistics. During training, all four encoders use the random temporal shift from the Du-IN downstream protocol: each 3.0-s segment is shifted left or right by up to 0.3 s and zero-padded. Validation and evaluation use no such shift~\citep{zheng2024duin}. Each trial receives one prediction.

\begin{table}[htbp]
\centering\small
\caption{Three-second Speech trial inputs for the main performance comparison.}
\label{tab:speech-protocol}
\begin{tabularx}{\linewidth}{P{1.26in} P{1.15in} X}
\toprule Encoder & Sampling rate & Samples per trial and evaluation unit\\\midrule
Du-IN & 1000 Hz & 3000; one prediction per trial\\
EEGNet & 250 Hz & 750; one prediction per trial\\
EEG-Conformer & 250 Hz & 750; one prediction per trial\\
CNN-BiGRU & 30 Hz & 90; one prediction per trial\\\bottomrule
\end{tabularx}
\end{table}

\paragraph{Training and checkpoint selection.}
Speech training follows the downstream setup of Du-IN~\citep{zheng2024duin}. We use AdamW with weight decay and learning-rate warmup followed by cosine decay. Encoder-specific settings, including batch size and training duration, are set separately for CLS and LaS. For both schemes, the checkpoint with the highest validation accuracy is selected for evaluation on the test split.

\paragraph{Encoders and CLS/LaS interfaces.}
All four neural backbones are trained from scratch. Tables~\ref{tab:speech-cls} and \ref{tab:speech-integration} specify their readout and output paths.

{\footnotesize
\begin{longtable}{P{0.78in}P{1.02in}P{1.17in}P{2.13in}}
\caption{Speech encoders: CLS readout and classification paths.}\label{tab:speech-cls}\\
\structhead\endfirsthead
\multicolumn{4}{l}{\footnotesize Table \ref{tab:speech-cls} (continued)}\\
\structhead\endhead
\bottomrule\endfoot
EEGNet & Common boundary & Position & Before final pooling\\
 & Backbone end & Average pool & Kernel/stride $(1,8)$/$(1,8)$\\
 & & Dropout & $p=0.5$\\
 & CLS readout & Flatten & 144\\
 & CLS head & Linear; output & $144\rightarrow C$; raw logits\\\midrule
CNN-BiGRU & Common boundary & Position & Bidirectional GRU output\\
 & CLS readout & Last sequence position & 256\\
 & CLS hidden & Linear & $256\rightarrow128$\\
 & CLS output & Linear, sigmoid & $128\rightarrow C$\\\midrule
EEG-Conformer & Common boundary & Position; shape & Transformer output; $50\times128$ tokens\\
 & CLS readout & Flatten & 6400\\
 & CLS hidden & Linear & $6400\rightarrow128$\\
 & CLS output & Linear, sigmoid & $128\rightarrow C$\\\midrule
Du-IN & Common boundary & Position; shape & Transformer output; $30\times160$ tokens\\
 & CLS readout & Flatten & 4800\\
 & CLS hidden & Linear, ReLU & $4800\rightarrow128$\\
 & CLS output & Linear, sigmoid & $128\rightarrow C$\\
\end{longtable}}

{\footnotesize
\begin{longtable}{P{0.78in}P{1.02in}P{1.17in}P{2.13in}}
\caption{Speech encoders: LaS readout and projection paths.}\label{tab:speech-integration}\\
\structhead\endfirsthead
\multicolumn{4}{l}{\footnotesize Table \ref{tab:speech-integration} (continued)}\\
\structhead\endhead
\bottomrule\endfoot
EEGNet & Common boundary & Position & Before final pooling\\
 & Backbone end & Average pool & Kernel/stride $(1,8)$/$(1,8)$\\
 & & Dropout & $p=0.1$\\
 & LaS readout & Flatten & 144\\
 & Neural projection & Linear & $144\rightarrow128$\\
 & Text projection & Linear & $768\rightarrow128$\\\midrule
CNN-BiGRU & Common boundary & Position & Bidirectional GRU output\\
 & LaS readout & Concatenated final hidden states & 256\\
 & Neural projection & Linear & $256\rightarrow256$\\
 & Text projection & Linear & $768\rightarrow256$\\\midrule
EEG-Conformer & Common boundary & Position; shape & Transformer output; $50\times128$ tokens\\
 & LaS readout 1 & Tokenwise bias-free linear & $128\rightarrow32$\\
 & LaS readout 2 & Dropout, flatten & $p=0.1$; 1600\\
 & Neural projection & Linear & $1600\rightarrow256$\\
 & Text projection & Linear & $768\rightarrow256$\\\midrule
Du-IN & Common boundary & Position; shape & Transformer output; $30\times160$ tokens\\
 & LaS readout 1 & Tokenwise bias-free linear & $160\rightarrow40$\\
 & LaS readout 2 & Dropout, flatten & $p=0.1$; 1200\\
 & Neural projection & Linear & $1200\rightarrow256$\\
 & Text projection & Linear & $768\rightarrow256$\\\midrule
Shared LaS setting & Both projections & Bias; output normalization & None; separate $\ell_2$ normalization\\
\end{longtable}}

In the Speech EEG-Conformer and Du-IN Transformer backbones, the attention and first feed-forward dropout ratios are 0.2 for CLS and 0.1 for LaS; the second feed-forward dropout is zero in both paths. The token outputs in the two tables identify the corresponding layer, while their values depend on the branch configuration.

Speech EEGNet applies multiclass cross-entropy to raw logits and class indices. The Du-IN, CNN-BiGRU, and EEG-Conformer CLS implementations call \texttt{F.cross\_entropy} on sigmoid outputs with one-hot targets, following the official Du-IN implementation~\citep{zheng2024duin}.\footnote{Official Du-IN code, commit \texttt{c2e289c}: \url{https://github.com/liulab-repository/Du-IN/tree/c2e289c353005d5ac184860909e7b5be81da31ba}. The classification heads are defined in \texttt{models/duin}, \texttt{models/cnn\_gru}, and \texttt{models/eeg\_cfmr}.}

\paragraph{Subject-specific input frontends.}
Figure~2c depicts routing by participant identity into a subject-specific input frontend, followed by the shared part of that encoder instance. Table~\ref{tab:speech-frontends} specifies the layers before the shared boundary for each architecture. The frontend outputs are restored to the original batch order. CLS and LaS are trained separately for each encoder. Shapes omit the batch dimension; convolutional channel counts are written input-to-output.

{\footnotesize
\begin{longtable}{P{0.78in}P{1.02in}P{1.17in}P{2.13in}}
\caption{Subject-specific input frontends and shared-backbone boundaries for Speech decoding.}\label{tab:speech-frontends}\\
\structhead\endfirsthead
\multicolumn{4}{l}{\footnotesize Table \ref{tab:speech-frontends} (continued)}\\
\structhead\endhead
\bottomrule\endfoot
Du-IN & Input & Shape & $10\times3000$ (channels $\times$ samples)\\
 & Segmentation & Output & 30 segments of $10\times100$\\
 & Temporal Conv1d 1 & Channels; kernel; stride; padding & $10\rightarrow128$; 19; 10; 9\\
 & Normalization 1 & BatchNorm1d & 128 channels; $\epsilon=10^{-5}$; momentum 0.1\\
 & Temporal Conv1d 2 & Channels; kernel; stride; padding & $128\rightarrow128$; 3; 1; 1\\
 & Normalization 2 & BatchNorm1d & 128 channels; $\epsilon=10^{-5}$; momentum 0.1\\
 & Temporal Conv1d 3 & Channels; kernel; stride; padding & $128\rightarrow16$; 3; 1; 1\\
 & Normalization 3 & BatchNorm1d & 16 channels; $\epsilon=10^{-5}$; momentum 0.1\\
 & Frontend readout & Reshape; output & Flatten $16\times10$ per segment; 30 tokens of width 160\\
 & Shared boundary & First shared module & Temporal embedding, then Transformer\\\midrule
EEGNet & Input & Shape & $10\times750$; Conv2d input $1\times10\times750$\\
 & Temporal Conv2d & Channels; kernel; stride; padding & $1\rightarrow8$; $(1,80)$; 1; same\\
 & Normalization 1 & BatchNorm2d & 8 channels; $\epsilon=10^{-3}$; momentum 0.01\\
 & Spatial Conv2d & Channels; kernel; stride; groups & $8\rightarrow16$; $(10,1)$; 1; 8\\
 & Normalization 2 & BatchNorm2d; output & 16 channels; $\epsilon=10^{-3}$; momentum 0.01; $16\times1\times750$\\
 & Shared boundary & First shared module & ELU, then pooling, separable convolution, and readout\\\midrule
CNN-BiGRU & Input & Shape & $10\times90$ (channel-first)\\
 & Temporal Conv1d & Channels; kernel; stride; padding & $10\rightarrow128$; 5; 3; 2\\
 & Normalization & BatchNorm1d; output & 128 channels; $\epsilon=10^{-5}$; momentum 0.1; transpose to 30 tokens of width 128\\
 & Shared boundary & First shared module & Bidirectional GRU, then branch-specific readout\\\midrule
EEG-Conformer & Input & Shape & $10\times750$; Conv2d input $1\times10\times750$\\
 & Temporal Conv2d & Channels; kernel; stride; padding & $1\rightarrow128$; $(1,25)$; $(1,1)$; $(0,12)$\\
 & Spatial Conv2d & Channels; kernel; stride; padding & $128\rightarrow128$; $(10,1)$; 1; 0\\
 & Normalization & BatchNorm2d; output & 128 channels; $\epsilon=10^{-5}$; momentum 0.1; $128\times1\times750$\\
 & Shared boundary & First shared module & ELU, then pooling, token projection, and Transformer\\
\end{longtable}}

The Du-IN, CNN-BiGRU, and EEG-Conformer frontend convolutions use $\mathrm{groups}=1$ and bias. EEGNet uses a bias-free temporal convolution and a grouped spatial convolution. Activations and pooling follow these frontends in the shared backbone. Attribution groups are defined in Appendix~\ref{app:attribution-method}.

\paragraph{Class prompts.}
The source targets are 61 Chinese words. Their English translations, listed in Table~4 of \citet{zheng2024duin}, fill \texttt{\{label\}}. Table~\ref{tab:speech-prompts} reproduces the templates before the EmbeddingGemma prefix.

{\small
\begin{longtable}{P{0.55in}P{4.75in}}
\caption{Speech P0/P2 training and held-out templates.}\label{tab:speech-prompts}\\
\toprule ID & Template text\\\midrule\endfirsthead
\multicolumn{2}{l}{\footnotesize Table \ref{tab:speech-prompts} (continued)}\\\toprule ID & Template text\\\midrule\endhead
\bottomrule\endfoot
0 (P0) & \ptext{The participant is speaking the word: \{label\}.}\\
1 & \ptext{This sEEG recording corresponds to the spoken word: \{label\}.}\\
2 & \ptext{The intended speech content is: \{label\}.}\\
3 & \ptext{The neural activity was recorded while the participant said: \{label\}.}\\
4 & \ptext{The target speech label for this intracranial neural signal is: \{label\}.}\\\midrule
S0 & \ptext{An sEEG response for spoken \{label\}.}\\
S1 & \ptext{A neural signal for saying \{label\}.}\\
S2 & \ptext{Brain activity from uttering \{label\}.}\\
S3 & \ptext{An intracranial response to the word \{label\}.}\\
S4 & \ptext{A speech-related neural pattern for \{label\}.}\\
\end{longtable}}

\begin{table}[H]
\centering\small
\caption{Speech parameter counts and classification forward FLOPs.}
\label{tab:speech-complexity}
\begin{tabular*}{\linewidth}{@{\extracolsep{\fill}}lccccc@{}}
\toprule
\multirow{2}{*}{Encoder} & \multirow{2}{*}{Duration (s)} & \multicolumn{2}{c}{Params (M)} & \multicolumn{2}{c}{Forward GFLOPs}\\
\cmidrule(lr){3-4}\cmidrule(lr){5-6}
& & CLS & LaS & CLS & LaS\\\midrule
EEGNet & 3.0 & 0.02 & 0.13 & 0.01 & 0.02\\
CNN-BiGRU & 3.0 & 0.62 & 0.84 & 0.03 & 0.05\\
EEG-Conformer & 3.0 & 4.18 & 3.96 & 0.45 & 0.47\\
Du-IN & 3.0 & 5.05 & 4.94 & 0.27 & 0.29\\\bottomrule
\end{tabular*}
\par\smallskip\raggedright The complete set of template-0 (T0) candidate prompts is encoded once, with a text-encoding cost of 811.1166 GFLOPs.
\end{table}

\subsection{Metrics and Aggregation}
\label{app:metrics}

For $N$ evaluation trials within a participant, accuracy is
\begin{equation}
\operatorname{Accuracy}=\frac{1}{N}\sum_{i=1}^{N}\mathbf 1[\hat y_i=y_i].
\end{equation}
For MI, we also report Cohen's $\kappa=(p_o-p_e)/(1-p_e)$, where $p_o$ is that participant's trial accuracy and $p_e=\sum_c p_{\mathrm{true}}(c)p_{\mathrm{pred}}(c)$ is determined by the marginal proportions of true and predicted classes. SSVEP estimated information transfer rate (ITR) uses 40 candidate classes, accuracy $p$, and decision time $T=L+0.5$ s:
\begin{equation}
\operatorname{ITR}=\frac{60}{T}\left[\log_2 C+p\log_2 p+(1-p)\log_2\frac{1-p}{C-1}\right],\qquad C=40.
\end{equation}
Here $L$ is the window duration, and $0\log_2 0$ takes its continuous limit of zero. ITR is computed within each participant before equal-weight averaging across participants.

For MI trial prediction, LaS averages window logits; CLS and FBCSP-LDA use majority voting over window classes. A voting tie is resolved by the tied classes' mean scores, then by a fixed class index if necessary. Accuracy and $\kappa$ are computed from each participant's combined five-fold held-out trials and averaged equally across the 21 participants. SSVEP accuracy is computed for each of the 35 evaluation participants and then averaged equally. The Speech main comparison uses five independent training runs on the same physical-trial split: run results are first averaged within each participant, then averaged equally across 12 participants.

\subsection{Adaptation Algorithms for MI and SSVEP}
\label{app:adaptation}

The neural methods compared in Figures~4a and 4b initialize from LaS-trained source decoders and update different modules using class-label supervision during adaptation. Let $E_0$ be the source neural encoder, $h=E_0(X)$ its representation, and $W_0$ the source neural projection. For the current candidate set $\mathcal C$, $t_c$ denotes a fixed unit-length text prototype, and $\nu$ denotes $\ell_2$ normalization. Methods using candidate-class scores optimize multiclass cross-entropy over $\mathcal C$. Tables~\ref{tab:mi-integration} and \ref{tab:ssvep-integration} locate the encoder readouts; projected features are read after that model's neural projection.

\paragraph{Probe heads.}
A linear probe freezes the source feature extractor, reads an encoder or projected feature $r(X)$, and trains a linear classifier $g(r)=Hr+b$. An MLP probe trains a nonlinear head $g_\theta(r)$ on the same frozen features. Encoder-feature and projected-feature variants differ in their readout position. Both heads use cross-entropy on calibration labels, and their output logits are used at inference.

\paragraph{Adaptation using text references.}
LP++ combines learned class weights with text representations over frozen vision-language features~\citep{huang2024lpplusplus}. Here the source projected neural feature $v=\nu(hW_0)$ and prototypes $t_c$ are fixed, while class weights $w_c$ and coefficients $\alpha_c$ are trained with scores
\begin{equation}
z_c(X)=v^\top(w_c+\alpha_c t_c).
\end{equation}
ProLIP motivates adapting a projection while penalizing deviation from its source value~\citep{fahes2026prolip}. In this neural-decoding implementation, the neural encoder and text prototypes are fixed, the neural projection $W$ starts from $W_0$, and $\gamma\langle\nu(hW),t_c\rangle$ provides class logits. Its calibration objective adds $\lambda\lVert W-W_0\rVert_F^2$ to cross-entropy.

\paragraph{DTS-ProLIP.}
We constrain the trainable neural-projection offset to the discriminative subspace of the current candidate prototypes. Let $W_0\in\mathbb R^{d_f\times d}$. After centering $\{t_c\}_{c\in\mathcal C}$, let $U\in\mathbb R^{d\times r}$ contain an orthonormal basis for their span, with $U^\top U=I_r$. Only $A\in\mathbb R^{d_f\times r}$ is trained, starting at zero:
\begin{equation}
W(A)=W_0+AU^\top,\qquad A_0=0.
\end{equation}
The neural encoder, text prototypes, and source projection remain fixed. The calibration objective is
\begin{equation}
\mathcal L(A)=\frac{1}{M}\sum_{i=1}^{M}\operatorname{CE}\!\left(
\left\{\gamma\left\langle\nu(h_iW(A)),t_c\right\rangle\right\}_{c\in\mathcal C},y_i\right)
+\lambda\lVert W(A)-W_0\rVert_F^2.
\end{equation}
Because $U$ has orthonormal columns, $\lVert W(A)-W_0\rVert_F^2=\lVert A\rVert_F^2$.

\paragraph{Updates beyond a probe.}
MLP fine-tuning updates an encoder or projection branch along with its MLP readout head. Full fine-tuning starts from the source model and updates the corresponding neural modules in its encoder- or projection-centered variant. The names \emph{Encoder} and \emph{Projection} identify the feature readout and update path. Classification uses the trained head or the corresponding candidate-prototype scores.

\clearpage
\endgroup
\setcounter{table}{0}
\setcounter{figure}{0}
\setcounter{equation}{0}
\renewcommand{\thetable}{B.\arabic{table}}
\renewcommand{\thefigure}{B.\arabic{figure}}
\section{Additional Experimental Results}
\label{app:additional-results}

This appendix reports subject-level results for Table~1 and Figure~3 and aggregate results for the other analyses, using the evaluation units defined for each experiment. Dataset splits and model-specific protocols are specified in Appendix~\ref{app:experimental-details}, attribution methods and loss references in Appendix~\ref{app:theory}, and SSVEP window analyses in Appendix~D.

\subsection{Main Results, Candidate-Set Reuse, and Adaptation}
\label{app:b-main-adaptation}

\paragraph{SSVEP accuracy and estimated ITR under initial-window evaluation.}
Table~\ref{tab:b-ssvep-itrs} reports accuracy and estimated ITR at five IW durations. Neural CLS baselines use IW training, LaS (MTSNet) uses IW training with inter-trial recombination, and every method is evaluated on the initial Physical window. FBCCA directly classifies the window~\citep{chen2015filter}. This experiment has a different training protocol from the RW-to-IW paired comparison in Table~1. Stars in Table~\ref{tab:b-ssvep-itrs} denote unadjusted, two-sided paired $t$-tests, performed separately for accuracy and estimated ITR.
\begin{table}[!htbp]
\centering
\caption{SSVEP subject-mean accuracy (\%) and estimated ITR (bits/min) across five initial-window durations. Stars indicate unadjusted paired-test significance: $^{*}p<.05$, $^{**}p<.01$, and $^{***}p<.001$.}
\label{tab:b-ssvep-itrs}
\includegraphics[width=\linewidth]{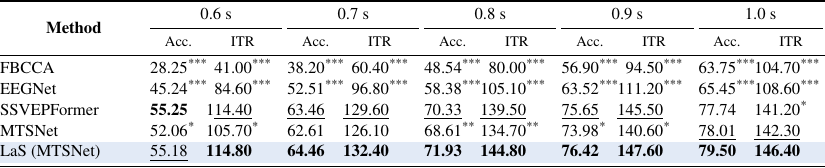}
\end{table}

\paragraph{Subject-level data for Table 1.}
The MI table follows the same five out-of-fold trial splits for all methods, including the separately trained FBCSP-LDA reference~\citep{ang2008filter}. In the following tables, C denotes supervised CLS and L denotes LaS. EEGNet, DeepConvNet1s, and EEG-Conformer follow their respective neural-encoder implementations~\citep{lawhern2018eegnet,schirrmeister2017deep,song2023eeg}. Values in paired C/L columns are subject-level metrics for the same participant.

{\scriptsize\setlength{\tabcolsep}{2.2pt}
\begin{longtable}{@{}lrrrrrrr@{}}
\caption{MI four-class trial accuracy by subject (\%). C and L denote supervised CLS and LaS.}\label{tab:b-mi-acc}\\
\toprule
Subject & FBCSP & EEGNet C & EEGNet L & DeepConv C & DeepConv L & Conformer C & Conformer L \\
\midrule
\endfirsthead
\toprule
Subject & FBCSP & EEGNet C & EEGNet L & DeepConv C & DeepConv L & Conformer C & Conformer L \\
\midrule
\endhead
S01 & 32.00 & 41.17 & 38.67 & 39.17 & 42.17 & 42.33 & 42.00 \\
S02 & 36.83 & 42.33 & 37.00 & 39.83 & 44.33 & 44.17 & 46.83 \\
S03 & 27.17 & 35.17 & 32.67 & 43.83 & 44.67 & 44.00 & 44.50 \\
S04 & 28.91 & 39.84 & 36.25 & 38.44 & 38.59 & 36.88 & 35.94 \\
S05 & 28.91 & 38.91 & 40.63 & 37.97 & 40.47 & 40.63 & 43.91 \\
S06 & 30.67 & 36.00 & 37.00 & 35.33 & 34.33 & 37.00 & 38.33 \\
S07 & 37.17 & 42.67 & 45.50 & 47.00 & 53.00 & 50.33 & 56.33 \\
S08 & 29.50 & 39.67 & 41.00 & 37.00 & 36.83 & 39.83 & 42.67 \\
S09 & 39.67 & 44.33 & 48.50 & 49.17 & 56.00 & 49.83 & 54.17 \\
S10 & 30.33 & 35.17 & 34.50 & 38.67 & 37.67 & 36.17 & 34.50 \\
S11 & 31.33 & 36.33 & 40.17 & 36.17 & 34.67 & 39.33 & 41.67 \\
S12 & 31.00 & 33.17 & 36.50 & 34.33 & 31.67 & 38.50 & 38.67 \\
S13 & 31.29 & 38.39 & 37.10 & 38.06 & 39.52 & 40.97 & 41.61 \\
S14 & 27.34 & 38.28 & 33.13 & 31.72 & 31.56 & 32.34 & 32.03 \\
S15 & 34.69 & 38.44 & 37.34 & 39.06 & 42.03 & 44.06 & 42.34 \\
S16 & 28.91 & 31.88 & 36.56 & 30.31 & 28.75 & 35.16 & 37.19 \\
S17 & 36.25 & 44.22 & 44.22 & 40.94 & 47.97 & 47.34 & 49.69 \\
S18 & 28.59 & 38.91 & 36.09 & 35.16 & 36.88 & 39.84 & 41.25 \\
S19 & 53.13 & 49.53 & 61.09 & 55.31 & 62.81 & 72.19 & 72.34 \\
S20 & 31.72 & 32.19 & 32.03 & 34.22 & 36.09 & 32.97 & 34.84 \\
S21 & 26.41 & 29.69 & 26.25 & 31.41 & 31.41 & 31.09 & 32.66 \\
\bottomrule
\end{longtable}}

{\scriptsize\setlength{\tabcolsep}{2.2pt}
\begin{longtable}{@{}lrrrrrrr@{}}
\caption{MI four-class Cohen's $\kappa$ by subject.}\label{tab:b-mi-kappa}\\
\toprule
Subject & FBCSP & EEGNet C & EEGNet L & DeepConv C & DeepConv L & Conformer C & Conformer L \\
\midrule
\endfirsthead
\toprule
Subject & FBCSP & EEGNet C & EEGNet L & DeepConv C & DeepConv L & Conformer C & Conformer L \\
\midrule
\endhead
S01 & 0.09 & 0.22 & 0.18 & 0.19 & 0.23 & 0.23 & 0.23 \\
S02 & 0.16 & 0.23 & 0.16 & 0.20 & 0.26 & 0.26 & 0.29 \\
S03 & 0.03 & 0.14 & 0.10 & 0.25 & 0.26 & 0.25 & 0.26 \\
S04 & 0.05 & 0.20 & 0.15 & 0.18 & 0.18 & 0.16 & 0.15 \\
S05 & 0.05 & 0.19 & 0.21 & 0.17 & 0.21 & 0.21 & 0.25 \\
S06 & 0.08 & 0.15 & 0.16 & 0.14 & 0.12 & 0.16 & 0.18 \\
S07 & 0.16 & 0.24 & 0.27 & 0.29 & 0.37 & 0.34 & 0.42 \\
S08 & 0.06 & 0.20 & 0.21 & 0.16 & 0.16 & 0.20 & 0.24 \\
S09 & 0.20 & 0.26 & 0.31 & 0.32 & 0.41 & 0.33 & 0.39 \\
S10 & 0.07 & 0.14 & 0.13 & 0.18 & 0.17 & 0.15 & 0.13 \\
S11 & 0.08 & 0.15 & 0.20 & 0.15 & 0.13 & 0.19 & 0.22 \\
S12 & 0.08 & 0.11 & 0.15 & 0.12 & 0.09 & 0.18 & 0.18 \\
S13 & 0.08 & 0.18 & 0.16 & 0.17 & 0.19 & 0.21 & 0.22 \\
S14 & 0.03 & 0.18 & 0.11 & 0.09 & 0.09 & 0.10 & 0.09 \\
S15 & 0.13 & 0.18 & 0.16 & 0.19 & 0.23 & 0.25 & 0.23 \\
S16 & 0.05 & 0.09 & 0.15 & 0.07 & 0.05 & 0.14 & 0.16 \\
S17 & 0.15 & 0.26 & 0.26 & 0.21 & 0.31 & 0.30 & 0.33 \\
S18 & 0.05 & 0.19 & 0.15 & 0.14 & 0.16 & 0.20 & 0.22 \\
S19 & 0.38 & 0.33 & 0.48 & 0.40 & 0.50 & 0.63 & 0.63 \\
S20 & 0.09 & 0.10 & 0.09 & 0.12 & 0.15 & 0.11 & 0.13 \\
S21 & 0.02 & 0.06 & 0.02 & 0.09 & 0.09 & 0.08 & 0.10 \\
\bottomrule
\end{longtable}}

The SSVEP tables give 35 subject-level accuracies for each encoder and duration in the RW-training to IW-evaluation comparison. Neural-model identities are EEGNet, MTSNet, and SSVEPFormer~\citep{lawhern2018eegnet,lan2025mtsnet,chen2023transformer}. C and L share the same Physical evaluation trials within a row.

{\scriptsize\setlength{\tabcolsep}{3.2pt}
\begin{longtable}{@{}lrrrrrr@{}}
\caption{SSVEP subject accuracy (\%) for EEGNet after RW training and IW evaluation.}\label{tab:b-ss-eegnet}\\
\toprule
Subject & 0.6 C & 0.6 L & 0.8 C & 0.8 L & 1.0 C & 1.0 L \\
\midrule
\endfirsthead
\toprule
Subject & 0.6 C & 0.6 L & 0.8 C & 0.8 L & 1.0 C & 1.0 L \\
\midrule
\endhead
0 & 30.42 & 28.33 & 45.42 & 41.25 & 53.33 & 52.08 \\
1 & 33.75 & 28.75 & 40.83 & 39.58 & 50.00 & 54.17 \\
2 & 48.75 & 47.08 & 58.33 & 57.92 & 75.00 & 72.08 \\
3 & 36.67 & 39.58 & 64.58 & 55.83 & 70.00 & 57.50 \\
4 & 20.42 & 19.17 & 34.58 & 30.00 & 43.75 & 38.75 \\
5 & 27.50 & 32.08 & 51.67 & 47.50 & 66.25 & 59.17 \\
6 & 19.58 & 23.75 & 38.33 & 37.08 & 49.17 & 48.75 \\
7 & 15.00 & 19.58 & 24.17 & 24.17 & 36.25 & 41.67 \\
8 & 24.58 & 25.00 & 42.50 & 42.92 & 53.75 & 54.17 \\
9 & 26.67 & 32.50 & 31.67 & 39.17 & 43.75 & 53.33 \\
10 & 6.25 & 6.67 & 4.17 & 5.83 & 6.25 & 10.42 \\
11 & 22.50 & 25.42 & 37.92 & 33.75 & 53.75 & 52.08 \\
12 & 21.25 & 15.42 & 24.58 & 19.17 & 27.92 & 29.17 \\
13 & 14.58 & 17.50 & 20.42 & 20.83 & 28.33 & 20.42 \\
14 & 22.92 & 21.25 & 38.33 & 39.58 & 44.58 & 45.83 \\
15 & 16.67 & 12.92 & 25.00 & 27.08 & 37.92 & 35.83 \\
16 & 32.92 & 37.92 & 49.58 & 50.00 & 59.58 & 55.83 \\
17 & 42.50 & 36.67 & 49.58 & 44.58 & 63.33 & 61.25 \\
18 & 9.58 & 11.25 & 17.92 & 15.00 & 22.08 & 24.17 \\
19 & 19.17 & 20.83 & 39.17 & 33.33 & 52.08 & 47.50 \\
20 & 18.33 & 20.00 & 36.67 & 35.83 & 55.00 & 47.50 \\
21 & 43.33 & 42.08 & 69.58 & 64.58 & 80.83 & 78.33 \\
22 & 25.42 & 27.50 & 53.75 & 50.42 & 62.50 & 58.75 \\
23 & 38.75 & 32.92 & 57.92 & 61.25 & 76.67 & 75.00 \\
24 & 28.33 & 25.00 & 40.83 & 41.67 & 45.83 & 50.83 \\
25 & 30.42 & 37.92 & 49.58 & 50.42 & 55.42 & 43.75 \\
26 & 45.00 & 41.25 & 70.42 & 66.25 & 81.25 & 79.58 \\
27 & 38.75 & 35.00 & 55.83 & 61.67 & 65.83 & 62.92 \\
28 & 13.33 & 12.50 & 16.67 & 17.50 & 25.42 & 30.42 \\
29 & 16.25 & 22.08 & 38.75 & 38.75 & 48.33 & 40.42 \\
30 & 37.92 & 37.08 & 48.75 & 55.83 & 70.00 & 61.25 \\
31 & 47.92 & 47.92 & 69.17 & 67.50 & 78.75 & 75.42 \\
32 & 16.25 & 17.50 & 18.75 & 21.25 & 27.50 & 29.58 \\
33 & 34.58 & 34.17 & 46.67 & 49.17 & 61.25 & 55.83 \\
34 & 10.42 & 9.58 & 25.42 & 27.92 & 35.00 & 41.25 \\
\bottomrule
\end{longtable}}

{\scriptsize\setlength{\tabcolsep}{3.2pt}
\begin{longtable}{@{}lrrrrrr@{}}
\caption{SSVEP subject accuracy (\%) for MTSNet after RW training and IW evaluation.}\label{tab:b-ss-mts}\\
\toprule
Subject & 0.6 C & 0.6 L & 0.8 C & 0.8 L & 1.0 C & 1.0 L \\
\midrule
\endfirsthead
\toprule
Subject & 0.6 C & 0.6 L & 0.8 C & 0.8 L & 1.0 C & 1.0 L \\
\midrule
\endhead
0 & 35.83 & 35.83 & 54.58 & 48.33 & 62.08 & 61.67 \\
1 & 56.67 & 55.42 & 72.92 & 73.75 & 82.92 & 85.83 \\
2 & 65.00 & 69.58 & 87.08 & 87.92 & 94.17 & 93.33 \\
3 & 62.08 & 63.33 & 82.92 & 75.42 & 87.50 & 88.75 \\
4 & 35.83 & 39.58 & 70.42 & 63.75 & 77.50 & 77.50 \\
5 & 31.25 & 33.75 & 56.25 & 63.75 & 72.08 & 68.75 \\
6 & 32.50 & 37.50 & 55.00 & 67.08 & 71.25 & 75.83 \\
7 & 22.50 & 19.17 & 34.17 & 36.67 & 47.92 & 43.75 \\
8 & 31.25 & 33.75 & 42.50 & 45.42 & 62.92 & 63.75 \\
9 & 27.50 & 32.92 & 50.83 & 51.67 & 67.08 & 67.08 \\
10 & 7.92 & 11.67 & 8.75 & 13.33 & 14.17 & 16.25 \\
11 & 26.25 & 28.33 & 55.83 & 52.08 & 69.17 & 63.75 \\
12 & 19.58 & 20.42 & 34.17 & 27.92 & 55.83 & 47.92 \\
13 & 22.92 & 34.58 & 29.17 & 47.50 & 38.75 & 59.17 \\
14 & 27.50 & 31.25 & 41.67 & 45.83 & 55.42 & 55.83 \\
15 & 14.17 & 18.33 & 31.67 & 40.83 & 51.67 & 49.17 \\
16 & 40.83 & 37.92 & 53.75 & 54.58 & 63.75 & 61.67 \\
17 & 42.92 & 48.75 & 59.58 & 60.83 & 68.33 & 72.50 \\
18 & 7.08 & 9.17 & 12.08 & 14.58 & 24.17 & 25.42 \\
19 & 32.08 & 37.50 & 53.33 & 51.25 & 65.00 & 68.33 \\
20 & 17.92 & 18.75 & 48.33 & 48.33 & 62.92 & 64.17 \\
21 & 50.42 & 54.17 & 77.92 & 82.92 & 90.00 & 90.42 \\
22 & 43.75 & 39.17 & 70.83 & 71.67 & 79.17 & 85.83 \\
23 & 39.17 & 43.75 & 65.83 & 69.58 & 73.75 & 83.75 \\
24 & 43.33 & 39.17 & 52.08 & 61.67 & 70.00 & 73.33 \\
25 & 47.08 & 50.42 & 67.50 & 69.17 & 80.42 & 77.50 \\
26 & 46.25 & 51.67 & 74.17 & 74.58 & 85.83 & 87.92 \\
27 & 50.83 & 47.92 & 77.92 & 78.33 & 85.00 & 88.33 \\
28 & 16.25 & 15.83 & 24.17 & 21.25 & 37.50 & 35.00 \\
29 & 29.58 & 35.42 & 51.25 & 60.83 & 70.42 & 67.50 \\
30 & 50.83 & 49.58 & 71.25 & 72.50 & 82.50 & 81.25 \\
31 & 54.58 & 60.42 & 83.33 & 82.08 & 91.25 & 95.83 \\
32 & 17.08 & 17.92 & 22.08 & 22.50 & 29.58 & 33.33 \\
33 & 37.92 & 40.83 & 58.33 & 61.67 & 70.83 & 77.50 \\
34 & 14.58 & 15.83 & 32.92 & 31.25 & 47.92 & 49.58 \\
\bottomrule
\end{longtable}}

{\scriptsize\setlength{\tabcolsep}{3.2pt}
\begin{longtable}{@{}lrrrrrr@{}}
\caption{SSVEP subject accuracy (\%) for SSVEPFormer after RW training and IW evaluation.}\label{tab:b-ss-former}\\
\toprule
Subject & 0.6 C & 0.6 L & 0.8 C & 0.8 L & 1.0 C & 1.0 L \\
\midrule
\endfirsthead
\toprule
Subject & 0.6 C & 0.6 L & 0.8 C & 0.8 L & 1.0 C & 1.0 L \\
\midrule
\endhead
0 & 35.42 & 42.50 & 44.58 & 53.75 & 62.50 & 66.67 \\
1 & 51.25 & 54.17 & 74.17 & 75.00 & 80.83 & 83.33 \\
2 & 62.92 & 69.17 & 80.83 & 84.17 & 91.67 & 94.17 \\
3 & 62.50 & 52.50 & 75.00 & 78.75 & 88.75 & 87.50 \\
4 & 37.50 & 35.00 & 60.42 & 50.83 & 75.83 & 72.92 \\
5 & 31.25 & 34.58 & 57.50 & 60.83 & 67.08 & 75.42 \\
6 & 30.83 & 39.17 & 61.67 & 49.58 & 69.17 & 72.50 \\
7 & 17.08 & 17.08 & 35.00 & 30.83 & 43.33 & 47.92 \\
8 & 32.08 & 34.17 & 46.25 & 45.42 & 67.08 & 65.42 \\
9 & 32.08 & 32.50 & 46.67 & 54.58 & 63.75 & 72.92 \\
10 & 9.17 & 8.75 & 10.00 & 9.58 & 15.42 & 17.08 \\
11 & 24.17 & 27.50 & 42.92 & 52.08 & 63.33 & 74.17 \\
12 & 15.00 & 27.92 & 27.92 & 42.50 & 34.17 & 54.58 \\
13 & 25.83 & 29.17 & 42.08 & 39.58 & 52.08 & 46.67 \\
14 & 21.25 & 26.25 & 44.17 & 45.83 & 54.58 & 56.25 \\
15 & 17.08 & 21.67 & 36.67 & 31.67 & 48.75 & 55.00 \\
16 & 37.92 & 33.75 & 54.17 & 56.25 & 65.83 & 65.42 \\
17 & 47.50 & 48.33 & 59.58 & 64.17 & 71.67 & 73.33 \\
18 & 7.92 & 8.33 & 12.08 & 17.08 & 20.42 & 25.42 \\
19 & 31.25 & 30.83 & 47.08 & 54.17 & 67.08 & 66.25 \\
20 & 17.92 & 20.83 & 44.58 & 48.33 & 65.42 & 70.42 \\
21 & 47.92 & 55.42 & 77.50 & 81.67 & 87.50 & 91.67 \\
22 & 35.42 & 44.58 & 65.00 & 73.33 & 82.92 & 83.75 \\
23 & 42.50 & 37.08 & 69.58 & 66.25 & 83.33 & 81.67 \\
24 & 40.00 & 42.92 & 57.50 & 58.75 & 77.50 & 71.25 \\
25 & 45.83 & 46.25 & 71.25 & 67.50 & 78.75 & 73.75 \\
26 & 50.00 & 52.08 & 72.92 & 76.25 & 87.08 & 90.00 \\
27 & 45.83 & 50.42 & 77.08 & 80.83 & 88.75 & 88.33 \\
28 & 11.67 & 15.83 & 17.50 & 29.58 & 27.50 & 39.58 \\
29 & 30.83 & 30.00 & 60.00 & 54.58 & 67.50 & 66.25 \\
30 & 41.67 & 54.58 & 67.50 & 69.17 & 78.33 & 81.25 \\
31 & 61.67 & 63.33 & 82.50 & 83.75 & 94.58 & 92.92 \\
32 & 15.00 & 16.25 & 21.67 & 27.50 & 30.83 & 32.08 \\
33 & 38.33 & 40.42 & 57.08 & 57.92 & 76.67 & 75.00 \\
34 & 16.67 & 14.58 & 30.00 & 35.83 & 52.92 & 51.67 \\
\bottomrule
\end{longtable}}

The Speech table gives the 12 subject-level accuracies for the four neural encoders used in Table~1. Du-IN and the other supervised baselines follow the Speech benchmark implementation~\citep{zheng2024duin}; the LaS values use the corresponding encoder and text-conditioned classifier.

{\scriptsize\setlength{\tabcolsep}{2pt}
\begin{longtable}{@{}lrrrrrrrr@{}}
\caption{Speech subject accuracy (\%) across four neural encoders.}\label{tab:b-speech-subj}\\
\toprule
Subject & EEGNet C & EEGNet L & BiGRU C & BiGRU L & Conformer C & Conformer L & Du-IN C & Du-IN L \\
\midrule
\endfirsthead
\toprule
Subject & EEGNet C & EEGNet L & BiGRU C & BiGRU L & Conformer C & Conformer L & Du-IN C & Du-IN L \\
\midrule
\endhead
S001 & 9.69 & 24.06 & 51.56 & 70.94 & 60.31 & 73.75 & 73.75 & 77.81 \\
S002 & 8.49 & 18.90 & 32.05 & 41.92 & 44.11 & 46.03 & 46.58 & 47.95 \\
S003 & 6.69 & 4.46 & 25.63 & 38.72 & 32.03 & 35.93 & 38.44 & 45.68 \\
S004 & 8.20 & 19.02 & 42.62 & 66.56 & 59.02 & 69.84 & 68.85 & 76.72 \\
S005 & 11.48 & 22.62 & 50.82 & 69.18 & 61.31 & 73.11 & 73.11 & 77.38 \\
S006 & 3.72 & 8.31 & 29.80 & 46.70 & 39.26 & 40.40 & 47.56 & 50.72 \\
S007 & 4.96 & 11.57 & 30.03 & 42.15 & 37.47 & 45.18 & 46.01 & 50.69 \\
S008 & 6.28 & 12.02 & 36.61 & 49.18 & 46.72 & 55.19 & 52.46 & 59.84 \\
S009 & 6.96 & 14.48 & 39.00 & 53.48 & 49.03 & 55.15 & 51.25 & 62.12 \\
S010 & 2.76 & 9.67 & 23.48 & 33.15 & 27.90 & 28.45 & 38.40 & 35.91 \\
S011 & 6.99 & 23.40 & 43.16 & 69.60 & 62.31 & 67.17 & 71.12 & 73.86 \\
S012 & 9.29 & 16.94 & 31.15 & 60.11 & 48.63 & 52.46 & 55.74 & 67.21 \\
\bottomrule
\end{longtable}}

\paragraph{MI candidate-set reuse in Figure 3.}
Figure~3 compares a supervised classifier trained for each candidate set with a LaS model whose four-class checkpoint is reused with the corresponding candidates. Classes 1--4 denote thumb, index, middle, and pinky, respectively. Each table retains the 21 paired subject accuracies; C and L again denote supervised CLS and LaS.

{\scriptsize\setlength{\tabcolsep}{2pt}
\begin{longtable}{@{}lrrrrrrrr@{}}
\caption{MI candidate-set reuse: EEGNet subject accuracy (\%).}\label{tab:b-subset-eegnet}\\
\toprule
Subject & 1--4 C & 1--4 L & 2--3 C & 2--3 L & 1--3--4 C & 1--3--4 L & 1--2--3 C & 1--2--3 L \\
\midrule
\endfirsthead
\toprule
Subject & 1--4 C & 1--4 L & 2--3 C & 2--3 L & 1--3--4 C & 1--3--4 L & 1--2--3 C & 1--2--3 L \\
\midrule
\endhead
S01 & 76.00 & 76.00 & 61.67 & 63.33 & 54.22 & 52.89 & 43.11 & 40.22 \\
S02 & 74.00 & 76.33 & 66.00 & 65.33 & 44.44 & 46.67 & 45.33 & 42.67 \\
S03 & 70.00 & 65.00 & 53.00 & 50.67 & 50.22 & 42.00 & 40.89 & 36.89 \\
S04 & 73.44 & 69.38 & 52.19 & 53.44 & 47.92 & 47.08 & 45.83 & 42.71 \\
S05 & 79.06 & 74.69 & 55.63 & 51.88 & 53.13 & 51.46 & 41.25 & 41.67 \\
S06 & 69.33 & 67.33 & 59.33 & 56.67 & 47.11 & 47.56 & 42.67 & 44.00 \\
S07 & 82.67 & 83.00 & 70.33 & 71.00 & 61.11 & 55.56 & 50.22 & 48.67 \\
S08 & 79.33 & 75.67 & 54.67 & 59.00 & 51.78 & 54.89 & 47.33 & 47.11 \\
S09 & 76.00 & 77.33 & 58.33 & 59.33 & 53.33 & 57.78 & 45.78 & 52.67 \\
S10 & 70.00 & 66.00 & 55.33 & 59.67 & 47.33 & 44.67 & 42.22 & 40.44 \\
S11 & 62.67 & 65.67 & 49.00 & 51.00 & 45.33 & 49.78 & 41.33 & 42.44 \\
S12 & 67.67 & 68.67 & 58.67 & 56.67 & 46.67 & 48.44 & 41.33 & 43.56 \\
S13 & 75.48 & 72.58 & 57.42 & 61.29 & 49.46 & 49.89 & 42.15 & 42.58 \\
S14 & 60.63 & 61.88 & 57.19 & 57.81 & 47.92 & 41.46 & 41.46 & 40.21 \\
S15 & 61.56 & 61.25 & 59.38 & 60.63 & 48.13 & 46.67 & 48.54 & 47.92 \\
S16 & 66.88 & 66.25 & 51.88 & 55.31 & 42.71 & 46.88 & 37.71 & 38.75 \\
S17 & 78.44 & 75.31 & 61.56 & 65.31 & 57.92 & 57.71 & 50.21 & 46.88 \\
S18 & 66.25 & 68.75 & 49.69 & 54.69 & 46.04 & 49.17 & 42.08 & 38.54 \\
S19 & 96.25 & 96.88 & 79.06 & 82.81 & 70.63 & 81.46 & 59.38 & 59.17 \\
S20 & 68.13 & 65.31 & 51.56 & 48.75 & 42.92 & 43.96 & 37.08 & 35.63 \\
S21 & 60.63 & 54.06 & 50.63 & 51.25 & 39.79 & 37.50 & 35.42 & 33.33 \\
\bottomrule
\end{longtable}}

{\scriptsize\setlength{\tabcolsep}{2pt}
\begin{longtable}{@{}lrrrrrrrr@{}}
\caption{MI candidate-set reuse: DeepConvNet1s subject accuracy (\%).}\label{tab:b-subset-deep}\\
\toprule
Subject & 1--4 C & 1--4 L & 2--3 C & 2--3 L & 1--3--4 C & 1--3--4 L & 1--2--3 C & 1--2--3 L \\
\midrule
\endfirsthead
\toprule
Subject & 1--4 C & 1--4 L & 2--3 C & 2--3 L & 1--3--4 C & 1--3--4 L & 1--2--3 C & 1--2--3 L \\
\midrule
\endhead
S01 & 63.33 & 76.33 & 55.67 & 59.67 & 50.67 & 55.11 & 39.56 & 47.78 \\
S02 & 67.33 & 76.00 & 66.00 & 67.00 & 48.22 & 51.11 & 40.89 & 50.67 \\
S03 & 67.00 & 75.00 & 45.67 & 54.33 & 55.33 & 58.67 & 39.11 & 47.56 \\
S04 & 69.69 & 75.63 & 48.13 & 52.50 & 48.54 & 51.46 & 35.00 & 40.83 \\
S05 & 71.56 & 78.75 & 52.50 & 52.81 & 50.83 & 53.96 & 39.58 & 42.71 \\
S06 & 55.33 & 64.00 & 52.67 & 54.67 & 44.44 & 44.89 & 37.33 & 43.56 \\
S07 & 78.67 & 85.67 & 72.33 & 74.67 & 59.78 & 66.22 & 42.67 & 58.44 \\
S08 & 63.00 & 72.67 & 56.33 & 52.00 & 49.56 & 49.78 & 41.33 & 48.22 \\
S09 & 65.67 & 77.33 & 51.67 & 63.67 & 57.56 & 65.33 & 45.11 & 59.56 \\
S10 & 71.67 & 77.00 & 54.00 & 58.00 & 48.44 & 50.22 & 36.67 & 43.78 \\
S11 & 58.33 & 61.33 & 52.33 & 52.00 & 45.11 & 46.44 & 41.33 & 40.67 \\
S12 & 55.67 & 61.67 & 56.67 & 54.67 & 45.78 & 41.33 & 40.00 & 43.11 \\
S13 & 68.71 & 75.16 & 49.35 & 58.39 & 49.89 & 51.40 & 35.70 & 48.39 \\
S14 & 53.44 & 58.13 & 49.38 & 53.75 & 40.21 & 41.67 & 35.42 & 38.54 \\
S15 & 61.88 & 65.00 & 59.69 & 64.69 & 50.42 & 51.04 & 41.88 & 55.83 \\
S16 & 51.56 & 55.63 & 47.81 & 50.94 & 43.13 & 37.92 & 38.75 & 39.17 \\
S17 & 70.63 & 78.44 & 56.25 & 64.69 & 53.96 & 61.46 & 38.75 & 47.92 \\
S18 & 65.00 & 69.69 & 50.31 & 52.81 & 46.25 & 48.54 & 32.50 & 43.13 \\
S19 & 80.00 & 96.88 & 59.38 & 80.63 & 69.38 & 79.17 & 49.38 & 66.04 \\
S20 & 60.63 & 69.69 & 49.69 & 50.94 & 42.71 & 47.92 & 34.38 & 36.46 \\
S21 & 55.00 & 59.38 & 46.88 & 52.81 & 39.38 & 41.25 & 30.21 & 38.33 \\
\bottomrule
\end{longtable}}

{\scriptsize\setlength{\tabcolsep}{2pt}
\begin{longtable}{@{}lrrrrrrrr@{}}
\caption{MI candidate-set reuse: EEG-Conformer subject accuracy (\%).}\label{tab:b-subset-conf}\\
\toprule
Subject & 1--4 C & 1--4 L & 2--3 C & 2--3 L & 1--3--4 C & 1--3--4 L & 1--2--3 C & 1--2--3 L \\
\midrule
\endfirsthead
\toprule
Subject & 1--4 C & 1--4 L & 2--3 C & 2--3 L & 1--3--4 C & 1--3--4 L & 1--2--3 C & 1--2--3 L \\
\midrule
\endhead
S01 & 77.00 & 76.67 & 60.67 & 65.33 & 56.44 & 52.00 & 46.00 & 46.89 \\
S02 & 73.33 & 76.67 & 64.33 & 67.33 & 51.56 & 53.78 & 45.33 & 53.11 \\
S03 & 77.67 & 77.67 & 56.67 & 59.67 & 58.67 & 60.00 & 45.33 & 46.00 \\
S04 & 69.38 & 72.19 & 54.69 & 53.44 & 46.88 & 48.75 & 39.17 & 39.79 \\
S05 & 78.75 & 82.81 & 54.38 & 56.56 & 55.83 & 58.54 & 44.79 & 43.13 \\
S06 & 70.67 & 71.33 & 60.67 & 61.33 & 44.44 & 49.33 & 43.56 & 43.11 \\
S07 & 87.00 & 89.00 & 69.33 & 73.67 & 64.44 & 66.67 & 54.67 & 58.67 \\
S08 & 81.00 & 78.00 & 55.00 & 57.00 & 49.33 & 54.22 & 49.11 & 49.33 \\
S09 & 78.67 & 83.00 & 60.67 & 61.33 & 64.22 & 67.33 & 53.33 & 55.78 \\
S10 & 67.33 & 68.33 & 53.67 & 56.67 & 45.56 & 45.33 & 40.67 & 41.56 \\
S11 & 67.33 & 71.67 & 55.33 & 50.00 & 48.44 & 55.78 & 44.00 & 43.33 \\
S12 & 67.67 & 70.67 & 50.00 & 57.67 & 49.78 & 49.78 & 47.56 & 47.56 \\
S13 & 77.42 & 77.10 & 56.77 & 62.90 & 55.05 & 56.13 & 46.45 & 48.39 \\
S14 & 63.75 & 65.63 & 51.88 & 54.38 & 39.17 & 40.00 & 36.46 & 38.75 \\
S15 & 72.19 & 70.00 & 62.19 & 63.44 & 55.42 & 52.29 & 56.88 & 51.67 \\
S16 & 69.69 & 71.56 & 49.69 & 53.75 & 47.92 & 50.21 & 40.83 & 41.88 \\
S17 & 80.31 & 80.63 & 55.63 & 65.00 & 59.38 & 63.75 & 47.29 & 51.67 \\
S18 & 73.13 & 73.75 & 55.94 & 56.88 & 51.25 & 53.96 & 44.17 & 45.42 \\
S19 & 98.75 & 99.38 & 76.56 & 91.56 & 85.42 & 89.38 & 63.54 & 70.21 \\
S20 & 68.75 & 69.38 & 53.13 & 47.50 & 45.83 & 47.08 & 35.63 & 34.58 \\
S21 & 62.19 & 63.13 & 48.75 & 51.25 & 42.29 & 44.38 & 37.29 & 37.92 \\
\bottomrule
\end{longtable}}

\paragraph{MI cross-session adaptation in Figure 4b.}
Table~\ref{tab:b-mi-adapt} reports subject-mean trial accuracy and between-subject SD for nine methods in the three-class and binary tasks, including LP++ and ProLIP~\citep{huang2024lpplusplus,fahes2026prolip}. Direct inference is evaluated on the Base block and adapted models on the Fine block. In Figures~4a and~4b, ``MLP probe'' uses encoder features, and ``Full fine-tuning'' denotes the Projection variant described in Appendix~\ref{app:adaptation}.

{\scriptsize\setlength{\tabcolsep}{4pt}
\begin{longtable}{@{}lrr@{}}
\caption{MI cross-session adaptation: subject-mean trial accuracy $\pm$ between-subject SD (\%).}\label{tab:b-mi-adapt}\\
\toprule
Method & 3-class & 2-class \\
\midrule
\endfirsthead
\toprule
Method & 3-class & 2-class \\
\midrule
\endhead
Direct inference & 46.43 \(\pm\) 9.14 & 64.24 \(\pm\) 9.56 \\
Linear Probe & 48.69 \(\pm\) 10.08 & 66.63 \(\pm\) 9.80 \\
LP++ & 48.67 \(\pm\) 9.81 & 65.76 \(\pm\) 11.08 \\
ProLIP & 49.13 \(\pm\) 9.67 & 65.55 \(\pm\) 11.48 \\
DTS-ProLIP & 49.94 \(\pm\) 9.36 & 65.91 \(\pm\) 11.93 \\
MLP Probe Encoder & 50.23 \(\pm\) 10.39 & 67.01 \(\pm\) 11.06 \\
MLP Probe Projection & 49.44 \(\pm\) 10.72 & 66.33 \(\pm\) 10.56 \\
Full Encoder & 55.34 \(\pm\) 12.87 & 75.14 \(\pm\) 10.83 \\
Full Projection & 55.48 \(\pm\) 13.08 & 76.35 \(\pm\) 10.84 \\
\bottomrule
\end{longtable}}

\paragraph{SSVEP subject adaptation in Figure 4a.}
Table~\ref{tab:b-ss-adapt} reports leave-one-block validation accuracy for eight neural adaptation methods and three TRCA configurations. K is the number of calibration blocks. TRCA methods require at least two blocks under this protocol, so their K=1 cells are unavailable~\citep{nakanishi2018enhancing}. The Figure~4a curve labelled Ensemble TRCA uses the single-band ensemble configuration; Table~\ref{tab:b-ss-adapt} also reports the other Ensemble TRCA configuration.

{\scriptsize\setlength{\tabcolsep}{2.5pt}
\begin{longtable}{@{}llrrrrr@{}}
\caption{SSVEP K-block adaptation: leave-one-block validation accuracy (\%).}\label{tab:b-ss-adapt}\\
\toprule
Family & Method & K=1 & K=2 & K=3 & K=4 & K=5 \\
\midrule
\endfirsthead
\toprule
Family & Method & K=1 & K=2 & K=3 & K=4 & K=5 \\
\midrule
\endhead
LaS & DTS-ProLIP & 80.38 & 81.85 & 82.81 & 83.45 & 84.17 \\
LaS & ProLIP & 79.46 & 80.14 & 80.70 & 80.94 & 81.25 \\
LaS & Linear Probe & 71.29 & 79.01 & 81.37 & 82.11 & 82.56 \\
LaS & LP++ & 75.44 & 77.58 & 79.01 & 79.71 & 80.26 \\
LaS & MLP Probe Encoder & 72.89 & 82.77 & 85.93 & 87.55 & 88.93 \\
LaS & Full Finetune Encoder & 62.92 & 77.26 & 82.88 & 85.57 & 88.07 \\
LaS & MLP Probe Projection & 71.05 & 79.92 & 83.07 & 83.89 & 84.71 \\
LaS & Full Finetune Projection & 73.18 & 85.15 & 89.24 & 90.74 & 91.93 \\
TRCA & Standard TRCA & -- & 66.35 & 80.35 & 86.07 & 88.95 \\
TRCA & Ensemble TRCA & -- & 81.92 & 88.50 & 90.90 & 92.27 \\
TRCA & Single-band Ensemble TRCA & -- & 82.33 & 88.13 & 90.45 & 91.79 \\
\bottomrule
\end{longtable}}

\subsection{Conditional Unseen-Class Decoding}
\label{app:b-zsl}

Figure~4c averages five-class conditional decoding across eight held-out class splits. Tables~\ref{tab:b-zsl-06}--\ref{tab:b-zsl-10} retain the accuracy of each of the 40 stimulus classes when it belongs to its unseen set. Class index $i$ is zero based and sorted by stimulus frequency, $f_i=8+0.2i$ Hz. Each class contributes 210 Physical trials to its evaluation. Neural results use the fixed initial window at offset 0.0 s and the five unseen candidates; FBCCA uses its corresponding signal-processing reference~\citep{wang2017benchmark,chen2015filter}.

{\scriptsize\setlength{\tabcolsep}{3pt}
\begin{longtable}{@{}rrrrrr@{}}
\caption{Five-class unseen-only accuracy (\%) by stimulus frequency with 0.6 s inputs.}\label{tab:b-zsl-06}\\
\toprule
Class & Hz & MTSNet & SSVEPFormer & EEGNet & FBCCA \\
\midrule
\endfirsthead
\toprule
Class & Hz & MTSNet & SSVEPFormer & EEGNet & FBCCA \\
\midrule
\endhead
0 & 8.0 & 72.38 & 82.86 & 78.57 & 85.71 \\
1 & 8.2 & 81.90 & 83.33 & 83.81 & 86.19 \\
2 & 8.4 & 84.76 & 79.05 & 79.05 & 86.19 \\
3 & 8.6 & 85.24 & 85.24 & 75.24 & 85.24 \\
4 & 8.8 & 82.86 & 78.57 & 81.90 & 82.38 \\
5 & 9.0 & 72.86 & 72.38 & 70.48 & 85.24 \\
6 & 9.2 & 84.76 & 86.67 & 85.71 & 86.19 \\
7 & 9.4 & 89.52 & 90.00 & 89.52 & 84.76 \\
8 & 9.6 & 84.76 & 76.19 & 85.71 & 73.81 \\
9 & 9.8 & 83.81 & 81.43 & 78.57 & 69.05 \\
10 & 10.0 & 77.62 & 86.19 & 79.52 & 68.10 \\
11 & 10.2 & 79.52 & 77.62 & 82.86 & 75.24 \\
12 & 10.4 & 90.00 & 90.00 & 82.86 & 83.33 \\
13 & 10.6 & 86.67 & 83.33 & 75.24 & 88.10 \\
14 & 10.8 & 82.38 & 78.57 & 79.05 & 87.14 \\
15 & 11.0 & 79.52 & 76.67 & 77.14 & 84.76 \\
16 & 11.2 & 86.19 & 85.71 & 78.57 & 81.43 \\
17 & 11.4 & 79.05 & 80.00 & 83.33 & 83.33 \\
18 & 11.6 & 87.14 & 80.00 & 83.33 & 85.24 \\
19 & 11.8 & 70.48 & 75.71 & 68.57 & 80.00 \\
20 & 12.0 & 69.52 & 67.14 & 72.86 & 71.90 \\
21 & 12.2 & 85.71 & 86.67 & 86.19 & 65.71 \\
22 & 12.4 & 83.81 & 80.48 & 85.71 & 60.48 \\
23 & 12.6 & 81.90 & 79.05 & 78.10 & 57.62 \\
24 & 12.8 & 83.33 & 79.05 & 71.90 & 64.29 \\
25 & 13.0 & 70.00 & 67.14 & 65.24 & 50.00 \\
26 & 13.2 & 80.48 & 83.33 & 79.05 & 59.05 \\
27 & 13.4 & 86.67 & 81.90 & 88.10 & 64.76 \\
28 & 13.6 & 74.29 & 75.24 & 73.81 & 57.62 \\
29 & 13.8 & 80.00 & 75.71 & 71.90 & 52.86 \\
30 & 14.0 & 65.71 & 66.19 & 65.24 & 48.10 \\
31 & 14.2 & 80.48 & 77.62 & 77.62 & 64.76 \\
32 & 14.4 & 82.86 & 78.10 & 78.57 & 50.48 \\
33 & 14.6 & 77.14 & 80.00 & 76.67 & 46.67 \\
34 & 14.8 & 88.10 & 87.14 & 84.29 & 61.90 \\
35 & 15.0 & 76.19 & 79.05 & 77.62 & 49.52 \\
36 & 15.2 & 80.95 & 85.71 & 77.14 & 54.29 \\
37 & 15.4 & 81.43 & 87.14 & 83.81 & 54.29 \\
38 & 15.6 & 80.00 & 77.14 & 81.43 & 60.48 \\
39 & 15.8 & 77.14 & 79.05 & 74.76 & 62.38 \\
\bottomrule
\end{longtable}}

{\scriptsize\setlength{\tabcolsep}{3pt}
\begin{longtable}{@{}rrrrrr@{}}
\caption{Five-class unseen-only accuracy (\%) by stimulus frequency with 0.8 s inputs.}\label{tab:b-zsl-08}\\
\toprule
Class & Hz & MTSNet & SSVEPFormer & EEGNet & FBCCA \\
\midrule
\endfirsthead
\toprule
Class & Hz & MTSNet & SSVEPFormer & EEGNet & FBCCA \\
\midrule
\endhead
0 & 8.0 & 89.52 & 92.86 & 84.29 & 93.81 \\
1 & 8.2 & 90.48 & 90.95 & 87.62 & 94.76 \\
2 & 8.4 & 92.86 & 90.95 & 88.10 & 90.48 \\
3 & 8.6 & 93.81 & 88.57 & 86.19 & 91.43 \\
4 & 8.8 & 90.95 & 82.86 & 89.05 & 90.95 \\
5 & 9.0 & 84.76 & 86.19 & 85.24 & 90.95 \\
6 & 9.2 & 93.81 & 92.38 & 92.38 & 88.10 \\
7 & 9.4 & 92.86 & 95.71 & 94.76 & 93.33 \\
8 & 9.6 & 93.33 & 89.05 & 92.38 & 82.86 \\
9 & 9.8 & 90.48 & 87.14 & 87.14 & 82.86 \\
10 & 10.0 & 80.00 & 87.62 & 89.05 & 80.95 \\
11 & 10.2 & 91.90 & 90.95 & 83.81 & 83.81 \\
12 & 10.4 & 94.76 & 97.14 & 90.00 & 90.48 \\
13 & 10.6 & 89.05 & 90.48 & 87.62 & 93.33 \\
14 & 10.8 & 87.62 & 88.57 & 86.19 & 91.90 \\
15 & 11.0 & 85.71 & 84.29 & 83.81 & 90.48 \\
16 & 11.2 & 87.62 & 90.95 & 90.48 & 91.43 \\
17 & 11.4 & 88.10 & 88.10 & 90.00 & 88.10 \\
18 & 11.6 & 90.48 & 90.48 & 86.19 & 93.33 \\
19 & 11.8 & 80.00 & 75.24 & 78.57 & 86.19 \\
20 & 12.0 & 78.57 & 76.67 & 79.52 & 86.19 \\
21 & 12.2 & 89.52 & 92.86 & 88.57 & 79.52 \\
22 & 12.4 & 91.43 & 90.00 & 90.95 & 78.10 \\
23 & 12.6 & 91.43 & 90.95 & 86.19 & 77.62 \\
24 & 12.8 & 88.10 & 87.62 & 82.86 & 77.14 \\
25 & 13.0 & 81.90 & 80.00 & 70.00 & 69.52 \\
26 & 13.2 & 81.90 & 86.67 & 84.76 & 74.76 \\
27 & 13.4 & 91.90 & 92.38 & 93.81 & 76.67 \\
28 & 13.6 & 87.14 & 84.76 & 84.29 & 72.86 \\
29 & 13.8 & 91.43 & 84.76 & 80.48 & 78.10 \\
30 & 14.0 & 73.33 & 70.00 & 75.24 & 67.14 \\
31 & 14.2 & 88.10 & 85.71 & 89.05 & 83.33 \\
32 & 14.4 & 89.05 & 89.05 & 88.10 & 67.14 \\
33 & 14.6 & 92.38 & 89.52 & 85.71 & 65.24 \\
34 & 14.8 & 91.43 & 90.95 & 90.95 & 80.95 \\
35 & 15.0 & 77.14 & 86.19 & 82.38 & 69.05 \\
36 & 15.2 & 84.29 & 85.24 & 85.71 & 73.33 \\
37 & 15.4 & 88.10 & 90.48 & 91.43 & 68.10 \\
38 & 15.6 & 88.10 & 86.67 & 87.62 & 77.14 \\
39 & 15.8 & 85.71 & 86.19 & 83.81 & 75.71 \\
\bottomrule
\end{longtable}}

{\scriptsize\setlength{\tabcolsep}{3pt}
\begin{longtable}{@{}rrrrrr@{}}
\caption{Five-class unseen-only accuracy (\%) by stimulus frequency with 1.0 s inputs.}\label{tab:b-zsl-10}\\
\toprule
Class & Hz & MTSNet & SSVEPFormer & EEGNet & FBCCA \\
\midrule
\endfirsthead
\toprule
Class & Hz & MTSNet & SSVEPFormer & EEGNet & FBCCA \\
\midrule
\endhead
0 & 8.0 & 90.95 & 92.86 & 92.38 & 96.67 \\
1 & 8.2 & 96.67 & 96.67 & 93.81 & 98.57 \\
2 & 8.4 & 91.43 & 91.43 & 91.43 & 95.24 \\
3 & 8.6 & 95.24 & 93.81 & 92.86 & 94.29 \\
4 & 8.8 & 92.38 & 91.90 & 92.38 & 94.76 \\
5 & 9.0 & 83.33 & 92.38 & 89.52 & 92.86 \\
6 & 9.2 & 96.19 & 96.67 & 97.14 & 91.90 \\
7 & 9.4 & 96.67 & 97.14 & 97.14 & 94.76 \\
8 & 9.6 & 94.76 & 94.76 & 95.71 & 91.43 \\
9 & 9.8 & 91.43 & 90.00 & 90.48 & 87.14 \\
10 & 10.0 & 88.10 & 94.29 & 93.81 & 89.52 \\
11 & 10.2 & 92.86 & 91.43 & 93.33 & 92.38 \\
12 & 10.4 & 98.10 & 96.67 & 96.19 & 97.62 \\
13 & 10.6 & 94.29 & 92.86 & 92.38 & 95.24 \\
14 & 10.8 & 92.86 & 91.90 & 89.05 & 95.71 \\
15 & 11.0 & 82.86 & 86.19 & 85.71 & 93.81 \\
16 & 11.2 & 93.33 & 96.19 & 90.00 & 93.81 \\
17 & 11.4 & 92.86 & 92.86 & 93.33 & 90.95 \\
18 & 11.6 & 97.14 & 90.00 & 88.10 & 96.67 \\
19 & 11.8 & 88.57 & 72.86 & 87.62 & 93.33 \\
20 & 12.0 & 73.81 & 69.05 & 78.57 & 93.33 \\
21 & 12.2 & 91.43 & 90.95 & 95.71 & 84.29 \\
22 & 12.4 & 95.71 & 94.76 & 93.81 & 84.76 \\
23 & 12.6 & 96.67 & 92.86 & 89.05 & 84.76 \\
24 & 12.8 & 88.10 & 89.52 & 90.00 & 85.24 \\
25 & 13.0 & 80.48 & 81.43 & 79.05 & 80.95 \\
26 & 13.2 & 86.67 & 89.52 & 89.05 & 83.81 \\
27 & 13.4 & 94.29 & 93.81 & 95.71 & 85.71 \\
28 & 13.6 & 89.52 & 91.43 & 89.05 & 84.29 \\
29 & 13.8 & 96.19 & 92.86 & 86.19 & 87.14 \\
30 & 14.0 & 70.95 & 70.95 & 81.43 & 79.52 \\
31 & 14.2 & 91.90 & 89.52 & 91.90 & 88.10 \\
32 & 14.4 & 94.29 & 90.48 & 90.95 & 77.14 \\
33 & 14.6 & 92.38 & 90.95 & 91.90 & 79.05 \\
34 & 14.8 & 91.43 & 92.86 & 93.81 & 89.05 \\
35 & 15.0 & 88.10 & 89.05 & 85.71 & 79.52 \\
36 & 15.2 & 86.19 & 85.71 & 90.00 & 79.52 \\
37 & 15.4 & 94.29 & 93.81 & 94.29 & 80.48 \\
38 & 15.6 & 92.86 & 89.05 & 93.81 & 85.24 \\
39 & 15.8 & 90.48 & 87.14 & 91.43 & 84.76 \\
\bottomrule
\end{longtable}}

\subsection{Language-Side Analyses}
\label{app:b-language}

\paragraph{Text encoders and adaptation state.}
Figure~\ref{fig:b-text-pairs} compares paired Frozen/LoRA runs for BERT-base-uncased and EmbeddingGemma-300M~\citep{devlin2019bert,schechtervera2025embeddinggemma,hu2022lora}. A line joins matching MI folds, Speech seeds, SSVEP subject folds, or unseen-class folds. Figure~5d compares independently selected best configurations. In the paired runs, BERT Frozen SSVEP uses random training templates, while the other text-encoder conditions use fixed T0.
\begin{figure}[!htbp]
\centering
\includegraphics[width=\linewidth]{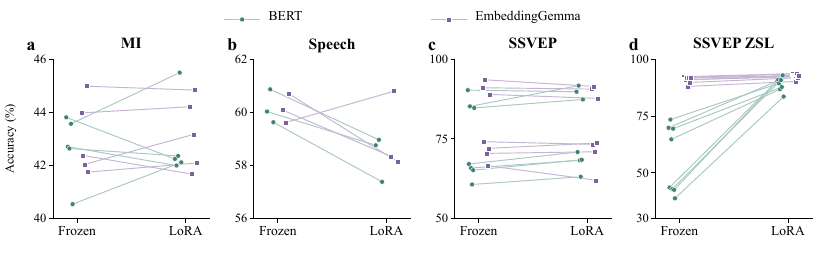}
\caption{Paired Frozen and LoRA comparisons for BERT-base-uncased and EmbeddingGemma-300M. Lines connect the matched run units specified in the text.}
\label{fig:b-text-pairs}
\end{figure}

\paragraph{Held-out short templates.}
Table~\ref{tab:b-templates} gives the five evaluation templates and their five-template ensemble under P0 and P2. P0 uses fixed template 0 during training; P2 randomly samples from five semantic training templates. For the ensemble, candidate-class logits are averaged across the five held-out templates before prediction. MI and Speech SDs are across subjects, the SSVEP closed-set SD is across seven subject folds, and the conditional five-class ZSL SD is across eight class folds. Speech P0 is seed 3407; for Speech P2, three seeds are first averaged within subject.

{\scriptsize\setlength{\tabcolsep}{5pt}
\begin{longtable}{@{}llrr@{}}
\caption{Accuracy (\%, mean $\pm$ SD) under held-out short templates.}\label{tab:b-templates}\\
\toprule
Task & Template & P0 & P2 \\
\midrule
\endfirsthead
\toprule
Task & Template & P0 & P2 \\
\midrule
\endhead
MI closed-set & S0 & 40.20 \(\pm\) 9.66 & 42.65 \(\pm\) 9.59 \\
 & S1 & 38.81 \(\pm\) 9.91 & 42.43 \(\pm\) 9.48 \\
 & S2 & 39.23 \(\pm\) 9.74 & 42.54 \(\pm\) 9.36 \\
 & S3 & 40.19 \(\pm\) 9.71 & 42.52 \(\pm\) 9.45 \\
 & S4 & 35.88 \(\pm\) 10.06 & 42.32 \(\pm\) 9.47 \\
 & Five-template ensemble & 38.94 \(\pm\) 9.96 & 42.51 \(\pm\) 9.40 \\
SSVEP closed-set & S0 & 79.27 \(\pm\) 11.23 & 74.54 \(\pm\) 12.48 \\
 & S1 & 79.32 \(\pm\) 11.54 & 74.55 \(\pm\) 12.39 \\
 & S2 & 79.49 \(\pm\) 11.54 & 74.43 \(\pm\) 12.64 \\
 & S3 & 79.39 \(\pm\) 11.23 & 74.60 \(\pm\) 12.15 \\
 & S4 & 79.33 \(\pm\) 11.36 & 74.65 \(\pm\) 12.52 \\
 & Five-template ensemble & 79.43 \(\pm\) 11.42 & 74.63 \(\pm\) 12.29 \\
SSVEP five-class ZSL & S0 & 90.88 \(\pm\) 1.64 & 89.64 \(\pm\) 1.55 \\
 & S1 & 90.80 \(\pm\) 1.42 & 89.60 \(\pm\) 1.61 \\
 & S2 & 90.80 \(\pm\) 1.54 & 89.88 \(\pm\) 1.57 \\
 & S3 & 91.06 \(\pm\) 1.31 & 90.02 \(\pm\) 1.34 \\
 & S4 & 90.80 \(\pm\) 1.60 & 89.67 \(\pm\) 1.63 \\
 & Five-template ensemble & 90.82 \(\pm\) 1.47 & 89.92 \(\pm\) 1.67 \\
Speech closed-set & S0 & 59.96 \(\pm\) 15.77 & 60.08 \(\pm\) 15.21 \\
 & S1 & 60.29 \(\pm\) 15.84 & 60.16 \(\pm\) 15.09 \\
 & S2 & 59.62 \(\pm\) 15.77 & 59.48 \(\pm\) 15.10 \\
 & S3 & 60.44 \(\pm\) 15.91 & 59.82 \(\pm\) 15.13 \\
 & S4 & 60.19 \(\pm\) 15.85 & 59.89 \(\pm\) 15.03 \\
 & Five-template ensemble & 60.66 \(\pm\) 16.12 & 60.06 \(\pm\) 15.19 \\
\bottomrule
\end{longtable}}

\subsection{Additional Signal and Text Attribution Visualizations}
\label{app:b-attribution}

The figures in this section use grouped Integrated Gradients for the specified class's unscaled cosine-matching score~\citep{sundararajan2017axiomatic}. For each neural input, positive group attributions are divided by that input's largest positive group attribution; an input with no positive group remains zero. The normalized attributions are then averaged across the selected training trials without renormalizing the mean. Word attributions retain their signs during trial averaging, and words with positive mean attribution are shaded within each prompt. Each example uses its own relative colour scale.

For MI, each 5 s trial yields 21 overlapping 1 s windows. Each window's attributions are normalized over its 32 channels and five 200 ms time bins, then placed at its trial time; overlapping bins are averaged by their coverage count before 120 trials per class are combined. Speech uses ten channels and 30 bins of 100 ms over a 3 s trial. SSVEP uses 32 time-domain and 32 frequency-domain channel values for each 1 s input; both branches share the normalization denominator for that input. The selected SSVEP records are six original training trials per frequency, without IR synthesis or training masking.

\begin{figure}[p]
\centering
\includegraphics[width=\linewidth,height=0.68\textheight,keepaspectratio]{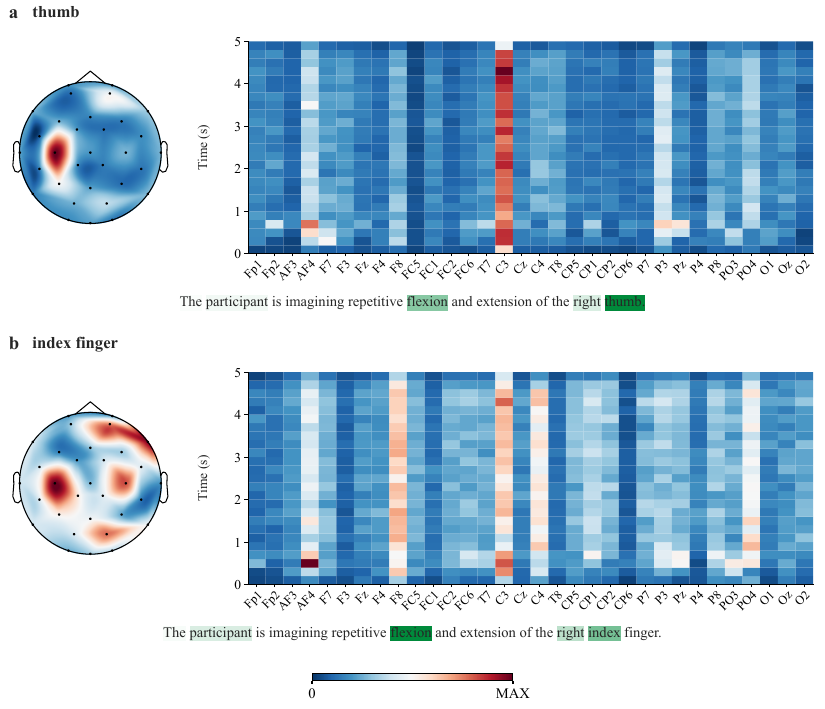}
\caption{MI training attribution for thumb and index, subject S01; 120 trials per class.}
\label{fig:b-att-mi-1}
\end{figure}

\begin{figure}[p]
\centering
\includegraphics[width=\linewidth,height=0.68\textheight,keepaspectratio]{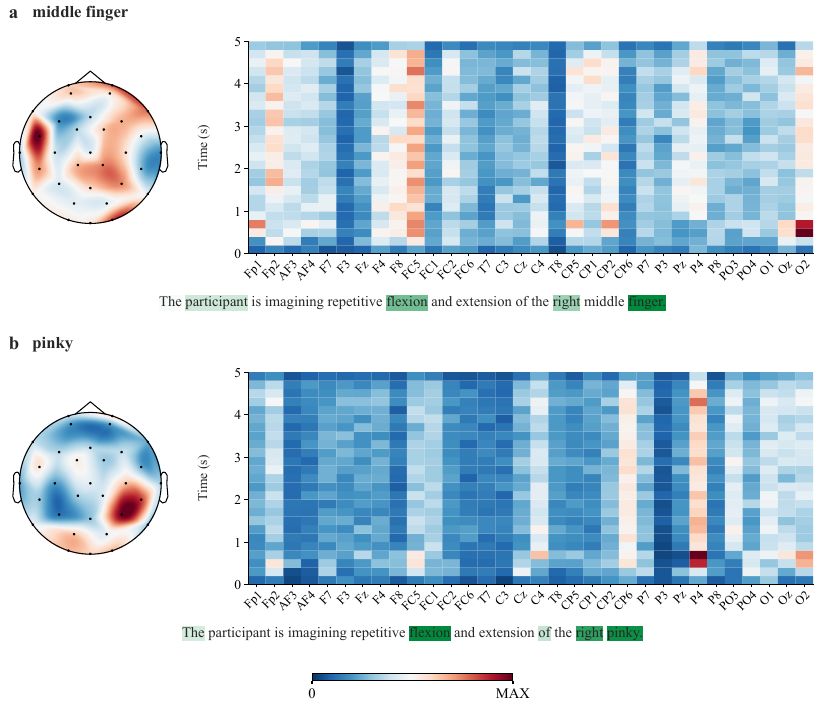}
\caption{MI training attribution for middle and pinky, subject S01; 120 trials per class.}
\label{fig:b-att-mi-2}
\end{figure}

\begin{figure}[p]
\centering
\includegraphics[width=\linewidth,height=0.68\textheight,keepaspectratio]{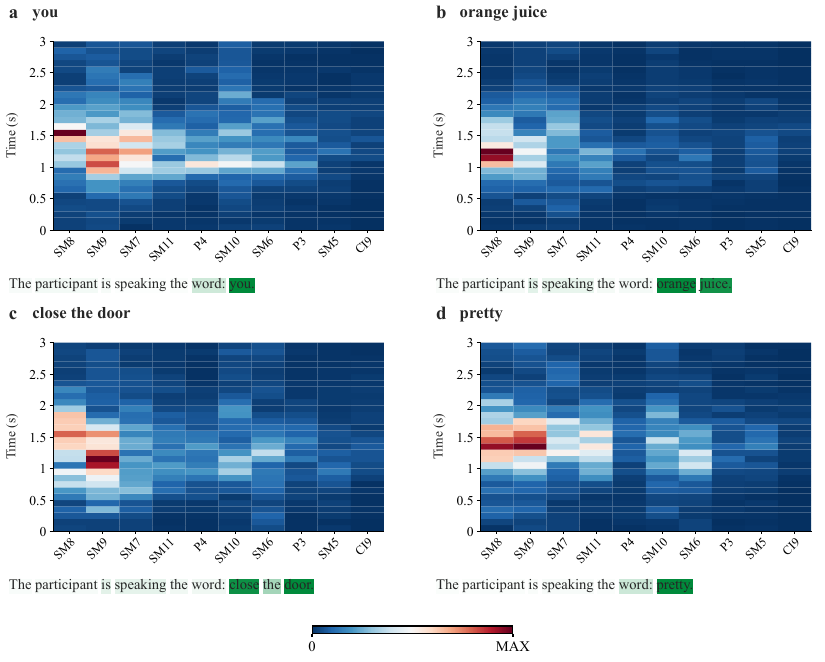}
\caption{Speech training attribution for four word classes, subject S001; class supports are 38, 40, 40, and 37.}
\label{fig:b-att-sp}
\end{figure}

\begin{figure}[p]
\centering
\includegraphics[width=\linewidth,height=0.68\textheight,keepaspectratio]{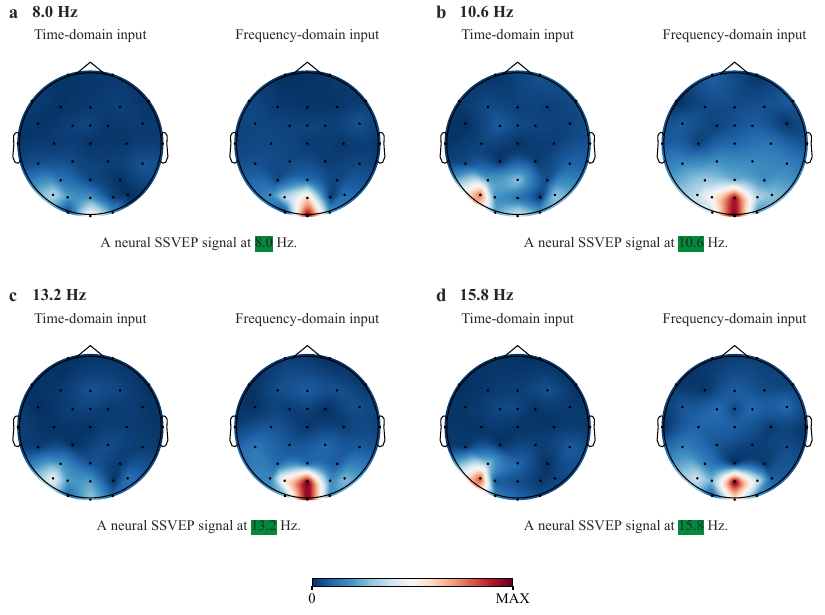}
\caption{SSVEP training attribution at 8.0, 10.6, 13.2, and 15.8 Hz, subject S01; six original trials per class.}
\label{fig:b-att-ss}
\end{figure}

\clearpage

\subsection{Training Objectives, Directional Ablations, and Temporal Aggregation}
\label{app:b-objectives}

\paragraph{Training objective and batch-size grid.}
Figure~\ref{fig:b-loss-batch} compares bidirectional InfoNCE, diagonal SigLIP, and class-aware SigLIP for MI, SSVEP, and Speech~\citep{radford2021learning,zhai2023sigmoid}. Each training-loss point is the minimum epoch-average loss for one fold or seed; accuracy uses the task-specific aggregation unit. The reference curves use the balanced-class formulas in Appendix~\ref{app:theory-losses}.
\begin{figure}[!htbp]
\centering
\includegraphics[width=\linewidth]{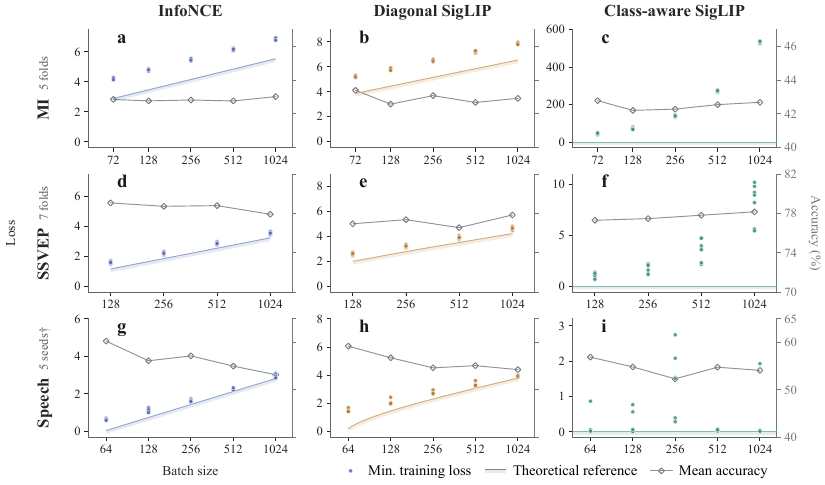}
\caption{Full training-objective by batch-size comparison across MI, SSVEP, and Speech. The left and right axes show training loss and classification accuracy, respectively; reference curves are theoretical loss baselines.}
\label{fig:b-loss-batch}
\end{figure}

\paragraph{InfoNCE direction.}
Figure~8 compares neural-to-text (N2T-only), text-to-neural (T2N-only), and bidirectional InfoNCE for MI EEG-Conformer and Speech Du-IN, including their paired differences and 95\% intervals.

\paragraph{Cumulative fusion over an MI trial.}
Figure~\ref{fig:b-mi-fusion} follows the four-class LaS EEG-Conformer model as successive 1 s windows become available at a 0.2 s stride. The three rules use the same trial predictions; five held-out folds are pooled within each subject before the 21 subjects are averaged equally. With one window at 1.0 s, logit averaging, probability averaging, and majority voting coincide at 33.78\% accuracy. The subject-mean curves generally rise as more overlapping windows are accumulated. At 5.0 s, logit averaging reaches 43.02\%, probability averaging 42.68\%, and majority voting 42.38\%. The logit-averaging endpoint is the LaS EEG-Conformer four-class value in Table~1.
\clearpage
\begin{figure}[!htbp]
\centering
\includegraphics[width=110mm]{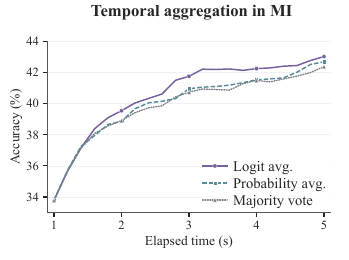}
\caption{Cumulative MI trial accuracy with logit averaging, probability averaging, and majority voting. Each time point uses all windows available up to that time.}
\label{fig:b-mi-fusion}
\end{figure}

\clearpage
\setcounter{subsection}{5}
\setcounter{table}{16}
\subsection{Paired Comparisons and Effect Sizes}
\label{app:b-statistics}

All comparisons use two-sided paired $t$-tests. In Table~1, each task--encoder combination forms a Holm comparison family: the three SSVEP window durations are corrected together, while each MI or Speech encoder has one comparison. In Figure~3, each encoder's family contains five comparisons: the four displayed subsets and the four-class result. Figure~5a uses a two-comparison family within each task (P1 versus P0 and P2 versus P0); Figure~5b uses a separate two-comparison family for unseen-class decoding. Figures~5c and~5d each use a four-comparison family across their task conditions. Figure~8 uses a two-comparison family within each task, comparing N2T-only and T2N-only with bidirectional InfoNCE.

Table~\ref{tab:b-table1-statistics} reports the Table~1 comparisons. Differences are LaS minus CLS, paired by subject. Paired Cohen's $d_z$ is the mean paired difference divided by its sample standard deviation. MI pairs 21 subjects after pooling each subject's five held-out folds; SSVEP pairs 35 subjects; Speech pairs 12 subjects after averaging the five seeds within each subject.

\begin{table}[!htbp]
\centering\small
\caption{Paired statistical comparisons for Table~1. $n$ is the number of subjects, $\Delta$ is the mean accuracy difference in percentage points, and $p_{\mathrm{Holm}}$ is corrected within the comparison families defined above.}
\label{tab:b-table1-statistics}
\begin{tabular*}{\linewidth}{@{\extracolsep{\fill}}llrrrrr@{}}
\toprule
Task & Encoder (window) & $n$ & $\Delta$ (pp) & $t(df)$ & $p_{\mathrm{Holm}}$ & $d_z$\\
\midrule
MI & EEGNet & 21 & $+0.28$ & $0.33\ (20)$ & $0.747607843$ & $+0.07$ \\
MI & DeepConvNet & 21 & $+1.82$ & $2.74\ (20)$ & $0.0125586274$ & $+0.60$ \\
MI & EEG-Conformer & 21 & $+1.36$ & $3.23\ (20)$ & $0.00423557447$ & $+0.70$ \\
\midrule
SSVEP & EEGNet (0.6 s) & 35 & $+0.21$ & $0.35\ (34)$ & $0.727554094$ & $+0.06$ \\
SSVEP & EEGNet (0.8 s) & 35 & $-0.65$ & $-1.04\ (34)$ & $0.613050642$ & $-0.18$ \\
SSVEP & EEGNet (1.0 s) & 35 & $-1.76$ & $-2.02\ (34)$ & $0.154051070$ & $-0.34$ \\
SSVEP & MTSNet (0.6 s) & 35 & $+2.24$ & $3.83\ (34)$ & $0.001597153525$ & $+0.65$ \\
SSVEP & MTSNet (0.8 s) & 35 & $+1.89$ & $2.05\ (34)$ & $0.0953715778$ & $+0.35$ \\
SSVEP & MTSNet (1.0 s) & 35 & $+1.39$ & $1.67\ (34)$ & $0.103955385$ & $+0.28$ \\
SSVEP & SSVEPFormer (0.6 s) & 35 & $+2.48$ & $3.14\ (34)$ & $0.0103493646$ & $+0.53$ \\
SSVEP & SSVEPFormer (0.8 s) & 35 & $+2.21$ & $2.30\ (34)$ & $0.0330535594$ & $+0.39$ \\
SSVEP & SSVEPFormer (1.0 s) & 35 & $+2.27$ & $2.52\ (34)$ & $0.0330535594$ & $+0.43$ \\
\midrule
Speech & EEGNet & 12 & $+8.33$ & $5.95\ (11)$ & $9.59414678\times10^{-5}$ & $+1.72$ \\
Speech & CNN-BiGRU & 12 & $+17.15$ & $9.21\ (11)$ & $1.66659570\times10^{-6}$ & $+2.66$ \\
Speech & EEG-Conformer & 12 & $+6.21$ & $5.04\ (11)$ & $0.000380624783$ & $+1.45$ \\
Speech & Du-IN & 12 & $+5.22$ & $4.54\ (11)$ & $0.000837161674$ & $+1.31$ \\
\bottomrule
\end{tabular*}
\end{table}

\clearpage

\clearpage
\setcounter{table}{0}
\setcounter{figure}{0}
\setcounter{equation}{5}
\clearpage
\newtheorem{proposition}{Proposition}[section]
\section{Theory and Attribution Methods}
\label{app:theory}

\subsection{Neural--Text Alignment: Properties and Correspondence}
\label{app:theory-notation}

\paragraph{Supervision paths and notation.}
Let $\mathcal S=\{1,\ldots,C\}$ be the training classes, $C\ge2$, and let $(X_i,y_i)$ denote a neural trial and its class label. We use the row-vector convention of Section~\ref{sec:methods}: $N_f(X_i)\in\mathbb R^{1\times d_f}$ is the neural encoder output. A task-specific CLS head $g_\psi$ produces $C$ class logits and minimizes
\begin{equation}
Z_{\mathrm{CLS}}(X_i)=g_\psi(N_f(X_i)),\qquad
p_{ic}^{\mathrm{CLS}}=\frac{e^{Z_{\mathrm{CLS}}(X_i)_c}}{\sum_{r=1}^{C}e^{Z_{\mathrm{CLS}}(X_i)_r}},\qquad
\ell_{\mathrm{CLS}}(X_i,y_i)=-\log p_{i,y_i}^{\mathrm{CLS}}.
\label{eq:app-cls-path}
\end{equation}
The head can be linear or nonlinear. Both CLS and LaS use class labels; LaS uses them to pair neural trials with class prompts.

For the complete prompt $P_c$, the text encoder gives $T_{f,c}\in\mathbb R^{1\times d_t}$. Bias-free, right-multiplying projections $W_n\in\mathbb R^{d_f\times d_{\mathrm{proj}}}$ and $W_t\in\mathbb R^{d_t\times d_{\mathrm{proj}}}$ define
\begin{equation}
\begin{aligned}
z_{\mathrm n,\mathrm{proj}}^{(i)}&=\nu\bigl(N_f(X_i)W_n\bigr),&
z_{\mathrm t,\mathrm{proj}}^{(c)}&=\nu(T_{f,c}W_t),&
\nu(v)&=v/\|v\|_2,\\
s_c(X_i)&=z_{\mathrm n,\mathrm{proj}}^{(i)}
\bigl(z_{\mathrm t,\mathrm{proj}}^{(c)}\bigr)^\top,&
a_{ic}&=s_c(X_i)/\tau,&\tau^{-1}&=e^t>0.
\end{aligned}
\label{eq:app-las-path}
\end{equation}
The projected vectors must be finite and nonzero. The normalized text vector is the class prototype, while $s_c$ and $a_{ic}$ denote the unscaled cosine score and temperature-scaled logit, respectively. For a paired batch of $B$ entries, $L_{ij}=a_{i,y_j}$ and
\begin{equation}
\begin{aligned}
\mathcal L_{\mathrm{N2T}}
&=-\frac1B\sum_{i=1}^{B}\log\frac{e^{a_{i,y_i}}}{\sum_{j=1}^{B}e^{a_{i,y_j}}},\\
\mathcal L_{\mathrm{T2N}}
&=-\frac1B\sum_{i=1}^{B}\log\frac{e^{a_{i,y_i}}}{\sum_{j=1}^{B}e^{a_{j,y_i}}},&
\mathcal L_{\mathrm{SA}}&=\tfrac12(\mathcal L_{\mathrm{N2T}}+\mathcal L_{\mathrm{T2N}}).
\end{aligned}
\label{eq:app-paired-infonce}
\end{equation}
This is the bidirectional InfoNCE objective~\citep{radford2021learning}. Neural and text encoder parameters are separate from those of CLS; each paired architecture is trained independently.

The repeated-prompt arguments assume one fixed prompt per class and deterministic text forwards, so all copies of a class text have the same embedding and prototype. Same-class neural trials may have different features. In a paired batch, $\mathcal G_c=\{i:y_i=c\}$, $n_c=|\mathcal G_c|$, $\mathcal S_b=\{c:n_c>0\}$ and $B=\sum_{c\in\mathcal S_b}n_c$. Summing over $\mathcal S_b$ is equivalent to indexing the represented classes consecutively; it avoids treating absent classes as paired entries. When all $C$ training classes occur equally often, $n_c=m$ and $B=mC$.

All logarithms are natural. InfoNCE averages its two directions; SigLIP sums all $B^2$ pairwise terms and divides by $B$. Comparisons of pair repetition or label rules hold the compared representations, temperature and bias fixed. Additional assumptions, including fixed mean scores and full row rank, are stated for the relevant results below.

\paragraph{Effective class weights and linear decisions.}
Define the effective reading weight $q_c\in\mathbb R^{d_f\times1}$ by
\begin{equation}
q_c=W_n\bigl(z_{\mathrm t,\mathrm{proj}}^{(c)}\bigr)^\top,\qquad
\frac{s_c(X_i)}{\tau}=\frac{N_f(X_i)q_c}{\tau\|N_f(X_i)W_n\|_2}.
\label{eq:app-effective-weights}
\end{equation}
\begin{proposition}[Prototype-generated ordering]
\label{prop:app-prototype-ordering}
For fixed parameters and a finite candidate set $\mathcal C$, on the domain $N_f(X_i)W_n\ne0$,
\begin{equation}
\begin{aligned}
\operatorname*{arg\,max}_{c\in\mathcal C}s_c(X_i)
&=\operatorname*{arg\,max}_{c\in\mathcal C}N_f(X_i)q_c,\\
s_c(X_i)=s_{c'}(X_i)
&\Longleftrightarrow z_{\mathrm n,\mathrm{proj}}^{(i)}
\bigl(z_{\mathrm t,\mathrm{proj}}^{(c)}-z_{\mathrm t,\mathrm{proj}}^{(c')}\bigr)^\top=0\\
&\Longleftrightarrow N_f(X_i)(q_c-q_{c'})=0.
\end{aligned}
\label{eq:app-linear-ranking}
\end{equation}
\end{proposition}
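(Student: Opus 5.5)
The plan is to expand the cosine score and show that, for a fixed trial, every candidate score shares the same strictly positive factor. Write $u_i=N_f(X_i)W_n\in\mathbb R^{1\times d_{\mathrm{proj}}}$, which is nonzero on the stated domain, so $z_{\mathrm n,\mathrm{proj}}^{(i)}=u_i/\|u_i\|_2$ is well defined. The prototypes $z_{\mathrm t,\mathrm{proj}}^{(c)}$ are fixed unit vectors, defined because the projected text vectors are assumed finite and nonzero.

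First I would verify the identity in \eqref{eq:app-effective-weights}. By associativity of matrix products,
\[
s_c(X_i)=\frac{u_i\bigl(z_{\mathrm t,\mathrm{proj}}^{(c)}\bigr)^\top}{\|u_i\|_2}
=\frac{N_f(X_i)\,W_n\bigl(z_{\mathrm t,\mathrm{proj}}^{(c)}\bigr)^\top}{\|u_i\|_2}
=\kappa_i\,N_f(X_i)q_c,
\qquad \kappa_i=\|u_i\|_2^{-1}>0.
\]
Dividing by $\tau>0$ gives the displayed form of $a_{ic}$.

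The key observation is that $\kappa_i$ depends on $i$ but not on $c$. The map $x\mapsto\kappa_i x$ is therefore strictly increasing on $\mathbb R$, so it preserves the full ordering of the finite family $\{N_f(X_i)q_c\}_{c\in\mathcal C}$. In particular the set of maximizers is unchanged, which gives the argmax equality as an equality of sets, ties included.

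For the tie characterization I would argue by linearity of the bilinear form:
\[
s_c(X_i)-s_{c'}(X_i)=z_{\mathrm n,\mathrm{proj}}^{(i)}\bigl(z_{\mathrm t,\mathrm{proj}}^{(c)}-z_{\mathrm t,\mathrm{proj}}^{(c')}\bigr)^\top
=\kappa_i\,N_f(X_i)(q_c-q_{c'}).
\]
This yields the first equivalence immediately. Because $\kappa_i\neq0$, the second equivalence follows as well.

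I expect no real obstacle here. The only care needed is to note that the domain condition $u_i\neq0$ is what makes $\kappa_i$ finite and positive. It is also worth stating that the argmax is read as a set, so that ties are handled consistently.
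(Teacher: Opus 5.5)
Your proposal is correct and follows the paper's proof. The paper likewise substitutes the normalized neural projection to get the effective-weight form, notes that the positive denominator $\|N_f(X_i)W_n\|_2$ is shared by every candidate and so preserves all orderings, and obtains the tie relations by subtraction; your reading of the argmax as a set equality and your use of $\kappa_i\neq0$ for the second equivalence simply make explicit what the paper leaves implicit.
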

\emph{Proof.} Substituting the normalized neural projection into the matching score yields Equation~\eqref{eq:app-effective-weights}. Its denominator is a positive scalar shared by every candidate for this input, so it preserves all pairwise orderings. The tie relations follow by subtraction.

If $q_c\ne q_{c'}$, their equal-score set is a hyperplane through the origin in neural-feature space, restricted to the valid projection domain. If the weights coincide, the two classes tie throughout that domain. A pairwise tie becomes a prediction boundary only where neither score is below another candidate's. The encoder from the raw signal to $N_f$ may remain nonlinear. The common denominator cancels for rankings, but softmax probabilities and training losses use the complete logits.

\paragraph{Prototype distances and score variation.}
For fixed neural features, suppose both compared text projections have norm at least $\rho>0$. With $\|W_t\|_2$ denoting the spectral norm,
\begin{equation}
|s_c(X_i)-s_{c'}(X_i)|
\le\bigl\|z_{\mathrm t,\mathrm{proj}}^{(c)}-z_{\mathrm t,\mathrm{proj}}^{(c')}\bigr\|_2
\le\frac{2\|W_t\|_2}{\rho}\|T_{f,c}-T_{f,c'}\|_2.
\label{eq:app-prototype-distance}
\end{equation}
\emph{Proof.} The first inequality follows from the unit norm of the neural vector and Cauchy--Schwarz. Let $v=T_{f,c}W_t$ and $w=T_{f,c'}W_t$. The triangle and reverse-triangle inequalities give
\begin{equation}
\|\nu(v)-\nu(w)\|_2
\le\frac{\|v-w\|_2+|\|w\|_2-\|v\|_2|}{\|v\|_2}
\le\frac{2\|v-w\|_2}{\rho}.
\end{equation}
Finally, $\|(T_{f,c}-T_{f,c'})W_t\|_2\le\|T_{f,c}-T_{f,c'}\|_2\|W_t\|_2$. Coincident prototypes therefore give identical scores. Positive rescaling of $W_t$ rescales both $\rho$ and $\|W_t\|_2$, preserving the prototypes and their ratio in the bound.

\paragraph{N2T decomposition by class.}
Grouping identical class-text copies in the complete $B$-entry denominator gives, for each neural query,
\begin{equation}
\begin{aligned}
\ell_i^{\mathrm{N2T}}
&=-a_{i,y_i}+\log\sum_{c\in\mathcal S_b}n_c e^{a_{ic}}\\
&=\log n_{y_i}+\log\left[1+
\sum_{\substack{c\in\mathcal S_b\\c\ne y_i}}
\frac{n_c}{n_{y_i}}e^{a_{ic}-a_{i,y_i}}\right].
\end{aligned}
\label{eq:app-n2t-counts}
\end{equation}
Thus, multiplicities weight competing classes through $n_c/n_{y_i}$. For a batch containing all $C$ classes $m$ times, the denominator is $m\sum_{c=1}^C e^{a_{ic}}$, so
\begin{equation}
\ell_i^{\mathrm{N2T}}
=\underbrace{-a_{i,y_i}+\log\sum_{c=1}^C e^{a_{ic}}}_{\text{class cross-entropy on matching logits}}
+\log m.
\label{eq:app-n2t-balanced}
\end{equation}
For this fixed query, repetition adds a parameter-independent constant and leaves class-score gradients unchanged. Enlarging the batch with new neural trials also introduces new query terms. Class cross-entropy and this decomposition allow same-class trials to have different correct-class scores.

\paragraph{T2N matching balance along the corresponding text-prototype direction.}
\label{app:theory-score}
Every copy of class text $c$ has denominator $\sum_{j=1}^B e^{a_{jc}}$, while successive diagonal targets select different $i\in\mathcal G_c$. Averaging their negative log probabilities yields
\begin{equation}
\bar a_c=\frac1{n_c}\sum_{i\in\mathcal G_c}a_{ic},\qquad
\bar\ell_c^{\mathrm{T2N}}=-\bar a_c+\log\sum_{j=1}^{B}e^{a_{jc}},\qquad
\mathcal L_{\mathrm{T2N}}=\frac1B\sum_{c\in\mathcal S_b}n_c\bar\ell_c^{\mathrm{T2N}}.
\label{eq:app-t2n-class-mean}
\end{equation}
\begin{proposition}[Within-class matching balance]
\label{prop:app-t2n-balance}
Fix the mean $\bar a_c$ and every other-class score $a_{jc}$, $j\notin\mathcal G_c$. The class loss $\bar\ell_c^{\mathrm{T2N}}$ is minimized when all within-class scores equal $\bar a_c$.
\end{proposition}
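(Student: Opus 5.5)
The plan is to reduce the statement to convexity of the log-sum-exp term, with the linear term frozen by the constraint. From Equation~\eqref{eq:app-t2n-class-mean}, $\bar\ell_c^{\mathrm{T2N}}=-\bar a_c+\log\sum_{j=1}^B e^{a_{jc}}$. I will split the denominator into the other-class part $R_c=\sum_{j\notin\mathcal G_c}e^{a_{jc}}$, which is held fixed by hypothesis, and the within-class part $\sum_{i\in\mathcal G_c}e^{a_{ic}}$. Since $\bar a_c$ is also held fixed, the class loss becomes $-\bar a_c+\log\bigl(R_c+\sum_{i\in\mathcal G_c}e^{a_{ic}}\bigr)$. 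Because $\log$ is strictly increasing, minimizing $\bar\ell_c^{\mathrm{T2N}}$ over the admissible within-class scores is equivalent to minimizing $F(a)=\sum_{i\in\mathcal G_c}e^{a_{ic}}$ subject to $\frac1{n_c}\sum_{i\in\mathcal G_c}a_{ic}=\bar a_c$.

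For that reduced problem I will use Jensen's inequality for the strictly convex function $\exp$:
\[
\frac1{n_c}\sum_{i\in\mathcal G_c}e^{a_{ic}}\ge e^{\bar a_c}.
\]
This gives $F(a)\ge n_ce^{\bar a_c}$, with equality if and only if all $a_{ic}$, $i\in\mathcal G_c$, coincide. Under the mean constraint, that common value must be $\bar a_c$. Substituting back yields the lower bound
\[
\bar\ell_c^{\mathrm{T2N}}\ge-\bar a_c+\log\bigl(R_c+n_ce^{\bar a_c}\bigr),
\]
which is attained exactly at equal within-class scores. For $n_c=1$ the statement is trivial, since the constraint fixes the single score. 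For $n_c>1$, strict convexity makes the minimizer unique.

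There is no substantive obstacle in the algebra. The one point to handle with care is what ``fix'' means. The proposition treats the scores $a_{jc}$ as free variables in $\mathbb R$, whereas realizable scores are constrained: cosines lie in $[-1,1]$ and the $a_{ic}$ come from shared parameters. I will therefore state the result as a minimization over scores in the admissible set. The equal-score configuration lies in that set whenever the score box contains $\bar a_c$, which it does because $\bar a_c$ is an average of admissible values. I will also add a remark that only scores against prototype $c$ are constrained, so feature components orthogonal to that prototype direction are unaffected, matching the main text's caveat that ``other feature directions may still differ.''
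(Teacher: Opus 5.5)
Your proposal is correct and is essentially the paper's proof. The paper also holds the other-class sum fixed and rewrites the class loss as $\log\bigl[\sum_{i\in\mathcal G_c}e^{a_{ic}-\bar a_c}+\sum_{j\notin\mathcal G_c}e^{a_{jc}-\bar a_c}\bigr]$; it then applies strict convexity of the exponential to the zero-sum deviations to get $\sum_{i\in\mathcal G_c}e^{a_{ic}-\bar a_c}\ge n_c$, with equality exactly when all deviations vanish, which is your Jensen step with $e^{\bar a_c}$ moved inside the logarithm.

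One caveat on your admissible-set remark: averaging shows the equal-score point lies in the box $[-1/\tau,1/\tau]$, but it does not show that shared parameters can realize that point while the other scores stay fixed. It is therefore cleaner to state the result over free scores, as the paper does.
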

The repeated text queries have the same denominator. Under these conditions, their average satisfies
\begin{equation}
\begin{aligned}
\bar\ell_c^{\mathrm{T2N}}
&=\log\left[\sum_{i\in\mathcal G_c}e^{a_{ic}-\bar a_c}
+\sum_{j\notin\mathcal G_c}e^{a_{jc}-\bar a_c}\right]\\
&\ge\log\left[n_c+\sum_{j\notin\mathcal G_c}e^{a_{jc}-\bar a_c}\right].
\end{aligned}
\label{eq:app-t2n-jensen}
\end{equation}
\emph{Proof.} The within-class deviations sum to zero. Strict convexity of the exponential gives $\sum_{i\in\mathcal G_c}e^{a_{ic}-\bar a_c}\ge n_c$, with equality exactly when every deviation is zero. The other-class sum remains fixed under the stated conditions, proving that balanced same-class scores minimize this class loss.

For $n_c>1$, this balance provides implicit regularization toward within-class aggregation of matching responses along the corresponding text-prototype direction, under the fixed-score conditions above. Other feature directions may retain trial differences; at $n_c=1$, there are no within-class score differences to balance.

In a two-trial same-class example, one identical text copy targets the first trial and treats the second as an off-diagonal competitor; the other reverses these roles. The shared denominator couples their scores. This class-query coupling and the same-class false negatives arise from the same repeated pairing structure. A fixed prompt permutation preserves a stable common reference within each class, although it changes the assigned descriptions.

\paragraph{Compensating a fixed seen-prompt permutation.}
\label{app:theory-correspondence}
Stack the $C$ seen text embeddings in class order. Let $\Pi\in\mathbb R^{C\times C}$ be a permutation matrix, fixed throughout training, so the assigned embedding matrix is $\Pi T_f^{\mathcal S}$:
\begin{equation}
T_f^{\mathcal S}=\begin{bmatrix}T_{f,1}\\\vdots\\T_{f,C}\end{bmatrix}\in\mathbb R^{C\times d_t}.
\label{eq:app-seen-text-matrix}
\end{equation}
\begin{proposition}[Seen-prompt permutation compensation]
\label{prop:app-permutation-compensation}
Assume $\operatorname{rank}(T_f^{\mathcal S})=C$ and all original projected prototypes are finite and nonzero. Full row rank requires $d_t\ge C$. For any original $W_t$, keeping the neural branch fixed, a projection of the same shape restores all seen prototypes and pairing scores:
\begin{equation}
W_t^\Pi=(\Pi T_f^{\mathcal S})^\dagger T_f^{\mathcal S}W_t,\qquad
\Pi T_f^{\mathcal S}W_t^\Pi=T_f^{\mathcal S}W_t,
\label{eq:app-permutation-compensation}
\end{equation}
\end{proposition}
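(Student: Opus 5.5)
The plan is to verify the displayed identity directly from the Moore--Penrose pseudoinverse, and then transfer it to prototypes and pairing scores. First, $\Pi$ is invertible, so $\operatorname{rank}(\Pi T_f^{\mathcal S})=\operatorname{rank}(T_f^{\mathcal S})=C$. The matrix $A=\Pi T_f^{\mathcal S}\in\mathbb R^{C\times d_t}$ therefore has full row rank, which already forces $d_t\ge C$. For a full-row-rank $A$, the Gram matrix $AA^\top\in\mathbb R^{C\times C}$ is invertible, and the pseudoinverse has the explicit right-inverse form $A^\dagger=A^\top(AA^\top)^{-1}$. Hence $AA^\dagger=I_C$.

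With this, the main identity is one line:
\begin{equation}
\Pi T_f^{\mathcal S}W_t^\Pi=AA^\dagger T_f^{\mathcal S}W_t=T_f^{\mathcal S}W_t .
\end{equation}
Reading off row $c$, the permuted assignment gives class $c$ the embedding of its assigned prompt, namely row $c$ of $\Pi T_f^{\mathcal S}$. The identity says this embedding is mapped by $W_t^\Pi$ to exactly $T_{f,c}W_t$, the original unnormalized projection of class $c$. I would also note that $W_t^\Pi$ has shape $d_t\times d_{\mathrm{proj}}$, the same as $W_t$, since $A^\dagger\in\mathbb R^{d_t\times C}$ and $T_f^{\mathcal S}W_t\in\mathbb R^{C\times d_{\mathrm{proj}}}$.

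Next I would pass to prototypes. By assumption each original row $T_{f,c}W_t$ is finite and nonzero, so $\nu$ is defined on it. Applying $\nu$ rowwise, the permuted-assignment prototype for class $c$ equals $z_{\mathrm t,\mathrm{proj}}^{(c)}$. The neural branch ($\theta$, $W_n$) and the temperature $t$ are held fixed, so every $z_{\mathrm n,\mathrm{proj}}^{(i)}$ is unchanged. It follows from Equation~\eqref{eq:app-las-path} that every $s_c(X_i)$ and $a_{ic}$ is unchanged, and hence every $L_{ij}=a_{i,y_j}$ in any paired batch. Consequently the losses in Equation~\eqref{eq:app-paired-infonce}, as well as closed-set predictions over seen candidates, coincide with those of the original parameters. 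By Proposition~\ref{prop:app-prototype-ordering}, the effective weights $q_c$ are preserved too.

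The only genuinely delicate point is the bookkeeping of which row of $\Pi T_f^{\mathcal S}$ is assigned to which class: the statement's convention is that row $c$ of the assigned matrix is the text used for class $c$. Once that convention is fixed, the result reduces to the right-inverse property, and no claim is made about unseen prompts, whose projections under $W_t^\Pi$ generally differ. I would close with a remark to that effect, since it is what the P1 experiment probes.
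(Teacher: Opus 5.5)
Your proposal is correct and follows essentially the same route as the paper: permutation preserves row rank, so $(\Pi T_f^{\mathcal S})(\Pi T_f^{\mathcal S})^\dagger=I_C$ (which you make explicit via $A^\dagger=A^\top(AA^\top)^{-1}$), substitution gives the identity, and holding the neural branch and temperature fixed then preserves the prototypes, pairing scores, and loss. One minor point: preservation of $q_c$ follows directly from its definition $q_c=W_n\bigl(z_{\mathrm t,\mathrm{proj}}^{(c)}\bigr)^\top$, not from Proposition~\ref{prop:app-prototype-ordering}, though this does not affect the argument.
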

Here $\dagger$ denotes the Moore--Penrose pseudoinverse.

\emph{Proof.} Multiplication by a permutation preserves row rank, hence $(\Pi T_f^{\mathcal S})(\Pi T_f^{\mathcal S})^\dagger=I_C$. Substituting the proposed projection proves the second identity. Consequently, each assigned seen text has the same projected vector and normalized prototype as before. Keeping the neural branch and temperature unchanged preserves every seen pairing score and the data loss.

\paragraph{Seen correspondence and unseen extension.}
Unseen classes $\mathcal U$ are disjoint from $\mathcal S$ and use their true descriptions at prediction. Suppose an unseen embedding lies in the span of the seen embeddings,
\begin{equation}
T_{f,u}=\beta_u^\top T_f^{\mathcal S},\qquad \beta_u\in\mathbb R^C.
\label{eq:app-unseen-combination}
\end{equation}
Under this span assumption, its projected vector is the same linear combination of the projected seen embeddings. The compensation in Equation~\eqref{eq:app-permutation-compensation} restores the \emph{assigned, permuted} seen correspondence, while the neural relationships remain fixed. Extension to the true unseen description also depends on the combination coefficients $\beta_u$. Thus, fitting the seen classes and extending to unseen classes impose distinct correspondence requirements.

Successful transfer also requires compatibility between the neural distinctions available in a task and the relations in its class descriptions. In SSVEP, stimulus frequency and the EEG frequency response provide a physical connection to the numerical frequency in a prompt. The model receives frequency text, without an extra hand-designed attribute coordinate. This physical connection motivates the conditional evaluation.

\paragraph{Candidate competition.}
Unseen-only prediction uses $\mathcal C=\mathcal U$, whereas adding seen candidates uses $\mathcal C=\mathcal S\cup\mathcal U$. With parameters fixed, both candidate sets preserve each neural representation and class score. If the highest seen score exceeds all unseen scores, the joint prediction selects a seen class even when the true class ranks first within $\mathcal U$. Joint prediction therefore also compares the leading unseen score with the seen scores.

\paragraph{Text-representation scope.}
The preceding score bound concerns text embeddings after the chosen text encoder and readout, followed by the learned projection. A contextual BERT [CLS] vector~\citep{devlin2019bert} and an embedding model trained for sentence comparison~\citep{schechtervera2025embeddinggemma} have different training objectives and readouts. Changing the encoder, readout, or prompt changes $T_{f,c}$ and may change the resulting prototypes; Equation~\eqref{eq:app-prototype-distance} bounds score variation for fixed neural features under its stated norm condition. The permutation construction establishes an available parameter solution for the seen classes.

\subsection{Repeated-Prompt Training Objectives}
\label{app:theory-losses}

\paragraph{Pair counts.}
The arguments below use one fixed prompt per class and a deterministic text forward, so same-class text copies have identical embeddings. Diagonal labels mark $i=j$ positive; class-aware labels mark every same-class pair positive. Among off-diagonal entries, the diagonal rule treats $y_i=y_j$ as negative:
\begin{equation}
M_{ij}^{\mathrm{diag}}=\mathbf1[i=j],\qquad
M_{ij}^{\mathrm{class}}=\mathbf1[y_i=y_j],\qquad
N_{\mathrm{FN}}=\sum_{c\in\mathcal S_b}n_c(n_c-1).
\label{eq:app-false-negative-count}
\end{equation}
Each neural row contains one diagonal positive, $n_{y_i}-1$ same-class false negatives and $B-n_{y_i}$ other-class negatives. For $B=mC$, the latter two counts become $m-1$ and $m(C-1)$. Class-aware labels turn all same-class pairs positive; the $m=1$ baseline has no same-class off-diagonal pairs.

\paragraph{InfoNCE class aggregation.}
Grouping the identical text copies in the full paired denominator gives
\begin{equation}
\begin{aligned}
\ell_i^{\mathrm{N2T}}
&=-\log\frac{e^{a_{i,y_i}}}{\sum_{j=1}^{B}e^{a_{i,y_j}}}
=-a_{i,y_i}+\log\sum_{c\in\mathcal S_b}n_c e^{a_{ic}}\\
&=\log n_{y_i}+\log\!\left[1+\sum_{\substack{c\in\mathcal S_b\\c\ne y_i}}
\frac{n_c}{n_{y_i}}e^{a_{ic}-a_{i,y_i}}\right].
\end{aligned}
\label{eq:app-paired-class-aggregation}
\end{equation}
For $n_c=m$ across all $C$ classes, the full denominator and loss reduce to
\begin{equation}
\begin{aligned}
\sum_{j=1}^{B}e^{a_{i,y_j}}&=m\sum_{c=1}^{C}e^{a_{ic}},\\
\ell_i^{\mathrm{N2T}}&=\underbrace{-a_{i,y_i}+\log\sum_{c=1}^{C}e^{a_{ic}}}_{\text{class cross-entropy on matching logits}}+\log m.
\end{aligned}
\label{eq:app-paired-balanced}
\end{equation}
For a fixed neural query, repetition adds a parameter-independent constant and leaves class-score gradients unchanged. New neural samples contribute new queries. In T2N, copies of one class text share a denominator; Equation~\eqref{eq:app-t2n-class-mean} and the conditional Jensen bound in Equation~\eqref{eq:app-t2n-jensen} give their class-averaged response constraint. Bidirectional InfoNCE averages the two full directional losses.

\paragraph{SigLIP label conflicts and gradients.}
With shared bias $b_{\mathrm{sig}}$, define $\xi_{ic}=a_{ic}+b_{\mathrm{sig}}$, $\ell_+(\xi)=\log(1+e^{-\xi})$, and $\ell_-(\xi)=\log(1+e^{\xi})$. Both SigLIP variants~\citep{zhai2023sigmoid} sum over all $B^2$ pairs and divide by $B$:
\begin{equation}
\mathcal L_{\mathrm{Sig}}(M)=\frac1B\sum_{i=1}^{B}\sum_{j=1}^{B}
\left[M_{ij}\ell_+(\xi_{i,y_j})+(1-M_{ij})\ell_-(\xi_{i,y_j})\right].
\label{eq:app-sigmoid-full}
\end{equation}
Grouping the repeated texts gives row losses
\begin{equation}
\begin{aligned}
\ell_i^{\mathrm{diag}}
&=\ell_+(\xi_{i,y_i})+(n_{y_i}-1)\ell_-(\xi_{i,y_i})
+\sum_{\substack{c\in\mathcal S_b\\c\ne y_i}}n_c\ell_-(\xi_{ic}),\\
\ell_i^{\mathrm{class}}
&=n_{y_i}\ell_+(\xi_{i,y_i})
+\sum_{\substack{c\in\mathcal S_b\\c\ne y_i}}n_c\ell_-(\xi_{ic}).
\end{aligned}
\label{eq:app-sigmoid-rows}
\end{equation}
The full losses average these $B$ row losses. Since $1+e^\xi=e^\xi(1+e^{-\xi})$, changing a same-class off-diagonal label changes its loss by $\ell_-(\xi)-\ell_+(\xi)=\xi$. Therefore, at the same parameter state,
\begin{equation}
\mathcal L_{\mathrm{Sig}}^{\mathrm{diag}}-\mathcal L_{\mathrm{Sig}}^{\mathrm{class}}
=\frac1B\sum_{i=1}^B(n_{y_i}-1)\xi_{i,y_i}.
\label{eq:app-sigmoid-difference}
\end{equation}
Let $\sigma_{\mathrm{sig}}(\xi)=(1+e^{-\xi})^{-1}$. Using $\ell_+'=-(1-\sigma_{\mathrm{sig}})$ and $\ell_-'=\sigma_{\mathrm{sig}}$ gives
\begin{equation}
\frac{\partial\ell_i^{\mathrm{diag}}}{\partial\xi_{i,y_i}}
=n_{y_i}\sigma_{\mathrm{sig}}(\xi_{i,y_i})-1,\qquad
\frac{\partial\ell_i^{\mathrm{class}}}{\partial\xi_{i,y_i}}
=-n_{y_i}\bigl[1-\sigma_{\mathrm{sig}}(\xi_{i,y_i})\bigr].
\label{eq:app-sigmoid-gradients}
\end{equation}
The derivatives of the complete loss include a further factor $1/B$. Diagonal supervision asks one positive term to raise the correct-class score while $n_{y_i}-1$ identical-score negative terms lower it. Class-aware supervision removes this conflict and increases the number of positive terms to $n_{y_i}$. The loss difference in Equation~\eqref{eq:app-sigmoid-difference} can be positive, zero, or negative.

The normalization retains a repetition-dependent scale. If the same representation of each class is copied $m$ times, class-aware pair terms multiply by $m^2$ but the denominator only by $m$, so the full loss is $m$ times its one-copy baseline. These pair terms add independently, whereas T2N couples trials through a shared softmax denominator. Removing label conflicts also changes positive-pair weights.

\paragraph{Bidirectional InfoNCE bound.}
The second logarithm in Equation~\eqref{eq:app-n2t-counts} is nonnegative, hence $\ell_i^{\mathrm{N2T}}\ge\log n_{y_i}$. Equation~\eqref{eq:app-t2n-jensen} likewise gives $\bar\ell_c^{\mathrm{T2N}}\ge\log n_c$. Aggregating and averaging directions yields
\begin{equation}
\begin{aligned}
\mathcal L_{\mathrm{N2T}}&\ge\frac1B\sum_{i=1}^B\log n_{y_i}
=\frac1B\sum_{c\in\mathcal S_b}n_c\log n_c,\\
\mathcal L_{\mathrm{T2N}}&\ge\frac1B\sum_{c\in\mathcal S_b}n_c\log n_c,\\
\mathcal L_{\mathrm{SA}}&\ge\frac1B\sum_{c\in\mathcal S_b}n_c\log n_c.
\end{aligned}
\label{eq:app-infonce-floor}
\end{equation}
With $n_c=m$ for all $C$ classes, this becomes $\log m$. Approaching the bound requires other-class exponential terms to become negligible relative to same-class terms; T2N additionally requires balanced same-class scores. Identical texts cannot distinguish the $m$ pairing positions even when classes are separated.

\paragraph{Diagonal SigLIP bound.}
For a class count $n>1$, the same-class contribution to a row is $f_n(\xi)=\ell_+(\xi)+(n-1)\ell_-(\xi)$. Treating the logit temporarily as a free scalar,
\begin{equation}
\begin{aligned}
f_n'(\xi)&=n\sigma_{\mathrm{sig}}(\xi)-1=0
\quad\Longrightarrow\quad \xi^\star=-\log(n-1),\\
f_n''(\xi)&=n\sigma_{\mathrm{sig}}(\xi)\bigl[1-\sigma_{\mathrm{sig}}(\xi)\bigr]>0,\\
f_n(\xi^\star)&=\log n+(n-1)\log\frac{n}{n-1}
=n\log n-(n-1)\log(n-1).
\end{aligned}
\label{eq:app-sigmoid-scalar-minimum}
\end{equation}
The stationary point is therefore the unique minimum. Other-class terms are nonnegative; substituting $n=n_{y_i}$ and averaging the row bounds gives
\begin{equation}
\mathcal L_{\mathrm{Sig}}^{\mathrm{diag}}
\ge\frac1B\sum_{c\in\mathcal S_b}n_c\left[n_c\log n_c-(n_c-1)\log(n_c-1)\right].
\label{eq:app-diagonal-floor}
\end{equation}
For $n_c=m$, this reduces to $m\log m-(m-1)\log(m-1)$. At $n=1$ the same-class contribution has only a positive term and infimum zero, consistent with $0\log0=0$. Approaching the full bound additionally requires all other-class negative losses to vanish and the same-class row minima to be jointly realizable by the shared neural--text model.

\paragraph{Class-aware SigLIP bound.}
All pair losses are nonnegative, so $\mathcal L_{\mathrm{Sig}}^{\mathrm{class}}\ge0$. If positive and negative pairs can be simultaneously separated with unbounded logit magnitudes, positive logits can tend to $+\infty$ and negative logits to $-\infty$, making the loss approach zero. Under this separation condition, class-aware labels remove the positive floor caused by conflicting labels on identical copies.

These bounds use the specified objectives and actual paired-batch counts. Their attainability depends on the separation and shared-logit conditions above. Figure~\ref{fig:main-7} continuously extends the balanced formulas to $m=B/C$; fractional $m$ serves as a plotting reference, while literal balanced batches have integer multiplicities. The empirical points are each run's minimum epoch-average full training loss.

\subsection{Dual-Sided Attribution for Class Matching}
\label{app:attribution-method}

\paragraph{Score and neural-input path.}
For a trained LaS model and a specified class prompt $t$, both attribution sides concern the unscaled cosine matching score
\begin{equation}
s(x,t)=\left\langle
\frac{P_E(f_E(x))}{\|P_E(f_E(x))\|_2},
\frac{P_T(f_T(t))}{\|P_T(f_T(t))\|_2}
\right\rangle .
\label{eq:app-attribution-score}
\end{equation}
Here $x$ denotes the complete preprocessed model input, including all input branches, and $P_E,P_T$ include the respective projections. This is $s_c(X)$ from Section~\ref{sec:methods}, before division by $\tau$. For neural attribution, the class text is fixed. All neural-input components move together from the zero model-input baseline to $x$ along $x(\alpha)=\alpha x$, $0\le\alpha\le1$. The zero baseline denotes zero in the preprocessed input space; it has no direct physiological interpretation.

Partition the input into disjoint groups $g$. For a channel $c$ and a temporal group $p$, grouped Integrated Gradients (IG)~\citep{sundararajan2017axiomatic} gives
\begin{equation}
A_{c,p}=\sum_{u\in p}x_{c,u}\int_0^1
\left.\frac{\partial s(\tilde x,t)}{\partial\tilde x_{c,u}}\right|_{\tilde x=\alpha x}\,d\alpha.
\label{eq:app-eeg-ig}
\end{equation}
The implementation integrates derivatives with respect to a gate shared by the components of each group, using Gauss--Legendre quadrature. If $x(\boldsymbol\gamma)$ multiplies group $g$ by $\gamma_g$, then $\partial s(x(\boldsymbol\gamma),t)/\partial\gamma_g$ already contains the factor $x_{c,u}$ in Equation~\eqref{eq:app-eeg-ig}; the integrated gate derivative is the group attribution itself. Raw $A_g$ retains its sign. Provided the endpoint scores are defined, the summed groups are checked numerically against
\begin{equation}
\sum_g A_g\simeq s(x,t)-s(0,t).
\label{eq:app-eeg-completeness}
\end{equation}
A positive value raises the specified score along this path; a negative value lowers it. These are input-to-score attributions for the trained model.

\paragraph{Input groups and physical coordinates.}
For MI, each one-second input window is divided into five 200-ms groups for each of 32 channels, giving $32\times5$ groups. A five-second trial has 21 overlapping windows at 0.2-s steps. Window-group attributions are returned to the trial's physical timeline using each recorded window start. Let $m_{i,b}$ count the windows covering physical 200-ms bin $b$ in trial $i$. For the signed raw attributions,
\begin{equation}
B_{i,c,b}=\frac{1}{m_{i,b}}\sum_{(w,p)\mapsto b}A_{i,w,c,p},
\qquad H_{c,b}=\frac1N\sum_{i=1}^{N}B_{i,c,b}.
\label{eq:app-mi-overlap}
\end{equation}
The $32\times25$ array $H$ is the signed trial-averaged channel-by-time map. MI's scalp values are its means across the 25 physical time bins.

For Speech, each three-second trial has 10 physical channels and 30 successive 100-ms groups per channel. Each resulting $10\times30$ attribution array is averaged with equal trial weights, without overlapping-window reconstruction. For SSVEP, one input contains 32 time-branch and 32 frequency-branch channel groups. Both branches follow the same $\alpha$ path; trial means are computed separately for the two branches and mapped to two scalp plots. Frequency-branch channel attributions are not assigned a physical time bin. Topomap interpolation changes the display between electrodes, not the channel values passed to it.

\paragraph{Conditional attribution of prompt words.}
For a displayed word $j$, set all its token embeddings to zero and restore only that word along a separate interpolation path. Other words, any text prefix, positional information and the attention mask remain fixed. For MI, the neural anchor is the mean of unit-normalized projected window features within each trial and then across the $N$ selected trials:
\begin{equation}
\bar z_E=\frac1N\sum_{i=1}^{N}\frac1{W_i}\sum_{w=1}^{W_i}\hat z_E(x_{i,w}),
\qquad
I_j=\int_0^1\frac{d}{d\alpha}
\left[\bar z_E\,\hat z_T(e_{\neg j}+\alpha e_j)^\top\right]d\alpha .
\label{eq:app-word-conditional-ig}
\end{equation}
The mean $\bar z_E$ is not normalized again. Speech and SSVEP use equal weights for their selected trials in the corresponding neural anchor. The text forward uses the checkpoint's text encoder and projection, including LoRA when that checkpoint contains it. Subject to numerical integration error, $I_j$ is the full-prompt score minus the score obtained by zeroing word $j$ alone. Each word has its own baseline. Word scores are signed and do not sum to a decomposition of the whole prompt.

\paragraph{From raw attributions to Figure~\ref{fig:main-6}.}
Raw signed arrays, endpoint scores and integration residuals are retained before display. In a raw-contribution view, signed values can be aggregated as in Equation~\eqref{eq:app-mi-overlap} and negative values clipped only when applying a positive-contribution color scale. Such a view retains score-change units. The relative positive-contribution view used in Figure~\ref{fig:main-6} applies the following transform to every complete neural model input $i$ \emph{before} temporal or trial aggregation:
\begin{equation}
R_i(g)=
\begin{cases}
\dfrac{\operatorname{ReLU}(A_i(g))}{\max_h\operatorname{ReLU}(A_i(h))},&\max_h\operatorname{ReLU}(A_i(h))>0,\\[6pt]
0,&\text{otherwise}.
\end{cases}
\label{eq:app-relative-positive}
\end{equation}
The denominator spans all 160 groups of one MI window, all 300 groups of one Speech trial, or the 64 groups across both SSVEP branches of one trial. An input with no positive attribution remains all zero and remains in the average. For MI, the transformed window values follow the overlap and trial averaging of Equation~\eqref{eq:app-mi-overlap}; its topomap averages the resulting 25 time bins. Speech averages transformed trial arrays. SSVEP averages the transformed time and frequency groups separately within each stimulus class. No mean map is normalized again. The resulting dimensionless values show relative positive contributions; Equation~\eqref{eq:app-eeg-completeness} applies to the signed raw attributions.

Word highlighting follows a different display rule. Signed conditional word scores are averaged at the relevant window and trial levels; the displayed intensity is $\max(I_j,0)/M_{\mathrm{word}}$ when the separate prompt-level color limit $M_{\mathrm{word}}$ is positive. If all averaged word scores are nonpositive, all words remain white. The neural maps use the \texttt{RdBu\_r} scale from zero to the indicated MAX: zero and clipped negative values are blue, intermediate positive values approach white, and the largest displayed positive values are red. Raw-contribution plots may share a limit within a specified comparison; relative maps and topomaps may have separate panel limits. Comparisons of color therefore use each panel's MAX, units and processing mode.

\clearpage
\setcounter{table}{0}
\setcounter{figure}{0}
\setcounter{equation}{0}
\renewcommand{\thetable}{D.\arabic{table}}
\renewcommand{\thefigure}{D.\arabic{figure}}
\section{Further Analysis of SSVEP}
\label{app:ssvep-further}

\subsection{SSVEP Responses and Short-Window Evaluation}
\label{app:ssvep-responses}

An SSVEP is an oscillatory visual response whose frequency follows a periodic stimulus~\citep{vialatte2010steady}. The benchmark used here presents 40 targets distinguished by frequency and phase, with 5~s of stimulation per trial~\citep{wang2017benchmark}. Decoding commonly uses a short segment from such a trial. The benchmark study reported a mean visual latency of approximately 137~ms and recommended a shift of approximately 140~ms when extracting analysis windows~\citep{wang2017benchmark}. We use $t_0=0.14$~s after stimulus onset as a common analysis origin; physiological latency may vary among subjects. An initial window (IW) of length $L$ spans $[t_0,t_0+L)$; an onset offset $\delta$ shifts it to $[t_0+\delta,t_0+\delta+L)$. Window length and onset offset are varied separately below.

\subsection{Window Augmentation and Closed-Set Results}
\label{app:ssvep-window-augmentation}

Let $x_{u,c,r}(t)$ denote all channels of trial $r$ for subject $u$ and class $c$. The valid stimulus interval ends at $t_1=5.00$~s. The following operations apply to training signals; validation and test windows are specified for each result.

\paragraph{Random-window augmentation (RW).}
RW samples a subject, class, and trial from the training set, then selects one start time within that trial. Every channel contributes the same contiguous segment of length $L$:
\begin{equation}
\widetilde{x}^{\mathrm{RW}}=x_{u,c,r}[a:a+L],
\qquad a\sim\mathcal{U}[t_0,t_1-L].
\label{eq:app-d-rw}
\end{equation}
The upper bound keeps the window inside the valid interval.

\paragraph{Inter-trial recombination (IR).}
For one subject and class, IR extracts $[t_0,t_0+L)$ from each available training trial. It applies common segment boundaries $0=\tau_0<\tau_1<\cdots<\tau_J=L$, chooses a trial $r_j$ for each segment, and concatenates the selected pieces:
\begin{equation}
\widetilde{x}^{\mathrm{IR}}
=\bigoplus_{j=1}^{J}x_{u,c,r_j}
[t_0+\tau_{j-1}:t_0+\tau_j].
\label{eq:app-d-ir}
\end{equation}
The segments retain their temporal positions while their source trials change. This follows the time-domain segmentation and recombination idea of Lotte~\citep{lotte2015signal}; the subject, class, and initial-window restrictions above specify our SSVEP implementation.

\paragraph{Temporal masking.}
Given a window $x\in\mathbb{R}^{H\times T}$ and mask ratio $\rho$, one contiguous interval of approximately $h=\rho T$ time samples is set to zero in every channel:
\begin{equation}
\widetilde{x}_{j,t}=x_{j,t}M_t,\qquad
M_t=\begin{cases}0,&q\leq t<q+h,\\1,&\text{otherwise},\end{cases}
\quad q\sim\mathcal{U}\{0,\ldots,T-h\}.
\label{eq:app-d-mask}
\end{equation}
The admissible starts ensure that the masked interval stays inside the window. Mask-based EEG augmentation has a precedent in \citet{ding2024maskencoding}; our mask is a shared contiguous time interval. In the RW-trained experiments of Table~\ref{tab:overall-cross-encoder}, training produces approximately 300,000 windows and uses $\rho=0.3$ for the neural CLS and LaS models. FBCCA directly classifies the test signal and has no training stage~\citep{chen2015filter}.

\paragraph{RW-trained closed-set decoding.}
Table~\ref{tab:app-d-table1-rw} reproduces the nine neural comparisons from Table~\ref{tab:overall-cross-encoder}. All models use RW augmentation and temporal masking during training. Evaluation uses one Physical IW per trial at $\delta=0$. The LaS results use EmbeddingGemma-300M. Superscripts reproduce the Holm-corrected paired-test marks in Table~\ref{tab:overall-cross-encoder}.

\begin{table}[!htbp]
\centering
\caption{Forty-class accuracy (\%) after RW training and IW evaluation. $\Delta$ is LaS minus CLS in percentage points.}
\label{tab:app-d-table1-rw}
\small
\begin{tabular}{lcccc}
\toprule
EEG encoder & $L$ & CLS & LaS & $\Delta$ \\
\midrule
EEGNet & 0.6~s & 26.76 & 26.98 & +0.21 \\
 & 0.8~s & 41.07 & 40.42 & $-0.65$ \\
 & 1.0~s & 51.62 & 49.86 & $-1.76$ \\
MTSNet & 0.6~s & 34.32 & 36.56 & +2.24$^{**}$ \\
 & 0.8~s & 53.27 & 55.17 & +1.89 \\
 & 1.0~s & 65.39 & 66.79 & +1.39 \\
SSVEPFormer & 0.6~s & 33.46 & 35.94 & +2.48$^{*}$ \\
 & 0.8~s & 52.30 & 54.51 & +2.21$^{*}$ \\
 & 1.0~s & 65.23 & 67.50 & +2.27$^{*}$ \\
\bottomrule
\end{tabular}
\end{table}

\paragraph{Random test windows and IW.}
Table~\ref{tab:app-d-rw-iw} reports RW-trained LaS (MTSNet) with EmbeddingGemma-300M. The random-window evaluation samples 10,000 test windows per fold; IW evaluation fixes $\delta=0$ on Physical trials. The two columns therefore sample different within-trial positions.

\begin{table}[H]
\centering
\caption{Closed-set accuracy (\%) for RW-trained LaS (MTSNet) under random-window and IW evaluation.}
\label{tab:app-d-rw-iw}
\small
\begin{tabular}{ccc}
\toprule
Window length $L$ & 10,000 random test windows per fold & Physical IW ($\delta=0$) \\
\midrule
0.6~s & 48.1171 & 36.5595 \\
0.8~s & 63.4886 & 55.1667 \\
1.0~s & 74.7371 & 66.7857 \\
\bottomrule
\end{tabular}
\end{table}

\paragraph{IW-trained and IR-augmented closed-set results.}
Tables~\ref{tab:app-d-iw-accuracy} and~\ref{tab:app-d-iw-itr} reproduce the five-duration results in Appendix~\ref{app:b-main-adaptation}. LaS (MTSNet) uses EmbeddingGemma-300M and approximately 300,000 IR training windows. Neural CLS baselines train on IW without IR and retain temporal masking at $\rho=0.3$; FBCCA uses neither training nor augmentation. Every method is evaluated on IW. Estimated ITR includes a 0.5-s gaze-shift allowance per selection.

\begin{table}[!htbp]
\centering
\caption{Forty-class IW accuracy (\%, subject mean) across five window lengths. Values and significance marks reproduce Appendix~\ref{app:b-main-adaptation}.}
\label{tab:app-d-iw-accuracy}
\small
\begin{tabular}{lccccc}
\toprule
Method & 0.6~s & 0.7~s & 0.8~s & 0.9~s & 1.0~s \\
\midrule
FBCCA & 28.25$^{***}$ & 38.20$^{***}$ & 48.54$^{***}$ & 56.90$^{***}$ & 63.75$^{***}$ \\
EEGNet (CLS) & 45.24$^{***}$ & 52.51$^{***}$ & 58.38$^{***}$ & 63.52$^{***}$ & 65.45$^{***}$ \\
SSVEPFormer (CLS) & 55.25 & 63.46 & 70.33 & 75.65 & 77.74 \\
MTSNet (CLS) & 52.06$^{*}$ & 62.61 & 68.61$^{**}$ & 73.98$^{*}$ & 78.01 \\
LaS (MTSNet) & 55.18 & 64.46 & 71.93 & 76.42 & 79.50 \\
\bottomrule
\end{tabular}
\end{table}

\begin{table}[!htbp]
\centering
\caption{Estimated ITR (bits/min, subject mean) for the same IW conditions. Values and significance marks reproduce Appendix~\ref{app:b-main-adaptation}.}
\label{tab:app-d-iw-itr}
\small
\begin{tabular}{lccccc}
\toprule
Method & 0.6~s & 0.7~s & 0.8~s & 0.9~s & 1.0~s \\
\midrule
FBCCA & 41.0$^{***}$ & 60.4$^{***}$ & 80.0$^{***}$ & 94.5$^{***}$ & 104.7$^{***}$ \\
EEGNet (CLS) & 84.6$^{***}$ & 96.8$^{***}$ & 105.1$^{***}$ & 111.2$^{***}$ & 108.6$^{***}$ \\
SSVEPFormer (CLS) & 114.4 & 129.6 & 139.5 & 145.5 & 141.2$^{*}$ \\
MTSNet (CLS) & 105.7$^{*}$ & 126.1 & 134.7$^{**}$ & 140.6$^{*}$ & 142.3 \\
LaS (MTSNet) & 114.8 & 132.4 & 144.8 & 147.6 & 146.4 \\
\bottomrule
\end{tabular}
\end{table}

The superscripts in these two tables reproduce the paired-test marks reported in Appendix~\ref{app:b-main-adaptation}, separately for accuracy and ITR.

\paragraph{IR volume and mask ratio.}
Tables~\ref{tab:app-d-ir-volume} and~\ref{tab:app-d-mask-ratio} use MTSNet with BERT-base-uncased at $L=1.0$~s under IW evaluation. The IR-volume scan holds $\rho=0.3$; the mask scan holds approximately 300,000 IR training windows. The 300-trial condition at $\rho=0.3$ is common to both tables.

\begin{table}[!htbp]
\centering
\caption{IR training volume and IW closed-set accuracy (\%). The no-IR condition uses approximately 6,000 training windows.}
\label{tab:app-d-ir-volume}
\small
\begin{tabular}{cc}
\toprule
IR trials / approximate training windows & IW accuracy \\
\midrule
No IR / 6,000 & 34.7738 \\
50 / 50,000 & 71.1667 \\
100 / 100,000 & 73.3810 \\
200 / 200,000 & 74.6905 \\
300 / 300,000 & 76.3571 \\
\bottomrule
\end{tabular}
\end{table}

\begin{table}[!htbp]
\centering
\caption{Temporal mask ratio and IW closed-set accuracy (\%) with approximately 300,000 IR windows. The epoch budgets differ among mask conditions.}
\label{tab:app-d-mask-ratio}
\small
\begin{tabular}{cc}
\toprule
Mask ratio $\rho$ & IW accuracy \\
\midrule
0.0 & 72.8452 \\
0.1 & 74.8571 \\
0.2 & 76.0238 \\
0.3 & 76.3571 \\
0.4 & 75.3095 \\
0.5 & 73.2500 \\
\bottomrule
\end{tabular}
\end{table}

In the mask scan, the highest recorded accuracy occurs at $\rho=0.3$ (Table~\ref{tab:app-d-mask-ratio}). For the same RW-trained MTSNet LaS model, accuracy is lower on the initial Physical window than on randomly sampled test windows at all three durations (Table~\ref{tab:app-d-rw-iw}). In the IR-volume experiment with BERT-base-uncased, initial-window accuracy rises from 34.7738\% without IR to 76.3571\% with approximately 300,000 recombined windows (Table~\ref{tab:app-d-ir-volume}). At 1.0~s, IR-trained MTSNet LaS with EmbeddingGemma reaches 79.50\%, in the range of the supervised MTSNet result of 78.01\% obtained without IR (Table~\ref{tab:app-d-iw-accuracy}). This dependence on IR augmentation motivates examining how the response changes over the first 0.5~s after the latency-corrected onset.

\clearpage
\subsection{Early-Window Behaviour and Conditional Unseen-Class Decoding}
\label{app:ssvep-early-window}

\paragraph{Window-onset offset.}
At $L=1.0$~s, we move the window start by $\delta\in\{0,0.1,0.2,0.3,0.4,0.5\}$~s relative to $t_0$. Each evaluation uses one window $[0.14+\delta,1.14+\delta)$ per Physical trial, without combining predictions across offsets. Figure~\ref{fig:app-d-offset} shows 40-class accuracy for FBCCA, RW-trained CLS (MTSNet), and RW-trained LaS (MTSNet with EmbeddingGemma-300M). All three curves use the same 8,400 Physical trials at each offset.

\begin{figure}[H]
\centering
\includegraphics[width=0.82\linewidth]{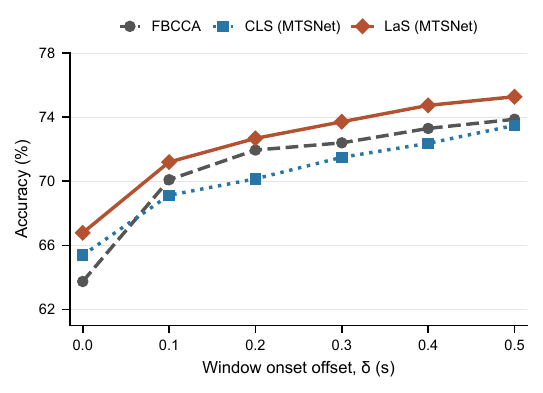}
\caption{Forty-class accuracy of a single 1.0-s Physical window at six offsets from the valid analysis onset $t_0=0.14$~s. The neural models are trained with RW; FBCCA directly classifies each window.}
\label{fig:app-d-offset}
\end{figure}

All three curves rise through the last measured offset, $\delta=0.5$~s.

\paragraph{Conditional unseen-class decoding after IR training.}
Figure~4c of the main paper reports five-class unseen-only results after RW training. Table~\ref{tab:app-d-ir-zsl} reports IR-trained MTSNet results under the same five-candidate evaluation format, organized by text encoder, adaptation state, and window length. IR produces approximately 300,000 training windows with $\rho=0.3$; each result covers eight class-held-out splits with five unseen candidates per split.

\begin{table}[!htbp]
\centering
\caption{Five-class unseen-only ZSL accuracy (\%) after IR training.}
\label{tab:app-d-ir-zsl}
\small
\begin{tabular}{llccc}
\toprule
Text encoder & State & 0.6~s & 0.8~s & 1.0~s \\
\midrule
BERT-base-uncased & Frozen & 27.0200 & 42.3800 & 40.0100 \\
BERT-base-uncased & LoRA & 34.1500 & 59.3900 & 55.8700 \\
EmbeddingGemma-300M & Frozen & 34.3333 & 65.7857 & 65.6071 \\
EmbeddingGemma-300M & LoRA & 32.0119 & 58.0952 & 60.2619 \\
\bottomrule
\end{tabular}
\end{table}

\clearpage
The shared early-window penalty is consistent with a response that remains nonstationary after latency correction. We hypothesize that supervised CLS can combine stimulus frequency with phase and class-specific onset transients, whereas LaS finds these additional early-window cues harder to learn. The gain from IR augmentation for LaS (Table~\ref{tab:app-d-ir-volume}) is consistent with this difference in access to early-window cues. Early nonstationarity may also impair the frequency representation needed for unseen-class transfer. At 0.6, 0.8, and 1.0~s, IR-trained MTSNet with EmbeddingGemma-300M obtains 34.3333\%, 65.7857\%, and 65.6071\% five-class ZSL accuracy (Table~\ref{tab:app-d-ir-zsl}), compared with 80.6786\%, 87.9762\%, and 90.8929\% for its RW-trained counterpart in Figure~4c.

\end{document}